\title{Accommodating Picky Customers: Regret Bound and Exploration Complexity for Multi-Objective Reinforcement Learning}
\author{%
  Jingfeng Wu \\
  Department of Computer Science\\
  Johns Hopkins University\\
  Baltimore, MD 21218 \\
  \texttt{uuujf@jhu.edu} \\
  \And
  Vladimir Braverman \\
  Department of Computer Science \\
  Johns Hopkins University\\
  Baltimore, MD 21218 \\
  \texttt{vova@cs.jhu.edu} \\
  \AND
  Lin F. Yang \\
  Department of Electrical and Computer Engineering \\
  University of California, Los Angeles\\
  Los Angeles, CA 90095 \\
  \texttt{linyang@ee.ucla.edu}
}
\begin{document}

\maketitle

\begin{abstract}
In this paper we consider multi-objective reinforcement learning where the objectives are balanced using preferences. In practice, the preferences are often given in an adversarial manner, e.g., customers can be picky in many applications. We formalize this problem as an episodic learning problem on a Markov decision process, where transitions are unknown and a reward function is the inner product of a preference vector with pre-specified multi-objective reward functions. We consider two settings. In the online setting, the agent receives a (adversarial) preference every episode and proposes policies to interact with the environment. We provide a model-based algorithm that achieves a nearly minimax optimal regret bound $\widetilde{\mathcal{O}}\bigl(\sqrt{\min\{d,S\}\cdot H^2 SAK}\bigr)$, where $d$ is the number of objectives, $S$ is the number of states, $A$ is the number of actions, $H$ is the length of the horizon, and $K$ is the number of episodes. Furthermore, we consider preference-free exploration, i.e., the agent first interacts with the environment without specifying any preference and then is able to accommodate arbitrary preference vector up to $\epsilon$ error. Our proposed algorithm is provably efficient with a nearly optimal trajectory complexity $\widetilde{\mathcal{O}}\bigl({\min\{d,S\}\cdot H^3 SA}/{\epsilon^2}\bigr)$. This result partly resolves an open problem raised by \citet{jin2020reward}.
\end{abstract}

\section{Introduction}
In single-objective reinforcement learning (RL), a \emph{scalar reward} is pre-specified and an agent learns a policy to maximize the long-term cumulative reward~\citep{sutton2018reinforcement}.
However, in many real-world applications, we  need to optimize multiple objectives for the same (unknown) environment, even when these objectives are possibly contradicting~\citep{roijers2013survey}. 
For example, in an autonomous driving application, each passenger may have a different preference of driving styles: some of the passengers prefer a very steady riding experience while other passengers enjoy the fast acceleration of the car.
Therefore, traditional single-objective RL approach may fail to be applied in such scenarios.
One way to tackle this issue is the \emph{multi-objective reinforcement learning} (MORL)~\citep{roijers2013survey,yang2019generalized,natarajan2005dynamic,abels2018dynamic} method,
which models the multiple objectives by a \emph{vectorized reward}, and an additional \emph{preference vector} to specify the relative importance of each objective.
The agent of MORL needs to find policies to optimize the cumulative preference-weighted rewards.

If the preference vector is fixed or drawn from a fixed distribution, MORL is no more challenging than single-objective RL, as we can predict the objective to be optimized and apply (variants of) single-objective RL algorithms (e.g., \citep{azar2017minimax}).
However, more often in practice, the preference vector (under which the weighted objective needs to be optimized) is:
\begin{enumerate}[label=(\roman*),noitemsep,topsep=-1mm,parsep=0mm,partopsep=-1mm,leftmargin=*]
    \item adversarially provided,
    \item or even not available in the learning phase.
\end{enumerate}
Once more taking the autonomous driving application as example: (i) the intelligent system needs to adapt driving configurations to accommodate every customer, even though the next customer can have picky preference that is unpredictable; (ii) when the intelligent system is under development, the future customer cannot be known in advance, nonetheless, when the system is deployed, it has to be capable to accommodate any potential customer who rides the car.
Such MORL examples are common, to name a few beyond the autonomous driving one: medical treatment must take care of every patient even in very rare health conditions; an education system should accommodate every student according to his/her own characteristics; and emergency response systems have to be responsible in all extreme cases.
Due to these exclusive challenges that have not appeared in single-objective RL, new \emph{sample-efficient} algorithms for MORL need to be developed, as well as their \emph{theoretical grounds} need to be established.

In this work, we study \emph{provable} sample-efficient algorithms for MORL that 
resolve the aforementioned 
issues. 
Specifically, we consider MORL on a finite-horizon Markov decision process (MDP) with an unknown transition kernel, $S$ states, $A$ actions, $H$ steps, and $d$ reward functions that represent the $d$ difference objectives.
We investigate MORL problems in two paradigms: (i) \emph{online MORL} (where the preferences are adversarially presented), and (ii) \emph{preference-free exploration} (where the preferences are not available in the learning/exploration phase).
The two settings and our contributions are explained respectively in the following.

\textbf{Setting (i): Online MORL.}
We first consider an online learning problem to capture the challenge that preference vectors could be adversarially provided in MORL.
In the beginning of each episode, the MORL agent is provided (potentially adversarially) with a preference vector, and the agent interacts with the unknown environment to collect data and rewards (that is specified by the provided preference).
The performance of an algorithm is measured by the \emph{regret}, i.e., the difference between the rewards collected by the agent and those would be collected by a theoretically optimal agent (who could use varying policies that adapt to the preferences).
This setting generalizes the classical online single-objective RL problem~\citep{azar2017minimax} (where $d=1$).

\textbf{Contribution (i).}
For online MORL, we provide a provably efficient algorithm with a regret upper bound $\widetilde{\Ocal}\bigl(\sqrt{\min\{d,S\}\cdot H^2 SAK}\bigr)$\footnote{We use $\widetilde{\Ocal}(\cdot)$ to hide (potential) polylogarithmic factors in $\Ocal(\cdot)$, i.e., $\widetilde{\Ocal}(n) := \Ocal (n \log^k n)$ for sufficiently large $n$ and some absolute constant $k>0$.}, where $K$ is the number of episodes for interacting with the environment.
Furthermore, we show an information-theoretic lower bound $\Omega\bigl(\sqrt{\min\{d,S\}\cdot H^2 SAK}\bigr)$ for online MORL.
These bounds together show that, ignoring logarithmic factors, our algorithm resolves the online MORL problems optimally.

\textbf{Setting (ii): Preference-Free Exploration.}
We further consider an unsupervised MORL problem to capture the issue that preferences could be hard to obtain in the training phase.
The MORL agent first interacts with the unknown environment in the absence of preference vectors;
afterwards, the agent is required to use the collected data to compute near-optimal policies for an arbitrarily specified preference vector.
The performance is measured by the \emph{sample complexity}, i.e., the minimum amount of trajectories that an MORL agent needs to collect during exploration in order to be near-optimal during planning.
This setting extends the recent proposed \emph{reward-free exploration} problem~\citep{jin2020reward} to MORL.

\textbf{Contribution (ii).}
For preference-free exploration, we show that a simple variant of the proposed online algorithm can achieve nearly optimal sample complexity.
In particular, the algorithm achieves a sample complexity upper bound $\widetilde{\Ocal}\bigl({\min\{d,S\}\cdot H^3 SA}/{\epsilon^2}\bigr)$ where $\epsilon$ is the tolerance of the planning error; and we also show a sample  complexity lower bound, $\Omega\bigl({\min\{d,S\}\cdot H^2 SA}/{\epsilon^2}\bigr)$, for any algorithm.
These bounds suggest that our algorithm is optimal in terms of $d$, $S$, $A$, $\epsilon$ up to logarithmic factors.
It is also worth noting that our results for preference-free exploration partly answer an open question raised by \citet{jin2020reward}: reward-free RL is easier when the unknown reward functions enjoy low-dimensional representations (as in MORL).

\paragraph{Paper Layout.}
The remaining paper is organized as follows:
the preliminaries are summarized in Section~\ref{sec:preliminary};
then in Section~\ref{sec:online}, we formally introduce the problem of online MORL, our algorithm and its regret upper and lower bounds; 
then in Section~\ref{sec:explore}, we turn to study the preference-free exploration problem in MORL, where we present an exploration algorithm with sample complexity analysis (with both upper bound and lower bound), and compare our results with existing results for related problems;
finally, the related literature is reviewed in Section~\ref{sec:related}
and the paper is concluded by Section~\ref{sec:conclusion}.

\section{Preliminaries}\label{sec:preliminary}
We specify a  finite-horizon Markov decision process (MDP) by a tuple of $\bracket{\Scal, \Acal, H, \Pbb, \rB, \Wcal}$.
$\Scal$ is a finite \emph{state} set where $\abs{\Scal} = S$.
$\Acal$ is a finite \emph{action} set where $\abs{\Acal} = A$.
$H$ is the length of the horizon.
$\Pbb(\cdot \mid x,a)$ is a \emph{stationary, unknown transition probability} to a new state for taking action $a$ at state $x$.
$\rB  = \set{\rB_1, \dots, \rB_H}$, where
$\rB_h:\Scal\times \Acal \to [0,1]^d$ represents a $d$-dimensional \emph{vector rewards function} that captures the $d$ objectives\footnote{For the sake of presentation, we discuss bounded deterministic reward functions in this work. The techniques can be readily extended to stochastic reward settings.}. 
$\Wcal := \set{w\in [0,1]^d,\ \norm{w}_1 = 1}$ specifies the set of all possible \emph{preference vectors}\footnote{The condition that $w\in[0,1]^d$ is only assumed for convenience. Our results naturally generalize to preference vectors that are entry-wisely bounded by absolute constants.},
where each preference vector $w\in\Wcal$ induces a scalar reward function by\footnote{The linear scalarization method can be generalized. See more discussions in Section \ref{sec:regret-bound}, Remark \ref{rmk:scalarization}.}
\(
r_h (\cdot, \cdot) = \abracket{w, \rB_h(\cdot, \cdot)}
\)
for $h=1,\dots, H$.
A \emph{policy} is represented by $\pi := \set{\pi_1,\dots, \pi_H }$, where each $\pi_h(\cdot)$ maps a state to a distribution over the action set.
Fixing a policy $\pi$, we will consider the following generalized \emph{$Q$-value function} and generalized \emph{value function}:
\[
    Q^\pi_h (x,a; w) := \Ebb\bigl[\textstyle{\sum}_{j=h}^H \abracket{w, \rB_j(x_j, a_j)} \big| x_h = x, a_h=a\bigr],\quad 
    V^\pi_h (x; w) := Q^\pi_h (x, \pi_h(x); w),
\]
where $x_j \sim \prob{\cdot \mid x_{j-1}, a_{j-1}}$ and $a_j \sim \pi_j(x_j)$ for $j > h$.
Note that compared with the typical $Q$-value function (or the value function) used in single-objective RL, here the generalized $Q$-value function (or the generalized value function) takes the preference vector as an additional input, besides the state-action pair.
Fixing a preference $w\in\Wcal$, the optimal policy under $w$ is defined as
\(
    \pi^*_w := \arg\max_{\pi} V^\pi_1(x_1; w).
\)
For simplicity, we denote 
\begin{equation*}
    Q^*_h(x, a; w) := Q^{\pi^*_w}_h (x,a ; w), \quad
    V^*_h(x; w) := V^{\pi^*_w}_h(x ; w) = \max_a Q^{*}_h (x, a ; w).
\end{equation*}
The following abbreviation is adopted for simplicity:
\[\eval{\Pbb V^\pi_h}_{x,a,w} := \textstyle{\sum}_{y\in\Scal} \Pbb(y| x,a) V^\pi_h(y;w), \]
and similar abbreviations will be adopted for variants of probability transition (e.g., the empirical transition probability) and variants of value function (e.g., the estimated value function).
Finally, we remark that the following Bellman equations hold for the generalized $Q$-value functions
\begin{equation*}
    \begin{cases}
    Q^\pi_h(x,a;w) = \abracket{w, \rB_h(x,a)} + \eval{\Pbb V^\pi_{h+1}}_{x,a,w}, \\
        V^{\pi}_h (x; w) = Q^{\pi}_h(x, \pi_h(x); w);   
    \end{cases}
    \begin{cases}
    Q^*_h(x,a;w) = \abracket{w, \rB_h(x,a)} + \eval{\Pbb V^*_{h+1}}_{x,a,w}, \\
        V^*_h(x;w) = \max_{a} Q^*_h(x,a;w).
    \end{cases}
\end{equation*}
With the above preparations, we are ready to discuss our algorithms and theory for online MORL (Section \ref{sec:online}) and preference-free exploration (Section \ref{sec:explore}).

\section{Online MORL}\label{sec:online}

\paragraph{Problem Setups.}
The online setting captures the first difficulty in MORL, where the preference vectors can be adversarially provided to the MORL agent.
Formally, the MORL agent interacts with an unknown environment through Protocol~\ref{alg:mo-protocol}:
at the beginning of the $k$-th episode, an adversary selects a preference vector $w^k$ and reveals it to the agent;
then starting from a fixed initial state\footnote{Without loss of generality, we fix the initial state; otherwise we may as well consider an MDP with an external initial state $x_0$ with zero reward for all actions, and a transition $\Pbb_0 (\cdot \mid x_0, a) = \Pbb_0(\cdot) $ for all action $a$. This is equivalent to our setting by letting the horizon length $H$ be $H+1$.} $x_1$,
the agent draws a trajectory from the environment by recursively taking an action and observing a new state, and collects rewards from the trajectory, where the rewards are scalarized from the vector rewards by the given preference.
The agent's goal is to find the policies that maximize the cumulative rewards.
Its performance will be measured by the following \emph{regret}:
suppose the MORL agent has interacted with the environment through Protocol \ref{alg:mo-protocol} for $K$ episodes, where the provided preferences are $\{w^k\}_{k=1}^K$ and the adopted policies are $\{\pi^k\}_{k=1}^K$ correspondingly,
we consider the regret of the collected rewards (in expectation) competing with the theoretically maximum collected rewards (in expectation):
\begin{equation}\label{eq:regret}
    \regret (K) := \textstyle{\sum}_{k=1}^K V^*_1 (x_1 ; w^k) - V^{\pi^k}_1 (x_1 ; w^k).
\end{equation}
Clearly, the regret \eqref{eq:regret} is always non-negative, and a smaller regret implies a better online performance.
We would like to highlight that the regret \eqref{eq:regret} allows the theoretically optimal agent to adopt varying policies that adapt to the preferences.

\begin{protocol}[thb!]
    \caption{\texttt{Environment Interaction Protocol}}
    \label{alg:mo-protocol}
    \begin{algorithmic}[1]
        \REQUIRE MDP$\bracket{\Scal, \Acal, H, \Pbb, \rB, \Wcal}$ and online agent
        \STATE agent observes $\bracket{\Scal, \Acal, H, \rB, \Wcal}$
        \FOR{episode $k = 1, 2, \dots , K$}
            \STATE agent receives an initial state $x^k_1 = x_1$, and a preference $w^k$ (from an adversary)
            \FOR{step $h = 1, 2, \dots , H$}
                \STATE agent chooses an action $a^k_h$, and collects reward $\abracket{w^k, \rB_h(x^k_h, a^k_h)}$
                \STATE agent transits to a new state $x^k_{h+1}\sim \Pbb(\cdot \mid x^k_h, a^k_h)$
            \ENDFOR
        \ENDFOR
    \end{algorithmic}
\end{protocol}

\begin{wrapfigure}{r}{0.45\textwidth}
\vspace{-0.5cm}
    \centering
    \includegraphics[width=1.0\linewidth]{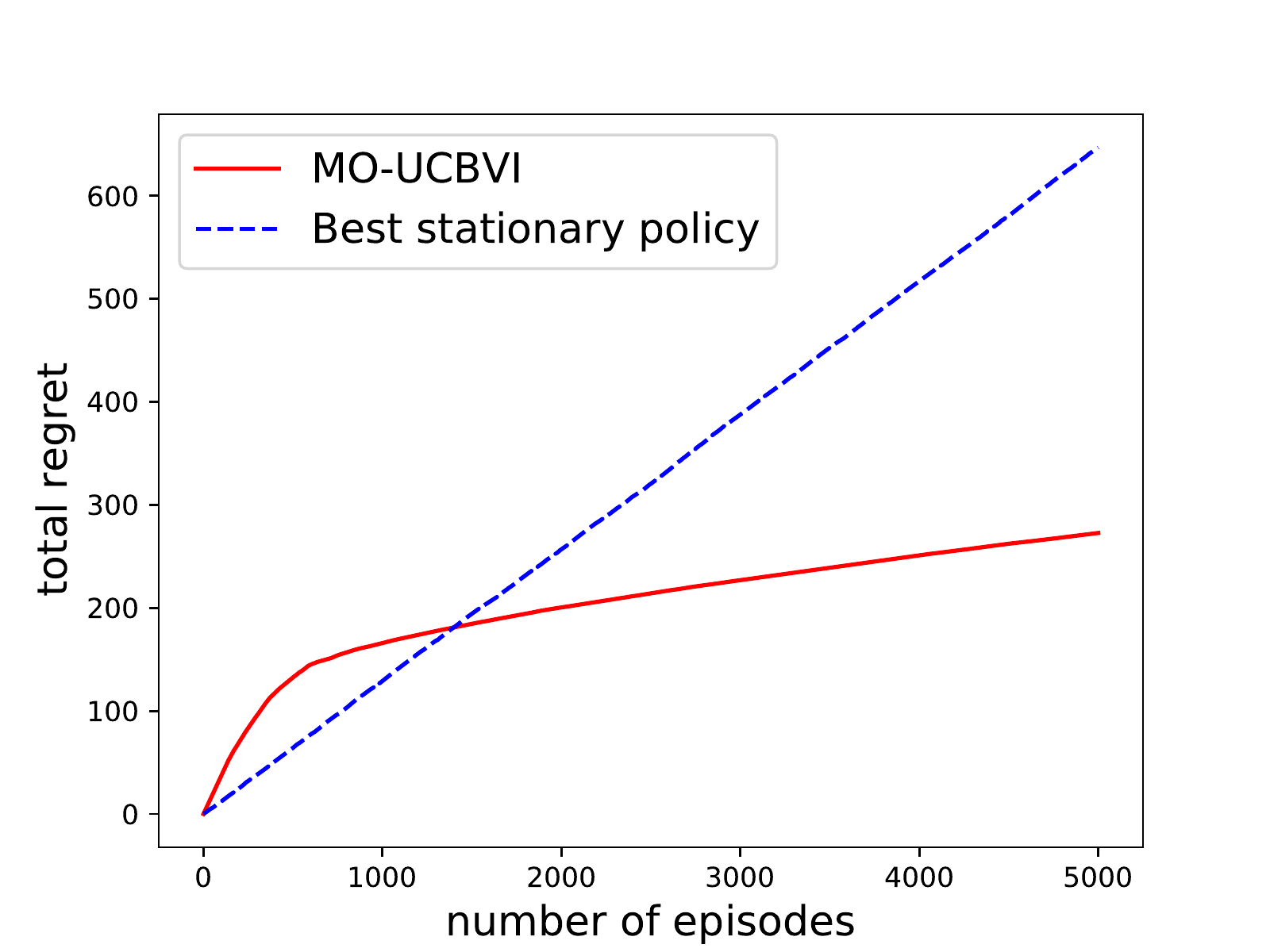}
    \caption{\small A regret comparison of \MOVI vs. the best-in-hindsight policy in a simulated random multi-objective MDP. Note that the best-in-hindsight policy is the optimal policy for single-objective RL. The plots show that the best-in-hindsight policy will incur linear regret in online MORL, and the proposed \MOVI achieves sublinear regret as predicted by Theorem~\ref{thm:regret-informal}. See Appendix~\ref{append-sec:experiment} for details.}
    \label{fig:simulation}
    \vspace{-0.5cm}
\end{wrapfigure}

\paragraph{Connections to Online Single-Objective RL.}
The online regret minimization problems are well investigated in the context of single-objective RL (see, e.g., \citep{azar2017minimax,jin2018q}).
In both online single-objective RL and our studied online MORL, it is typical to assume that the transition probability $\Pbb$ is the only unknown information about the environment since estimating a stochastic reward is relatively easy (see, e.g., \citep{azar2017minimax,jin2018q}).
In particular, single-objective RL is a special case of MORL in the online setting, where the preference is fixed during the entire learning process (i.e., $w^k := w$ for all $k$) --- therefore an online MORL algorithm naturally applies to single-objective RL.
However, the reverse is not true as that in MORL the preference vectors can change over time and are potentially adversarial.

\paragraph{Comparison with Single-Objective Stochastic and Adversarial Reward Setting.} 
The essential difficulty of MORL is further reflected in the regret \eqref{eq:regret}.
Specifically, the regret \eqref{eq:regret} compares the performance of the agent's policies to a sequence of \emph{optimal} policy under \emph{each} given preference, which could \emph{vary} over time.
However, in online single-objective RL, either with known/stochastic rewards~\citep{azar2017minimax,jin2018q} or adversarial rewards~\citep{rosenberg2019online,jin2019learning}, the benchmark policy is supposed to be \emph{fixed} over time (the best policy in the hindsight).
This difference suggests that online MORL could be more challenging than online single-objective RL, as a harder performance measurement is adopted.
More specifically, when measured by regret \eqref{eq:regret}, existing algorithms for single-objective RL (e.g., \citep{azar2017minimax}) easily suffer a $\propto\Theta(K)$ regret when the sequence of preferences is adversarially designed; in contrast, we will show an MORL algorithm that experiences at most $\propto\widetilde\Ocal(\sqrt{K})$ regret under any sequence of preferences.
A numerical simulation of this issue is presented in Figure \ref{fig:simulation}.

\subsection{A Sample-Efficient Online Algorithm}

In this part we introduce an algorithm for online MORL, called \emph{multi-objective upper confidence bound value iteration} (\MOVI). A simplified verison of \MOVI is presented as Algorithm~\ref{alg:online}, and a more advanced version (Algorithm \ref{alg:online-Bernstein}) can be found in Appendix \ref{append-sec:proof-regret}.
Our algorithm is inspired by \texttt{UCBVI}~\citep{azar2017minimax} that achieves minimax regret bound in online single-objective RL.

\begin{algorithm}[thb!]
    \caption{\MOVI}
    \label{alg:online}
    \begin{algorithmic}[1]
    \STATE initialize history $\Hcal^0 = \emptyset$
    \FOR{episode $k=1,2,\dots,K$}
        \STATE $N^k(x,a),\ \widehat{\Pbb}^k (y \mid x,a)\leftarrow \EmpiTrans(\Hcal^{k-1})$\label{line:trans}        
        \STATE compute bonus \( b^k(x,a) := c\cdot \sqrt{\frac{\min\set{d,S} H^2 \iota }{N^k(x,a)} },\) where $\iota =  \log (HSAK/\delta)$ and $c$ is a constant \label{line:def-bonus}
        \STATE receive a preference $w^k$        
        \STATE $\{Q^k_h(x,a;w^k)\}_{h=1}^H \leftarrow \UCBQ(\widehat{\Pbb}^{k}, w^k, b^k)$\label{line:ucb}
        \STATE receive initial state $x_1^k = x_1$
        \FOR{step $h=1,2,\dots,H$}
            \STATE take action $a^k_h = \arg\max_a Q^k_h(x^k_h, a ; w^k)$, and obtain a new state $x^{k}_{h+1}$ \label{line:run-policy}
        \ENDFOR 
        \STATE update history $\Hcal^k = \Hcal^{k-1} \cup \{x^k_h, a^k_h\}_{h=1}^H$\label{line:history} 
    \ENDFOR

    \item[]
    
    \STATE \textbf{Function} \EmpiTrans \label{line:empi-trans}
    \STATE \textbf{Require:} history $\Hcal^{k-1}$ 
    \FOR{$(x,a,y) \in \Scal \times \Acal \times \Scal$}
         \STATE \(N^k (x,a,y) := \# \{(x,a,y) \in \Hcal^{k-1} \} \), and \( N^k(x,a) := \sum_{y} N^k (x,a,y) \)
        \IF{$N^k(x,a) > 0$}
        \STATE $\widehat{\Pbb}^k(y\mid x,a) = {N^k(x,a,y)} / {N^k(x,a)}$
        \ELSE 
        \STATE $\widehat{\Pbb}^k(y\mid x,a) = 1/S$
        \ENDIF
    \ENDFOR
    \RETURN $N^k(x,a)$ and $\widehat{\Pbb}^k(y\mid x,a)$

    \item[]

    \STATE \textbf{Function} \UCBQ \label{line:ucbq}
    \STATE \textbf{Require:} empirical transition $\widehat\Pbb^k$, preference $w^k$, and bonus $b^k(x,a)$
        \STATE set $V^k_{H+1}(x ; w^k) = 0$ 
        \FOR{step $h = H, H-1, \dots , 1$}
            \FOR{$(x,a) \in \Scal \times \Acal$}
                \STATE $Q^k_h(x,a ; w^k) = \min\big\{H, \ \abracket{w^k, \rB_h(x, a) } + b^k(x,a) + {\widehat{\Pbb}^k_h V^k_{h+1}}\big|_{x,a, w^k} \big\}$\label{line:add-bonus}
                \STATE $V^k_h(x ; w^k)=\max_{a\in \Acal}Q^k_h(x,a; w^k)$
            \ENDFOR
        \ENDFOR
        \RETURN $\big\{Q^k_h(x,a; w^k)\big\}_{h=1}^H$
    \end{algorithmic}
\end{algorithm}

In Algorithm~\ref{alg:online}, the agent interacts with the environment according to Protocol~\ref{alg:mo-protocol}, and use an \emph{optimistic policy} to explore the unknown environment and collect rewards.
The optimistic policy is a greedy policy with respect to an optimistic estimation to the value function, {\UCBQ} (lines~\ref{line:ucb} and \ref{line:run-policy}).
Specifically, {\UCBQ} is constructed to maximize the cumulative reward, which is scalarized by the current preference, through dynamic programming over an empirical transition probability (line \ref{line:ucbq}).
The empirical transition probability is inferred from the data collected so far (lines~\ref{line:trans} and \ref{line:empi-trans}), which might not be accurate if a state-action pair has not yet been visited for sufficient times.
To mitigate this inaccuracy, \UCBQ utilizes an extra \emph{exploration bonus} (lines~\ref{line:def-bonus} and \ref{line:add-bonus}) so that:
(i) \UCBQ never under-estimates the optimal true value function, for \emph{whatever preference vector} (and with high probability);
and (ii) the added bonus \emph{shrinks quickly} enough (as the corresponding state-action pair continuously being visited) so that the over-estimation is under control.
The overall consequence is that: \MOVI explores the environment via a sequence of optimistic policies, in order to collect rewards under a sequence of adversarially provided preferences; since the policies are optimistic for any preference, the incurred regret would not exceed the total amount of added bonus; and since the bonus decays fast, their sum up would be sublinear (with respect to the number of episodes played). 
Therefore \MOVI only suffers a sublinear regret, even when the preferences are adversarially presented. The above intuition is formalized in the next part.


\subsection{Theoretical Analysis}\label{sec:regret-bound}
The next two theorems justify regret upper bounds for \MOVI and a regret lower bound for online MORL problems, respectively.
\begin{thm}[Regret bounds for \MOVI]\label{thm:regret-informal}
    Suppose $K$ is sufficiently large. Then for any sequence of the incoming preferences $\{w^k\}_{k=1}^K$, with probability at least $1-\delta$:
    \begin{itemize}[noitemsep,topsep=-1mm,parsep=0mm,partopsep=-1mm,leftmargin=*]
        \item the regret~\eqref{eq:regret} of \MOVI (see Algorithm~\ref{alg:online}) satisfies
    \[
     \regret (K) \le {\Ocal} \bigl(\sqrt{  \min\{d,S\} \cdot H^3 SA K \log ( {HSAK}/{\delta}}  )\bigr);
    \]
    \item and the regret~\eqref{eq:regret} of a Bernstein-variant \MOVI (see Algorithm~\ref{alg:online-Bernstein} in Appendix \ref{append-sec:proof-regret}) satisfies
    \[
     \regret (K) \le {\Ocal} \bigl(\sqrt{  \min\{d,S\} \cdot H^2 SA K \log^2 ( {HSAK}/{\delta}}  )\bigr).
    \]
    \end{itemize}
\end{thm}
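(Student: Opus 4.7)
The plan is to adapt the \texttt{UCBVI} analysis of \citet{azar2017minimax} from single-objective to multi-objective RL; the one genuinely new ingredient is a uniform-in-$w$ concentration inequality that achieves the $\min\set{d,S}$ factor appearing in the bonus. The starting observation is that for any fixed policy $\pi$, the scalar value $V^\pi_h(\cdot;w)$ is linear in $w\in\Wcal\subset[0,1]^d$, so the family
\[
\mathcal{F}:=\set{V^*_h(\cdot;w):w\in\Wcal,\ h\in[H]}
\]
is the pointwise maximum of $d$-parameter linear functions whose range sits in $[0,H]^S$. This dual parameterization is exactly what delivers the $\min\set{d,S}$ improvement over the naive $d$-only or $S$-only bound.

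First I would establish the optimism event: with probability at least $1-\delta$, for every $k$, every $(x,a,h)$, and every $w\in\Wcal$,
\[
\bigl|(\widehat\Pbb^k(\cdot\mid x,a)-\Pbb(\cdot\mid x,a))^\top V^*_{h+1}(\cdot;w)\bigr|\le b^k(x,a).
\]
To get the $\min\set{d,S}$ factor, I would apply Hoeffding's inequality to each fixed $V$ in an $\epsilon$-net of $\mathcal{F}$ and take the smaller of two nets: an $\epsilon$-net of $\Wcal$ in $\ell_1$ of size $(3/\epsilon)^d$ (using the $H$-Lipschitz dependence of $V^*(\cdot;w)$ on $w$), or an $\epsilon$-net of the range $[0,H]^S$ of size $(3H/\epsilon)^S$. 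Either way $\log|\text{net}|=\Ocal(\min\set{d,S}\cdot\iota)$, which calibrates the bonus $b^k$ as defined. A standard backward induction on $h$ then yields the uniform optimism $V^k_1(x_1;w^k)\ge V^*_1(x_1;w^k)$ for every $k$.

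For the Hoeffding regret I would use the Bellman-style decomposition
\[
V^k_h(x^k_h;w^k)-V^{\pi^k}_h(x^k_h;w^k)\le 2b^k(x^k_h,a^k_h)+(V^k_{h+1}-V^{\pi^k}_{h+1})(x^k_{h+1};w^k)+\xi^k_h,
\]
where $\xi^k_h$ is a martingale-difference term bounded by Azuma-Hoeffding. Unrolling in $h$ and summing over $k$ reduces the regret to $\sum_{k,h}b^k(x^k_h,a^k_h)$, which via the usual pigeonhole bound $\sum_{k,h}1/\sqrt{N^k(x^k_h,a^k_h)}\lesssim\sqrt{HSAK}$ yields the first claim $\widetilde\Ocal(\sqrt{\min\set{d,S}\cdot H^3 SAK})$.

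For the sharpened Bernstein bound I would replace $b^k$ by a data-dependent bonus of the form $\sqrt{\min\set{d,S}\cdot\mathrm{Var}_{\widehat\Pbb^k}(V^k_{h+1}(\cdot;w^k))\cdot\iota/N^k(x,a)}$ plus the standard $\mathrm{poly}(H)/N^k$ correction, obtained by invoking Bernstein's inequality on the same $\min\set{d,S}$-dimensional covering of $\mathcal{F}$. The pay-off comes from the law of total variance, $\sum_{h=1}^H\Ebb^{\pi^k}[\mathrm{Var}_\Pbb(V^*_{h+1}(\cdot;w^k))]\le H^2$, which together with Cauchy-Schwarz saves one factor of $\sqrt{H}$ and gives the $\widetilde\Ocal(\sqrt{\min\set{d,S}\cdot H^2 SAK})$ rate. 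I expect the main obstacle to be exactly this Bernstein step: one must (i) replace the unknown $\mathrm{Var}_\Pbb V^*(\cdot;w)$ by the empirical $\mathrm{Var}_{\widehat\Pbb^k} V^k(\cdot;w)$ uniformly in $w$ with a controllable error, and (ii) ensure that the $\min\set{d,S}$ covering continues to dominate when the concentration switches from range-based to variance-based; both require carefully propagating the preference-uniform structure through a bootstrap-style argument.
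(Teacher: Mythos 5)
Your overall architecture matches the paper's: the $\min\{d,S\}$ factor is obtained exactly as you describe, by applying Hoeffding to fixed functions and union-bounding over the smaller of an $\ell_1$-net of the preference set (size $(3H/\epsilon)^d$, using the $(H-h+1)\|w-w'\|_1$ Lipschitz continuity of $V^*_h(x;w)$ in $w$) and an $\ell_\infty$-net of $[0,H]^S$; optimism, the Bellman recursion, and the pigeonhole sum then give the Hoeffding bound. One technical point you gloss over: in your decomposition $V^k_h-V^{\pi^k}_h\le 2b^k+\cdots$, the term $(\widehat{\Pbb}^k-\Pbb)V^k_{h+1}$ is \emph{not} controlled by $b^k$, because $V^k_{h+1}$ is built from the same data as $\widehat{\Pbb}^k$ and, when $d<S$, is not of the form $V^*(\cdot;w)$, so neither net covers it. The paper splits off $(\widehat{\Pbb}^k-\Pbb)(V^k_{h+1}-V^*_{h+1})$ and bounds it, via a per-entry Bernstein bound on $\widehat{\Pbb}^k(y\mid x,a)-\Pbb(y\mid x,a)$, by $\tfrac1H\,\Pbb(V^k_{h+1}-V^*_{h+1})+\Ocal(H^2S\iota/N^k)$; the $\tfrac1H$ part is absorbed into a $(1+1/H)^H\le e$ blow-up of the recursion and the remainder is a lower-order $H^2S^2A\iota^2$ term. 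This is a required deterministic step on the good event, not a martingale term handled by Azuma.

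The genuine gap is in the Bernstein part. You correctly identify that one must replace $\mathrm{Var}_\Pbb V^*_{h+1}(\cdot;w)$ by a computable empirical quantity uniformly in $w$, but you leave the mechanism as an acknowledged obstacle. The paper's resolution is to run a second, \emph{pessimistic} value iteration $\underline{V}^k$ alongside $V^k$ and to set the bonus proportional to $\|V^k_{h+1}\|_{\widehat{\Pbb}^k}+\|V^k_{h+1}-\underline{V}^k_{h+1}\|_{\widehat{\Pbb}^k}$: by the triangle inequality and the sandwich $\underline{V}^k\le V^*\le V^k$ this dominates $\|V^*_{h+1}\|_{\widehat{\Pbb}^k}$, so optimism survives with the empirical-Bernstein deviation, while the correction term is shown to be lower order by a separate recursion establishing $\sum_{k,h}\Pbb(V^k_{h+1}-\underline{V}^k_{h+1})^2\lesssim H^5S^2A\iota^2$, which after Cauchy--Schwarz contributes only $H^{2.5}S^2A\iota^2$ to the regret. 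The leading term then comes from the law of total variance applied to $\|V^{\pi^k}_{h+1}\|_\Pbb$ together with a self-bounding step $\sum_{k,h}\|V^*_{h+1}-V^{\pi^k}_{h+1}\|^2_\Pbb\le H^2\cdot\regret(K)$. Without some such device, a bonus based on $\mathrm{Var}_{\widehat{\Pbb}^k}V^k_{h+1}$ alone is not provably optimistic (the empirical variance of the optimistic value need not dominate that of $V^*$), so as written the second bullet of the theorem is not established.
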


\begin{thm}[A regret lower bound for MORL]\label{thm:regret-lower-bound-informal}
There exist some absolute constants $c, K_0 > 0$, such that for any $K > K_0$, any MORL algorithm that runs $K$ episodes, there is a set of MOMDPs and a sequence of (necessarily adversarially chosen) preferences vectors such that 
\[ 
     \Ebb[ \regret(K) ] \ge c\cdot\sqrt{ \min\set{d,S}\cdot H^2 SA K }, 
\]
where the expectation is taken with respect to the randomness of choosing MOMDPs and the randomness of the algorithm for collecting dataset.
\end{thm}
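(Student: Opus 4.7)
The plan is to amplify the classical single-objective regret lower bound $\Omega(\sqrt{H^2 SAK})$ by a factor of $\sqrt{m}$, where $m := \min\{d,S\}$. The amplification is achieved by embedding $m$ information-theoretically independent copies of a tabular hard single-objective MDP into a single MOMDP via the $d$-dimensional reward vector, and then deploying an oblivious adversary that cycles uniformly through the standard-basis preferences $\{e_1,\ldots,e_m\}$. Each copy then faces only $\approx K/m$ episodes and contributes $\Omega(\sqrt{H^2 SA \cdot K/m})$ regret, and summing over the $m$ copies yields the target $\Omega(\sqrt{m H^2 SAK})$.

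First I would reduce to $d=m$ by padding any extra reward coordinates with zero. Let $\mathcal{M}_0(\theta)$ be a standard tabular single-objective hard MDP on $(S,A,H)$ whose hidden parameter $\theta$ is encoded entirely in Bernoulli reward biases and whose Bayes regret against a random $\theta$ is $\Omega(\sqrt{H^2SAK})$ (the paper's footnote in Section~\ref{sec:preliminary} explicitly permits stochastic rewards, which is also the standard regime for information-theoretic lower bounds). Then I would build $\mathcal{M}(\theta_1,\ldots,\theta_m)$ by stacking $m$ independent copies of $\mathcal{M}_0$ into the $m$ reward dimensions: the $i$-th coordinate of the vector reward is the stochastic reward signal of $\mathcal{M}_0(\theta_i)$, with $\theta_i$'s drawn i.i.d.\ from the uniform prior; the transition kernel is shared across $i$ and chosen so as to carry no information about any $\theta_i$. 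The adversary draws $w^k$ i.i.d.\ uniformly from $\{e_1,\ldots,e_m\}$, so by Chernoff, $K_i := |\{k:w^k=e_i\}| \ge K/(2m)$ for every $i$ with high probability once $K > K_0$ with $K_0 = \Theta(m\log m)$.

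Fix any $i$ and condition on $K_1,\ldots,K_m$. Because the protocol only exposes the scalar preference-weighted reward $\abracket{w^k,\rB_h}$, episodes with $w^k=e_j$ ($j\neq i$) deliver to the agent only samples from $\mathcal{M}_0(\theta_j)$, which are independent of $\theta_i$ under the product prior. Hence the agent's posterior over $\theta_i$ given the full history equals its posterior given only the $K_i$ preference-$e_i$ episodes, and its regret accumulated during those episodes is, in expectation, $\Omega(\sqrt{H^2 SA K_i})$ by the single-objective lower bound applied to $\mathcal{M}_0(\theta_i)$. Summing over $i$,
\begin{equation*}
    \Ebb[\regret(K)] \;\ge\; \sum_{i=1}^m \Omega\bigl(\sqrt{H^2 SA \cdot K_i}\bigr) \;\ge\; \Omega\bigl(m \cdot \sqrt{H^2 SA K/m}\bigr) \;=\; \Omega\bigl(\sqrt{m \cdot H^2 SA K}\bigr),
\end{equation*}
which is the claimed rate.

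The main obstacle is formalizing the per-dimension reduction in the preceding paragraph: the agent's policy on a preference-$e_i$ episode is adaptive and may depend on every past reward, including rewards observed under other preferences, so one has to verify that the other-dimension samples cannot tighten the agent's Bayes risk on $\theta_i$. This is the point where a standard Le Cam or Fano-type argument on the product posterior $\mathrm{Unif}(\theta_1) \otimes \cdots \otimes \mathrm{Unif}(\theta_m)$ is invoked: since the other-dimension rewards are generated from the independent $\theta_j$'s and the shared transition is uninformative, the posterior on $\theta_i$ given the entire history collapses to the posterior computed from only the $K_i$ preference-$e_i$ episodes, so the classical single-objective lower bound for $\mathcal{M}_0$ applies verbatim to each of the $m$ slices and the summation above is legitimate.
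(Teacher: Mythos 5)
Your high-level plan --- embed $m=\min\{d,S\}$ independent hard instances into the reward coordinates and let an oblivious adversary cycle through the basis preferences --- is appealing, and the posterior-factorization argument you use to decouple the $m$ slices is sound in spirit, but the construction it rests on has two genuine problems. First, there is a model mismatch: in this paper the vector reward function $\rB$ is deterministic and is revealed to the agent at the start of Protocol 1; the only unknown quantity is the transition kernel $\Pbb$ (indeed, the theorem's ``set of MOMDPs'' is, in the paper's own construction, a set induced by varying $\Pbb$ with $\rB$ fixed). A hard family whose hidden parameters $\theta_1,\dots,\theta_m$ live entirely in the reward means is therefore not a valid instance for this theorem --- the agent would simply read them off from $\rB$ --- and proving the bound only for agents that do not know the reward means yields a strictly weaker statement than the one claimed.

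Second, and independently of the model issue, the building block you assume does not exist: a tabular single-objective MDP whose hardness is encoded \emph{entirely} in Bernoulli reward biases, sitting on top of an uninformative transition kernel, cannot have Bayes regret $\Omega(\sqrt{H^2SAK})$. With reward-based hardness the agent collects $H$ independent informative observations per episode (total KL per episode $\asymp H\epsilon^2$) while its per-episode regret is at most $H\epsilon$, which caps the achievable rate at $\Omega(\sqrt{HSAK})$; the extra factor of $H$ in the standard $\Omega(\sqrt{H^2SAK})$ bound comes precisely from transition-based hardness (one informative transition per episode followed by an absorbing state whose value has range $\Theta(H)$). You cannot move the hidden parameters into the transitions, because your $m$ copies must share a single transition kernel that is uninformative about every $\theta_i$, on pain of destroying the independence your decoupling step needs. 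Your route therefore yields at best $\Omega(\sqrt{\min\{d,S\}\cdot HSAK})$, a $\sqrt{H}$ factor short. The paper takes an entirely different route that avoids both problems: it first proves a transition-based lower bound for preference-free exploration (a binary tree routing to $\Theta(S)$ copies of a $d$-outcome transition gadget, with the reward directions compressed via Johnson--Lindenstrauss to obtain the $\min\{d,S\}$ factor), and then reduces online MORL to PFE via a stopping-time argument, using an \emph{adaptive} adversary that at each episode hands the agent the preference on which its current planning error is largest.
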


\begin{rmk}
When $d=1$, MORL recovers the single-objective RL setting, and Theorem \ref{thm:regret-informal} recovers existing nearly minimax regret bounds for single-objective RL \citep{azar2017minimax,zanette2019tighter}.
Moreover, the lower bound in Theorem \ref{thm:regret-lower-bound-informal} implies that our upper bound in Theorem~\ref{thm:regret-informal} is tight ignoring logarithm terms.
Interestingly, the lower bound suggests MORL with $d>2$ is truly harder than single objective RL (corresponding to $d=1$) as the sequence of preferences can be adversarially chosen.
\end{rmk}

\begin{rmk}\label{rmk:scalarization}
Theorem~\ref{thm:regret-informal} (as well as our other theorems) applies to general scalarization methods besides the linear one as adopted by Algorithm~\ref{alg:online} and many other MORL papers~\citep{yang2019generalized,roijers2013survey,natarajan2005dynamic,abels2018dynamic}.
In particular, our results apply to scalarization functions $r_h(\cdot, \cdot) = f(\rB_h(\cdot,\cdot); w)$ that are (1) deterministic, (2) Lipschitz continuous for $w$, and (3) bounded between $[0,1]$ (which can be relaxed). 
This will be clear from proofs in Appendix \ref{append-sec:proof-regret}, where we treat the potentially adversarially given preferences by a covering argument and union bound, and these techniques are not dedicated to linear scalarization function and can be easily extended to more general cases. 
\end{rmk}


The proof of Theorem \ref{thm:regret-informal} leverages standard analysis procedures for single-objective RL~\citep{azar2017minimax,zanette2019tighter}, and a covering argument with an union bound to tackle the challenge of adversarial preferences.
Th rigorous proof is included in Appendix \ref{append-sec:proof-regret}.

Specifying an adversarial process of providing preferences is the key challenge for proving Theorem~\ref{thm:regret-lower-bound-informal}.
To handle this issue, we use reduction techniques and utilize a lower bound that we will show shortly for preference-free exploration problems. We refer the readers to Theorem \ref{thm:exploration-lower-bound-informal} and Appendix \ref{append-sec:regret-lower-bound} for more details.

\section{Preference-Free Exploration}\label{sec:explore}

\paragraph{Problem Setups.}
\emph{Preference-free exploration} (PFE) captures the second difficulty in MORL: the preference vector might not be observable when the agent explores the environment.
Specifically, PFE consists of an exploration phase and a planning phase.
Similarly as in the online setting, the transition probability is hidden from the MORL agent.
In the exploration phase, the agent interacts with the unknown environment to collect samples, however the agent has no information about the preference vectors at this point.
Afterwards PFE switches to the planning phase, where the agent is prohibited to obtain new data, and is required to compute near-optimal policy for any preference-weighted reward functions.
Since this task is no longer in an online fashion, we turn to measure the performance of a PFE algorithm by the minimum number of required trajectories (in the exploration phase) so that the algorithm can behave near-optimally in the planning phase.
This is made formal as follows:
a PFE algorithm is called $(\epsilon,\delta)$-PAC (Probably Approximately Correct),
if 
\begin{equation*}
    \Pbb\bigl\{ \forall w\in\Wcal,\ V^*_1(x_1; w) - V^{\pi_w}_1(x_1;w) \le \epsilon  \bigr  \}
    \ge 1-\delta,
\end{equation*}
where $\pi_w$ is the policy outputted by the PFE algorithm for input preference $w$, then the \emph{sample complexity} of a PFE algorithm is defined by the least amount of trajectories it needs to collect in the exploration phase for being $(\epsilon,\delta)$-PAC in the planning phase.

\paragraph{Connections to Reward-Free Exploration.}
PFE problem is a natural extension to the recent proposed \emph{reward-free exploration} (RFE) problem \citep{jin2020reward,kaufmann2020adaptive,wang2020reward,menard2020fast,zhang2020nearly}. 
Both problems consist of an exploration phase and a planning phase; the difference is in the planning phase: in RFE, the agent needs to be able to compute near-optimal policies for \emph{all reward functions}, while in PFE, the agent only needs to achieve that for \emph{all preference-weighted reward functions}, i.e., the reward functions that can be represented as the inner product of a $d$-dimensional preference vectors and the $d$-dimensional vector rewards functions (i.e., the $d$ objectives in MORL).
A PFE problem reduces to a RFE problem if $d=SA$ such that every reward function can be represented as a preference-weighted reward function.
However, if $d \ll SA$, it is conjectured by \citet{jin2020reward} that PFE can be solved with a much smaller sample complexity than RFE. 
Indeed, in the following part we show an algorithm that improves a $\propto\widetilde\Ocal(S^2)$ dependence for RFE to $\propto \widetilde\Ocal(\min\{d,S\}\cdot S)$ dependence for PFE, in terms of sample complexity.


\subsection{A Sample-Efficient Exploration Algorithm}
We now present a simple variant of \MOVI that is sample-efficient for PFE.
The algorithm is called \emph{preference-free upper confidence bound exploration} (\PFUCB), and is discussed separately as in the exploration phase (Algorithm \ref{alg:exploration}) and in the planning phase (Algorithm \ref{alg:planning}) in below. 

\begin{algorithm}
    \caption{\PFUCB (Exploration)}
    \label{alg:exploration}
    \begin{algorithmic}[1]
        \STATE initialize history $\Hcal^0 = \emptyset$
        \FOR{episode $k=1,2,\dots,K$}
            \STATE $N^k(x,a),\ \widehat{\Pbb}^k (y \mid x,a)\leftarrow \EmpiTrans(\Hcal^{k-1})$
            \STATE compute bonus
            \( c^k(x,a) := \frac{H^2S}{2N^k(x,a)} + 2 b^k(x,a) \) for $b^k(x,a)$ defined in Algorithms \ref{alg:online} or \ref{alg:planning}\label{line:explore-bonus}
            \STATE $\{\overline{Q}^k_h(x,a)\}_{h=1}^H \leftarrow \UCBQ(\widehat{\Pbb}^{k}, w=0, c^k)$\label{line:explore-ucb}
            \STATE receive initial state $x_1^k = x_1$
            \FOR{step $h=1,2,\dots,H$}
                \STATE take action $a^k_h = \arg\max_a \overline{Q}^k_h(x^k_h, a)$, and obtain a new state $x^{k}_{h+1}$
            \ENDFOR 
            \STATE update history $\Hcal^k = \Hcal^{k-1} \cup \{x^k_h, a^k_h\}_{h=1}^H$
        \ENDFOR
    \end{algorithmic}
\end{algorithm}


\begin{algorithm}
    \caption{\PFUCB (Planning)}
    \label{alg:planning}
    \begin{algorithmic}[1]
    \REQUIRE history $\Hcal^K$, preference vector $w$
    \FOR{$k=1,2,\dots,K$}
        \STATE $N^k(x,a),\ \widehat{\Pbb}^k (y | x,a)\leftarrow \EmpiTrans(\Hcal^{k-1})$
        \STATE compute bonus \( b^k(x,a) := c\cdot \sqrt{\frac{\min\set{d,S} H^2 \iota }{N^k(x,a)} }\) where $\iota =  \log (HSAK/\delta)$ and $c$ is a constant
        \STATE $\{Q^k_h(\cdot,\cdot;w)\}_{h=1}^H \leftarrow \UCBQ(\widehat{\Pbb}^{k}, w, b^k)$\label{line:planning-ucb}
        \STATE infer greedy policy $\pi^k_h (x) = \arg\max_a Q^k_h(x,a;w) $
    \ENDFOR
    \RETURN $\pi$ drawn uniformly from $\{\pi^k\}_{k=1}^K$
    \end{algorithmic}
\end{algorithm}

Algorithm~\ref{alg:exploration} presents our PFE algorithm in the exploration phase.
Indeed, Algorithm~\ref{alg:exploration} is a modified \MOVI (Algorithm~\ref{alg:online}) by setting the preference to be zero (Algorithm~\ref{alg:exploration}, line~\ref{line:explore-ucb}), and slightly enlarging the exploration bonus (Algorithm~\ref{alg:exploration}, line~\ref{line:explore-bonus}).
The intention of an increased  exploration bonus will be made clear later when we discuss the planning phase.
With a zero preference vector, the \UCBQ in Algorithm~\ref{alg:exploration} will identify a trajectory along which the cumulative bonus (instead of the cumulative rewards) is maximized (with respect to the empirical transition probability).
Also note that the bonus function (Algorithm~\ref{alg:exploration}, line~\ref{line:explore-bonus}) is negatively correlated with the number of visits to a state-action pair.
Hence the greedy policy with respect to the zero-preference \UCBQ tends to visit the state-actions pairs that are associated with large bonus, i.e., those that have been visited for less times.
In sum, Algorithm \ref{alg:exploration} explores the unknown environment ``uniformly'', without the guidance of preference vectors.

Then Algorithm~\ref{alg:planning} shows our PFE algorithm in the planning phase.
Given any preference vector, Algorithm~\ref{alg:planning} computes a sequence of optimistically estimated value functions based on the data collected from the exploration phase, and then outputs a greedy policy with respect to a randomly drawn optimistic value estimation.
Note that the bonus in Algorithm~\ref{alg:planning} is set as the one in Algorithm \ref{alg:online}, and recall that the bonus in Algorithm \ref{alg:exploration} is an enlarged one.
The relatively large bonus in the exploration phase guarantees that the regret in the planning phase never exceeds that in the exploration phase. 
On the other hand, based on Theorem \ref{thm:regret-informal} for \MOVI (Algorithm \ref{alg:online}), we have already known that the exploration algorithm (Algorithm \ref{alg:exploration}), a modified Algorithm \ref{alg:online}, suffers at most $\widetilde\Ocal(\sqrt{K})$ regret, hence the planning algorithm (Algorithm \ref{alg:planning}) experiences at most $\widetilde\Ocal(1/\sqrt{K})$ error.
The next part rigorously justifies these discussions.

\subsection{Theoretic Analysis}
We first provide Theorem \ref{thm:exploration-informal} to justify the trajectory complexity of Algorithms \ref{alg:exploration} and \ref{alg:planning} in the PFE setting; then we present Theorem \ref{thm:exploration-lower-bound-informal} that gives an information-theoretic lower bound on the trajectory complexity for any PFE algorithm.

\begin{thm}[A trajectory complexity of \PFUCB]\label{thm:exploration-informal}
Suppose $\epsilon>0$ is sufficiently small.
Then for \PFUCB (Algorithm~\ref{alg:exploration}) run for 
    \[ 
    K = \Ocal \bigl( {\min\{d,S\}\cdot H^3 SA \iota} /{\epsilon^2}  \bigr), \quad \text{where}\ \iota := \log ({HSA}/{(\delta\epsilon)}), \]
    episodes, \PFUCB (Algorithm~\ref{alg:planning}) is $(\epsilon, \delta)$-PAC.
\end{thm}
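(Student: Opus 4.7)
The plan is to reduce the PAC guarantee in the planning phase to a regret-style bound on the exploration phase (via Theorem~\ref{thm:regret-informal} applied to the bonus itself), and then average. The argument proceeds in five steps.

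\textbf{Step 1 (Uniform optimism).} I would first show that, with probability at least $1-\delta/2$, the planning value $V^k_1(x_1;w)\ge V^*_1(x_1;w)$ holds \emph{simultaneously} for every $k\in[K]$ and every $w\in\Wcal$. Pointwise optimism for a fixed $w$ is standard from the Hoeffding-type choice of $b^k$ in Algorithm~\ref{alg:planning}. The twist is uniformity over the continuum $\Wcal$: I would run in parallel two covering arguments and take the tighter one. (i) Cover $\Wcal$ by a $1/(HK)$-net of size $(HK)^{d-1}$ and use Lipschitz-continuity of $V^k_1(\cdot;w)$ in $w$, contributing a $\sqrt{d\,\iota}$ factor in $b^k$; (ii) cover the set of realizable value functions in $[0,H]^{\Scal}$, giving a $\sqrt{S\,\iota}$ factor. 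Together these yield the $\sqrt{\min\{d,S\}}$ scaling inside $b^k$.

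\textbf{Step 2 (Per-episode planning error bounded by enlarged bonus).} Given optimism, the planning gap decomposes as
\begin{equation*}
V^*_1(x_1;w)-V^{\pi^k}_1(x_1;w)\le V^k_1(x_1;w)-V^{\pi^k}_1(x_1;w).
\end{equation*}
Telescoping the Bellman equations along the true-dynamics trajectory of $\pi^k$ and using $\widehat\Pbb^k\approx\Pbb$ gives
\begin{equation*}
V^k_1(x_1;w)-V^{\pi^k}_1(x_1;w)\le \Ebb_{\pi^k}\Bigl[\textstyle{\sum}_{h=1}^H\bigl(2b^k(x_h,a_h)+\tfrac{H^2S}{2N^k(x_h,a_h)}\bigr)\Bigr]=\Ebb_{\pi^k}\Bigl[\textstyle{\sum}_{h=1}^H c^k(x_h,a_h)\Bigr],
\end{equation*}
which is exactly the enlarged bonus of Algorithm~\ref{alg:exploration}, line~\ref{line:explore-bonus}. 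This is the reason the two bonuses are paired the way they are.

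\textbf{Step 3 (Exploration optimism dominates planning).} The exploration algorithm runs \UCBQ with zero preference and bonus $c^k$, so the produced value satisfies
\begin{equation*}
\overline V^k_1(x_1)\ge \max_{\pi}\Ebb_{\pi}\Bigl[\textstyle{\sum}_{h=1}^H c^k(x_h,a_h)\Bigr]\ge \Ebb_{\pi^k}\Bigl[\textstyle{\sum}_{h=1}^H c^k(x_h,a_h)\Bigr],
\end{equation*}
so by Step~2 the $k$-th planning error is uniformly bounded (in $w$) by $\overline V^k_1(x_1)$.

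\textbf{Step 4 (Regret-style bound on cumulative exploration value).} Applying the analysis underlying Theorem~\ref{thm:regret-informal} to the exploration phase (where the ``reward'' is the changing bonus $c^k$ and there is no adversarial preference since $w=0$), the difference between $\overline V^k_1(x_1)$ and the realized trajectory bonus is a martingale plus transition-concentration slack, and one obtains
\begin{equation*}
\textstyle{\sum}_{k=1}^K \overline V^k_1(x_1)\;\lesssim\;\textstyle{\sum}_{k=1}^K\textstyle{\sum}_{h=1}^H c^k(x^k_h,a^k_h)+\text{lower order}.
\end{equation*}
A pigeonhole visitation bound controls the right-hand side: the leading $\sqrt{\min\{d,S\}H^2/N^k}$ piece sums via $\sum_{x,a}\sqrt{N^K(x,a)}\le\sqrt{SA\cdot HK}$ to $\widetilde{\Ocal}\bigl(\sqrt{\min\{d,S\}\cdot H^3 SA K}\bigr)$, and the $H^2S/N^k$ piece sums to only $\widetilde{\Ocal}(H^2 S^2 A)$, which is lower order for large $K$.

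\textbf{Step 5 (Averaging and conclusion).} Since \PFUCB{} planning outputs $\pi$ drawn uniformly from $\{\pi^k\}_{k=1}^K$, combining Steps 1--4,
\begin{equation*}
\max_{w\in\Wcal}\bigl(V^*_1(x_1;w)-\Ebb[V^{\pi}_1(x_1;w)]\bigr)\le\tfrac{1}{K}\textstyle{\sum}_{k=1}^K \overline V^k_1(x_1)=\widetilde{\Ocal}\Bigl(\sqrt{\min\{d,S\}\cdot H^3 SA/K}\Bigr),
\end{equation*}
which is at most $\epsilon$ once $K=\Ocal\bigl(\min\{d,S\}\cdot H^3 SA\,\iota/\epsilon^2\bigr)$, proving the $(\epsilon,\delta)$-PAC guarantee.

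\textbf{Main obstacle.} The principal difficulty is the uniform optimism of Step~1: a naive pointwise argument produces a bonus valid only for one preference $w$, whereas the planning algorithm must succeed for every $w\in\Wcal$ simultaneously, and a careless union bound over a net of $\Wcal$ would destroy the $\sqrt{\min\{d,S\}}$ rate. Selecting the right net resolution and Lipschitz modulus (and, in the $d>S$ regime, replacing the net on $\Wcal$ by a net on value functions in $[0,H]^{\Scal}$) is what delivers the tight $\min\{d,S\}$ dependence. The secondary obstacle is the book-keeping that shows the enlarged bonus $c^k$ in exploration exactly dominates the bonus $b^k$ used in planning \emph{together with} the $\ell_1$ transition-estimation slack, which is what aligns Steps~2 and~3.
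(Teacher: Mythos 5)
Your overall architecture (uniform optimism via a $\min\{d,S\}$ covering argument, domination of the planning error by the exploration value, a zero-reward regret bound for the exploration phase, and averaging over the $K$ policies) is the same as the paper's, and Steps 1, 4 and 5 match Lemma~\ref{lem:good-event-prob}, Lemma~\ref{lem:zero-reward-regret} and the final averaging in Theorem~\ref{thm:exploration-formal}. The gap is in Step~3. The middle inequality
$\overline V^k_1(x_1)\ge \max_{\pi}\Ebb_{\pi}\bigl[\sum_{h}c^k(x_h,a_h)\bigr]$,
with $\Ebb_\pi$ taken under the \emph{true} dynamics $\Pbb$, does not follow from the definition of $\overline V^k$: Algorithm~\ref{alg:exploration} computes $\overline V^k$ by running \UCBQ{} on the \emph{empirical} transition $\widehat\Pbb^k$ (and truncates at $H$), so $\overline V^k_1(x_1)$ is the optimal value of the bonus-reward MDP under $\widehat\Pbb^k$, not under $\Pbb$. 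The empirical model can underestimate the probability of reaching high-bonus (rarely visited) states, so nothing generic forces $\overline V^k_1$ to dominate the true-dynamics expectation of the cumulative bonus under $\pi^k$. This is precisely the delicate point: a naive fix would bound $(\widehat\Pbb^k-\Pbb)$ applied to the data-dependent bonus-to-go function by a union bound over all value functions, which reintroduces the $\widetilde\Ocal(S^2)$ dependence the theorem is trying to avoid.

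The paper closes this by never passing through the true dynamics in the comparison. Lemma~\ref{lem:planning-error} proves, by backward induction on $h$, the pointwise inequality $V^k_h(x;w)-V^{\pi^k}_h(x;w)\le (1+1/H)^{H-h+1}\,\overline V^k_h(x)$, where the error term $(\widehat\Pbb^k-\Pbb)(V^*_{h+1}-V^{\pi^k}_{h+1})$ is bounded by the \emph{second} inequality of Lemma~\ref{lem:lower-order-terms}, whose right-hand side is $\frac1H\widehat\Pbb^k(V_2-V_1)+\frac{3H^2S\iota}{N^k}$ --- crucially expressed in terms of $\widehat\Pbb^k$ rather than $\Pbb$ --- so that the recursion closes against $\overline V^k_{h+1}$ (itself defined via $\widehat\Pbb^k$) and the additive slack is exactly absorbed by the enlargement $c^k=2b^k+\Theta(H^2S\iota/N^k)$. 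Your Steps 2--3, which first convert everything to a $\Pbb$-expectation and then try to return to $\overline V^k$, would need an additional optimism argument for the empirical MDP on the data-dependent reward $c^k$; you acknowledge this as a ``secondary obstacle'' but do not supply it, and as written the displayed inequality is the unproven crux of the theorem rather than a book-keeping detail.
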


\begin{thm}[A lower bound for PFE]\label{thm:exploration-lower-bound-informal}
There exist absolute constants $c, \epsilon_0 > 0$, such that for any $0< \epsilon < \epsilon_0$, there exists a set of MOMDPs such that any PFE algorithm that is $(\epsilon, 0.1)$-PAC on them, it needs to collect at least
    \[ 
    K \ge \Omega \bigl( {\min\set{d,S}\cdot H^2 SA} / {\epsilon^2} \bigr)
    \]
    trajectories in expectation (with respect to the randomness of choosing MOMDPs and the exploration algorithm).
\end{thm}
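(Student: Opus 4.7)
The plan is to prove this lower bound by explicit construction of a family of hard MOMDPs combined with a multi-task information-theoretic argument. Setting $D := \min\{d, S\}$, I would reduce the PFE problem on this family to $D$ nominally independent hard single-objective PAC tasks, and then invoke the standard $\Omega(H^2 SA/\epsilon^2)$ PAC lower bound for finite-horizon tabular RL on each of the $D$ sub-tasks.

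Concretely, I would construct an MOMDP family indexed by $\boldsymbol{\theta} = (\theta_1,\dots,\theta_D)$, where each $\theta_i$ independently picks a ``hidden'' state-action pair from a pool of $\Omega(SA)$ candidates and the transition kernel is biased by $\Theta(\epsilon/H)$ at that pair toward a high-reward absorbing state. The vector reward is engineered so that (i) coordinate $i$ is non-zero only on a dedicated ``landmark'' substate that is reachable essentially only through the $\theta_i$-pair, and (ii) the $D$ landmarks are mutually disjoint. This stitches $D$ hard single-objective sub-problems into one MOMDP, and by the landmark/disjointness design the next-state observation at any state-action pair carries information about at most a single $\theta_i$ (the one, if any, whose hidden pair coincides with the visited pair), so the exploration log factorizes into a product likelihood across the $D$ coordinates after conditioning on the high-probability event that the $\theta_i$ are distinct, which holds whenever $D \ll SA$.

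Next, I would apply a multi-hypothesis testing argument (Assouad's lemma or coordinate-wise Fano) to the coordinates of $\boldsymbol{\theta}$. A $(\epsilon, 0.1)$-PAC PFE algorithm must, with high probability jointly over the preferences $e_1,\dots,e_D$, output policies within $\epsilon$ of optimal; by the reward construction, this forces the algorithm to approximately identify every $\theta_i$. Combined with the single-objective $\Omega(H^2 SA/\epsilon^2)$ base bound applied coordinate-wise and the product-likelihood independence across coordinates, the expected number of episodes must satisfy $K \ge \Omega\bigl(D \cdot H^2 SA/\epsilon^2\bigr) = \Omega\bigl(\min\{d,S\} \cdot H^2 SA/\epsilon^2\bigr)$, giving the claimed bound.

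The main obstacle I anticipate is the construction itself. The obvious approach of partitioning the state space into $D$ disjoint blocks only gives sub-problems of effective size $S/D$ and recovers merely $\Omega(H^2 SA/\epsilon^2)$ after summing. To obtain the true $D \cdot SA$ scaling, each $\theta_i$ must range over a pool of size $\Omega(SA)$, so the pools must overlap; ensuring that visits in the overlapping region still carry information about only one $\theta_i$ at a time, so that the sub-problem sample complexities literally add instead of share exploration, is the delicate part and will likely dominate the technical length of the proof.
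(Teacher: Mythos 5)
Your plan factors the target bound as $\min\{d,S\}$ sub-tasks, each of single-objective complexity $\Omega(H^2SA/\epsilon^2)$, indexed by reward coordinates whose hidden pairs range over \emph{overlapping} pools of size $\Omega(SA)$. The step that fails is precisely the one you flag as ``delicate'': with overlapping pools and a perturb-one-state-action-pair construction, the information about different coordinates $\theta_i$ is extracted from the \emph{same physical visits}. If every pool is (essentially) the whole $\Scal\times\Acal$, then visiting each candidate pair $\Theta(H^2/\epsilon^2)$ times identifies the entire set of perturbed pairs, and hence all $D$ coordinates simultaneously, at total cost $\widetilde\Ocal(SAH^2/\epsilon^2)$ episodes --- the coordinate-wise complexities share exploration rather than add. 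A product-likelihood factorization across coordinates does not rescue this: Assouad/Fano lower-bounds the \emph{per-coordinate} discrepancy by the \emph{same} visit counts, so summing over coordinates does not multiply the budget. For additivity you need a resource-exclusivity mechanism (each episode can make progress on only one sub-problem), and nothing in your construction provides one, since the coordinates are distinguished only by which reward entry is nonzero, not by which states an episode can reach.

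The paper factors the product the other way, and this is what makes additivity work. It builds a binary tree with $n=\Theta(S)$ leaves; each episode deterministically commits to exactly one leaf, and each leaf hosts a copy of the \citet{jin2020reward} basic hard instance whose transition into $d$ absorbing states must be learned to accuracy sufficient for \emph{all} preferences, costing $\Omega(dAH^2/\epsilon^2)$ visits to that leaf (Lemma~\ref{lem:basic-hard-instance}). Per-episode exclusivity gives $K=\sum_s K^s \ge \Omega(n\cdot dAH^2/\epsilon^2)$. The remaining difficulty --- a $d$-dimensional preference must be able to force near-optimal behavior at any one of $n\gg d$ leaves --- is solved by a Johnson--Lindenstrauss embedding (Lemma~\ref{lem:jl-embed}) making $A^\top Ae_s$ an approximate indicator of leaf $s$. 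Your proposal contains no analogue of either the exclusivity mechanism or the JL step, and as written the claimed bound $K\ge\Omega(D\cdot H^2SA/\epsilon^2)$ does not follow; you would only obtain $\Omega(H^2SA/\epsilon^2)$.
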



\begin{rmk}
According to Theorems~\ref{thm:exploration-informal} and~\ref{thm:exploration-lower-bound-informal}, the trajectory complexity of \PFUCB is optimal for $d,S,A,\epsilon$ ignoring logarithmic factors, but is an $H$ factor loose compared with the lower bound. This is because the current Algorithm \ref{alg:exploration} utilizes a \emph{preference-independent}, Hoeffding-type bonus since the preference vector is not available during exploration. We leave it as an open problem to further remove this gap about $H$.
\end{rmk}


\textbf{Proof Sketch of Theorem~\ref{thm:exploration-informal}.}
Theorem~\ref{thm:exploration-informal} is obtained in three procedures.
(1) We first observe the total regret incurred by Algorithm \ref{alg:exploration} is $\widetilde{\Ocal}\big(\sqrt{\min\set{d,S} H^3 SA K}\big)$ according to Theorem \ref{thm:regret-informal}.
(2) Then utilizing the enlarged exploration bonus, we show that in each episode, the planning error is at most constant times of the incurred exploration error.
(3) Thus the averaged planning error is at most $\widetilde{\Ocal}\big(\sqrt{\min\set{d,S} H^3 SA / K}\big)$ as claimed.
A complete proof is deferred to Appendix \ref{append-sec:proof-explore}.

Note that the second argument is motivated by \citep{zhang2020task,wang2020reward}. 
However in their original paper, a brute-force union bound over all possible value functions are required to obtain similar effect, due to the limitation of model-free algorithm \citep{zhang2020task} (see Appendix~\ref{append-sec:discuss-zhang} for more details) or linear function approximation \citep{wang2020reward}. This will cause a loose, $\propto\widetilde\Ocal (S^2)$ complexity in the obtained bound.
Different from their approach, we carefully manipulate a lower order term to avoid union bounding all value functions during the second argument.
As a consequence we obtain a near-tight $\propto\widetilde\Ocal (\min\{d,S\}\cdot S)$ dependence in the final bound.
We believe this observation has broader application in the analysis of similar RL problems.

\textbf{Proof Sketch of Theorem~\ref{thm:exploration-lower-bound-informal}.}
We next introduce the idea of constructing the hard instance that witnesses the lower bound in Theorem \ref{thm:exploration-lower-bound-informal}.
A basic ingredient is the hard instance given by \citet{jin2020reward}.
However, this hard instance is invented for RFE, where the corresponding lower bound is $K \ge \Omega\bigl({ H^2 S^2 A}/{\epsilon^2}\bigr)$. Note this lower bound cannot match the upper bound in Theorem \ref{thm:exploration-informal} in terms of $d$ and $S$.
In order to develop a dedicated lower bound for PFE, we utilize Johnson–Lindenstrauss Lemma~\citep{johnson1984extensions} to refine the hard instance in \citep{jin2020reward}, and successfully reduce a factor $S$ to $\min\set{d, S}$ in their lower bound, which gives the result in Theorem \ref{thm:exploration-lower-bound-informal}.
We believe that the idea to refine RL hard instance by Johnson–Lindenstrauss Lemma is of broader interests.
A rigorous proof is deferred to Appendix \ref{append-sec:pfe-lower-bound}.

\paragraph{Application in Reward-Free Exploration.}
By setting $d=SA$ and allowing arbitrary reward functions, PFE problems reduce to RFE problems. Therefore as a side product, Theorem \ref{thm:exploration-informal} implies the following results for RFE problems on stationary or non-stationary MDPs\footnote{Our considered MDP is \emph{stationary} as the transition probability $\Pbb$ is fixed (across different steps). An MDP is called \emph{non-stationary}, if the transition probability varies at different steps, i.e., replacing $\Pbb$ by $\{\Pbb_h\}_{h=1}^H$.}:

\begin{cor}\label{thm:rfe}
Suppose $\epsilon>0$ is sufficiently small.
Consider the reward-free exploration problems on a \emph{stationary MDP}.
Suppose \PFUCB (Algorithm \ref{alg:exploration}) is run for 
    \[ 
    K = \Ocal \bigl( {H^3 S^2 A} \iota / {\epsilon^2} \bigr), \quad \text{where} \ \iota := \log ({HSA}/{(\delta\epsilon)} ),
    \]
    episodes, then \PFUCB (Algorithm~\ref{alg:planning}) is $(\epsilon, \delta)$-PAC.
    Moreover, if the MDP is \emph{non-stationary}, the above bound will be revised to
    \(
    K = \Ocal \bigl( {H^4 S^2 A \iota} / {\epsilon^2} \bigr).
    \)
\end{cor}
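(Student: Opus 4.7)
The plan is to reduce RFE to PFE by encoding any reward function as a preference-weighted vector reward and then invoking Theorem~\ref{thm:exploration-informal}. For the stationary case, I would take $d = SA$ (or $d = SAH$ if one wants to accommodate step-dependent rewards) with the fixed vector reward $\rB_h(x,a) = e_{(x,a)} \in \mathbb{R}^{SA}$, i.e.\ the indicator vector of the state-action pair, identical across $h$ (the $d=SAH$ variant uses $e_{(h,x,a)}$ instead). Any reward function $r_h$ arising in the RFE problem can then be written as $r_h(x,a) = \langle w, \rB_h(x,a)\rangle$ by setting the entries of $w$ to the corresponding reward values. In either choice $\min\{d, S\} = S$, so Theorem~\ref{thm:exploration-informal} directly yields $K = \Ocal\bigl(S \cdot H^3 SA \iota / \epsilon^2\bigr) = \Ocal\bigl(H^3 S^2 A \iota / \epsilon^2\bigr)$. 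The normalization $\|w\|_1 = 1$ built into $\Wcal$ can be dropped in view of the footnote on $\Wcal$ and Remark~\ref{rmk:scalarization}, which already allow preference vectors that are entrywise bounded by an absolute constant.

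For the non-stationary case, the natural reduction is to augment each state $x$ with the step index $h$, producing a stationary MDP with $\tilde S = SH$ states and the same horizon. A black-box application of Theorem~\ref{thm:exploration-informal} to this augmented MDP would give $K = \Ocal\bigl(H^3 \tilde S^2 A \iota / \epsilon^2\bigr) = \Ocal\bigl(H^5 S^2 A \iota / \epsilon^2\bigr)$, off by a factor of $H$. To recover the claimed $H^4$, I would instead re-run the \PFUCB analysis directly on the non-stationary MDP, maintaining per-step empirical transitions $\widehat{\Pbb}^k_h$ and per-step counts $N^k_h(x,a)$, and using bonuses $b^k_h(x,a) \propto H\sqrt{S\iota / N^k_h(x,a)}$. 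Retracing the proof of Theorem~\ref{thm:exploration-informal}, the only place that is sensitive to non-stationarity is the aggregated-bonus bound: the stationary argument gives $\sum_{k,h} 1/\sqrt{N^k(x^k_h, a^k_h)} = \Ocal\bigl(\sqrt{SAHK}\bigr)$, whereas the per-step version splits into $\sum_h \sum_k 1/\sqrt{N^k_h(x^k_h, a^k_h)} = \Ocal\bigl(H\sqrt{SAK}\bigr)$, which is $\sqrt{H}$ larger. Propagating this extra factor through the PAC-to-regret conversion yields $K = \Ocal\bigl(H^4 S^2 A \iota / \epsilon^2\bigr)$.

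The principal obstacle is precisely this $H$-gap in the non-stationary case: the state-augmentation reduction is clean but blows $\min\{d, \tilde S\}$ up from $S$ to $SH$. Closing the gap requires either exploiting the layered structure of the augmented MDP---each augmented state $(h,x)$ is reachable only at step $h$---or, equivalently, redoing the analysis of Theorem~\ref{thm:exploration-informal} directly with per-step counts. Everything else, including the $\min\{d,S\}$-dependent Hoeffding bonus, the optimism argument, and the lower-order-term manipulation that avoids union-bounding over value functions, transfers from the stationary proof without modification.
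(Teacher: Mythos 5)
Your proposal is correct and follows essentially the same route as the paper: the stationary case is the direct substitution $d=SA$ (so $\min\{d,S\}=S$) into Theorem~\ref{thm:exploration-informal}, and the non-stationary case is handled by rerunning the analysis with per-step counts $N^k_h(x,a)$ and empirical kernels $\widehat{\Pbb}^k_h$, where the only change is that the integration trick now uses $\sum_{x,a}N^K_h(x,a)=K$ per step rather than $\sum_{x,a}N^K(x,a)=HK$ in aggregate, costing exactly the $\sqrt{H}$ factor you identify and yielding $H^4$. Your correct rejection of the state-augmentation reduction (which would give $H^5$) is a sensible sanity check but not needed.
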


\begin{rmk}
When applied to RFE, \PFUCB matches the rate shown in \citep{kaufmann2020adaptive} and improves an $H$ factor compared with \citep{jin2020reward} (for both stationary and non-stationary MDPs), but is an $H$ factor loose compared with the current best rates, \citep{menard2020fast} (for non-stationary MDPs) and \citep{zhang2020nearly} (for stationary MDPs).
However we highlight that our results are superior in the context of PFE, since \PFUCB adapts with the structure of the rewards. 
In specific, if rewards admit a $d$-dimensional feature for $d < S$, \PFUCB only needs $\propto\widetilde\Ocal (\min\{d,S\}\cdot S)$ samples, 
but the above methods must explore the whole environment with a high precision which consumes $\propto\widetilde\Ocal (S^2)$ samples. 
\end{rmk}

\paragraph{Application in Task-Agnostic Exploration.}
PFE is also related to \emph{task-agnostic exploration} (TAE)~\citep{zhang2020task}: in PFE, the agent needs to plan for an arbitrary reward function from a fixed and bounded $d$-dimensional space; and in TAE, the agent needs to plan for $N$ fixed reward functions.
Due to the nature of the problem setups, PFE algorithms (that do not exploit the given reward basis $\rB$ during exploration, e.g., ours) and TAE algorithms can be easily applied to solve the other problem through a covering argument and a union bound, and with a modification of $\min\{d,S\} \leftrightarrow \log (N)$ in the obtained trajectory complexity bounds.
For TAE on a non-stationary MDP, \citet{zhang2020task} show an algorithm which takes $\widetilde\Ocal\bigl({\log(N) \cdot H^5 SA}/{\epsilon^2}\bigr)$ episodes for TAE\footnote{This bound is copied from \citep{zhang2020task}, which is erroneously stated due to a technical issue in the proof of Lemma 2 in their original paper. The issue can be fixed by a covering argument and union bound on the value functions, but then the obtained bound should be $\widetilde\Ocal\bigl({\log(N) H^5 S^2 A}/{\epsilon^2}\bigr)$. See Appendix~\ref{append-sec:discuss-zhang} for details.}.
In comparison, when applied to TAE on a non-stationary MDP, Theorem~\ref{thm:exploration-informal} implies \PFUCB only takes $\widetilde\Ocal\bigl({ \log (N) \cdot H^4 SA}/{\epsilon^2}\bigr)$ episodes\footnote{The conversion holds as follows. First set $d=1$ in our algorithm to yield an algorithm for TAE with a single agnostic task, where we have $\min\{d,S\}=1$. Then one can extend this algorithm to TAE with $N$ agnostic tasks using a union bound to have the algorithm succeed simultaneously for all $N$ tasks, which adds a $\log N$ multiplicative factor in the sample complexity bound. In this way, we obtain a TAE algorithm with a sample complexity bound in Theorem \ref{thm:exploration-informal} where $\min\{d,S\}$ is replaced with $\log N$.}, which improves~\citep{zhang2020task}.

\section{Related Works}\label{sec:related}

\textbf{MORL.}
MORL receives extensive attention from RL applications~\citep{yang2019generalized,natarajan2005dynamic,mossalam2016multi,abels2018dynamic,roijers2013survey}. 
However, little theoretical results are known, especially in terms of the sample efficiency and adversarial preference settings.
A large amount of MORL works focus on identifying (a cover for) the policies belonging to the \emph{Pareto front} \citep{yang2019generalized,roijers2013survey,cheung2019exploration}, but the correspondence between a preference and an optimal policy is ignored.
Hence they cannot ``accommodate picky customers'' as our algorithms do.
To our knowledge, this paper initiates the theoretical study of the sample efficiency for MORL in the setting of adversarial preferences.


\textbf{Adversarial MDP.}
Similarly to the online MORL problem studied in this paper, the adversarial MDP problem \citep{neu2012adversarial,rosenberg2019online,jin2019learning} also allows the reward function to change adversarially over time. 
However, we study a totally different regret (see \eqref{eq:regret}).
In the adversarial MDP problem, the regret is measured against a \emph{fixed} policy that is best in the hindsight;
but in online MORL problem, we study a regret against a sequence of optimal policies with respect to the sequence of incoming preferences, i.e., our benchmark policy may \emph{vary} over time.
Therefore our regret bound for online MORL problems is orthogonal to those for adversarial MDP problems~\citep{neu2012adversarial,rosenberg2019online,jin2019learning}.

\textbf{Constrained MDP.}
In the problem of the \emph{constrained MDP}, $d$ constraints are enforced to restrict the policy domain, and the agent aims to find a policy that belongs to the domain and maximizes the cumulative rewards \citep{achiam2017constrained,efroni2020exploration,el2018controlled,fisac2018general,wachi2020safe,garcia2015comprehensive,brantley2020constrained}.
Constrained MDP is related to MORL as when the constraints are soft, they can be formulated as ``objectives'' with negative weights.
However, there is a fundamental difference: constrained MDP aims to optimize only one (and known) objective, where in MORL studied in this paper, we aim to be able to find near optimal policies for any preference-weighted objective (to accommodate picky customers).

\textbf{Reward-Free Exploration.} 
The proposed preference-free exploration problem generalizes the problems of reward-free exploration~\citep{jin2020reward}.
Compared with existing works for RFE~\citep{jin2020reward,kaufmann2020adaptive,menard2020fast,zhang2020nearly}, our method has the advantage of adapting with rewards that admit low-dimensional structure --- this partly answers an open question raised by \citet{jin2020reward}.
We also note that \citet{wang2020reward} study RFE with linear function approximation on the value functions; in contrast our setting can be interpreted as RFE with linear function approximation on the reward functions, which is orthogonal to their setting.

\section{Conclusion}\label{sec:conclusion}
In this paper we study provably sample-efficient algorithms for multi-objective reinforcement learning in both online and preference-free exploration settings.
For both settings, sample-efficient algorithms are proposed and their sample complexity analysis is provided;
moreover, two information-theoretic lower bounds are proved to justify the near-tightness of the proposed algorithms, respectively.
Our results extend existing theory for single-objective RL and reward-free exploration, and resolve an open question raised by \citet{jin2020reward}.


\section*{Acknowledgement}
This research was supported in part by NSF CAREER grant 1652257, ONR Award N00014-18-1-2364, the Lifelong Learning Machines program from DARPA/MTO, and NSF HDR TRIPODS grant 1934979.

\bibliographystyle{plainnat}
\bibliography{ref}

\begin{thebibliography}{29}
\providecommand{\natexlab}[1]{#1}
\providecommand{\url}[1]{\texttt{#1}}
\expandafter\ifx\csname urlstyle\endcsname\relax
  \providecommand{\doi}[1]{doi: #1}\else
  \providecommand{\doi}{doi: \begingroup \urlstyle{rm}\Url}\fi

\bibitem[Abels et~al.(2018)Abels, Roijers, Lenaerts, Now{\'e}, and
  Steckelmacher]{abels2018dynamic}
Axel Abels, Diederik~M Roijers, Tom Lenaerts, Ann Now{\'e}, and Denis
  Steckelmacher.
\newblock Dynamic weights in multi-objective deep reinforcement learning.
\newblock \emph{arXiv preprint arXiv:1809.07803}, 2018.

\bibitem[Achiam et~al.(2017)Achiam, Held, Tamar, and
  Abbeel]{achiam2017constrained}
Joshua Achiam, David Held, Aviv Tamar, and Pieter Abbeel.
\newblock Constrained policy optimization.
\newblock In \emph{International Conference on Machine Learning}, pages 22--31.
  PMLR, 2017.

\bibitem[Azar et~al.(2017)Azar, Osband, and Munos]{azar2017minimax}
Mohammad~Gheshlaghi Azar, Ian Osband, and R{\'e}mi Munos.
\newblock Minimax regret bounds for reinforcement learning.
\newblock In \emph{Proceedings of the 34th International Conference on Machine
  Learning-Volume 70}, pages 263--272. JMLR. org, 2017.

\bibitem[Brantley et~al.(2020)Brantley, Dudik, Lykouris, Miryoosefi,
  Simchowitz, Slivkins, and Sun]{brantley2020constrained}
Kiant{\'e} Brantley, Miroslav Dudik, Thodoris Lykouris, Sobhan Miryoosefi, Max
  Simchowitz, Aleksandrs Slivkins, and Wen Sun.
\newblock Constrained episodic reinforcement learning in concave-convex and
  knapsack settings.
\newblock \emph{arXiv preprint arXiv:2006.05051}, 2020.

\bibitem[Cheung(2019)]{cheung2019exploration}
Wang~Chi Cheung.
\newblock Exploration-exploitation trade-off in reinforcement learning on
  online markov decision processes with global concave rewards.
\newblock \emph{arXiv preprint arXiv:1905.06466}, 2019.

\bibitem[Dann et~al.(2017)Dann, Lattimore, and Brunskill]{dann2017unifying}
Christoph Dann, Tor Lattimore, and Emma Brunskill.
\newblock Unifying pac and regret: Uniform pac bounds for episodic
  reinforcement learning.
\newblock \emph{arXiv preprint arXiv:1703.07710}, 2017.

\bibitem[Efroni et~al.(2020)Efroni, Mannor, and Pirotta]{efroni2020exploration}
Yonathan Efroni, Shie Mannor, and Matteo Pirotta.
\newblock Exploration-exploitation in constrained mdps.
\newblock \emph{arXiv preprint arXiv:2003.02189}, 2020.

\bibitem[El~Chamie et~al.(2018)El~Chamie, Yu, A{\c{c}}{\i}kme{\c{s}}e, and
  Ono]{el2018controlled}
Mahmoud El~Chamie, Yue Yu, Beh{\c{c}}et A{\c{c}}{\i}kme{\c{s}}e, and Masahiro
  Ono.
\newblock Controlled markov processes with safety state constraints.
\newblock \emph{IEEE Transactions on Automatic Control}, 64\penalty0
  (3):\penalty0 1003--1018, 2018.

\bibitem[Fisac et~al.(2018)Fisac, Akametalu, Zeilinger, Kaynama, Gillula, and
  Tomlin]{fisac2018general}
Jaime~F Fisac, Anayo~K Akametalu, Melanie~N Zeilinger, Shahab Kaynama, Jeremy
  Gillula, and Claire~J Tomlin.
\newblock A general safety framework for learning-based control in uncertain
  robotic systems.
\newblock \emph{IEEE Transactions on Automatic Control}, 64\penalty0
  (7):\penalty0 2737--2752, 2018.

\bibitem[Garc{\i}a and Fern{\'a}ndez(2015)]{garcia2015comprehensive}
Javier Garc{\i}a and Fernando Fern{\'a}ndez.
\newblock A comprehensive survey on safe reinforcement learning.
\newblock \emph{Journal of Machine Learning Research}, 16\penalty0
  (1):\penalty0 1437--1480, 2015.

\bibitem[Jin et~al.(2018)Jin, Allen-Zhu, Bubeck, and Jordan]{jin2018q}
Chi Jin, Zeyuan Allen-Zhu, Sebastien Bubeck, and Michael~I Jordan.
\newblock Is q-learning provably efficient?
\newblock In \emph{Advances in Neural Information Processing Systems}, pages
  4863--4873, 2018.

\bibitem[Jin et~al.(2019)Jin, Jin, Luo, Sra, and Yu]{jin2019learning}
Chi Jin, Tiancheng Jin, Haipeng Luo, Suvrit Sra, and Tiancheng Yu.
\newblock Learning adversarial mdps with bandit feedback and unknown
  transition, 2019.

\bibitem[Jin et~al.(2020)Jin, Krishnamurthy, Simchowitz, and Yu]{jin2020reward}
Chi Jin, Akshay Krishnamurthy, Max Simchowitz, and Tiancheng Yu.
\newblock Reward-free exploration for reinforcement learning.
\newblock \emph{arXiv preprint arXiv:2002.02794}, 2020.

\bibitem[Johnson and Lindenstrauss(1984)]{johnson1984extensions}
William~B Johnson and Joram Lindenstrauss.
\newblock Extensions of lipschitz mappings into a hilbert space.
\newblock \emph{Contemporary mathematics}, 26\penalty0 (189-206):\penalty0 1,
  1984.

\bibitem[Kaufmann et~al.(2020)Kaufmann, M{\'e}nard, Domingues, Jonsson,
  Leurent, and Valko]{kaufmann2020adaptive}
Emilie Kaufmann, Pierre M{\'e}nard, Omar~Darwiche Domingues, Anders Jonsson,
  Edouard Leurent, and Michal Valko.
\newblock Adaptive reward-free exploration.
\newblock \emph{arXiv preprint arXiv:2006.06294}, 2020.

\bibitem[Maurer and Pontil(2009)]{maurer2009empirical}
Andreas Maurer and Massimiliano Pontil.
\newblock Empirical bernstein bounds and sample variance penalization.
\newblock \emph{arXiv preprint arXiv:0907.3740}, 2009.

\bibitem[M{\'e}nard et~al.(2020)M{\'e}nard, Domingues, Jonsson, Kaufmann,
  Leurent, and Valko]{menard2020fast}
Pierre M{\'e}nard, Omar~Darwiche Domingues, Anders Jonsson, Emilie Kaufmann,
  Edouard Leurent, and Michal Valko.
\newblock Fast active learning for pure exploration in reinforcement learning.
\newblock \emph{arXiv preprint arXiv:2007.13442}, 2020.

\bibitem[Mossalam et~al.(2016)Mossalam, Assael, Roijers, and
  Whiteson]{mossalam2016multi}
Hossam Mossalam, Yannis~M Assael, Diederik~M Roijers, and Shimon Whiteson.
\newblock Multi-objective deep reinforcement learning.
\newblock \emph{arXiv preprint arXiv:1610.02707}, 2016.

\bibitem[Natarajan and Tadepalli(2005)]{natarajan2005dynamic}
Sriraam Natarajan and Prasad Tadepalli.
\newblock Dynamic preferences in multi-criteria reinforcement learning.
\newblock In \emph{Proceedings of the 22nd international conference on Machine
  learning}, pages 601--608, 2005.

\bibitem[Neu et~al.(2012)Neu, Gyorgy, and Szepesv{\'a}ri]{neu2012adversarial}
Gergely Neu, Andras Gyorgy, and Csaba Szepesv{\'a}ri.
\newblock The adversarial stochastic shortest path problem with unknown
  transition probabilities.
\newblock In \emph{Artificial Intelligence and Statistics}, pages 805--813,
  2012.

\bibitem[Roijers et~al.(2013)Roijers, Vamplew, Whiteson, and
  Dazeley]{roijers2013survey}
Diederik~M Roijers, Peter Vamplew, Shimon Whiteson, and Richard Dazeley.
\newblock A survey of multi-objective sequential decision-making.
\newblock \emph{Journal of Artificial Intelligence Research}, 48:\penalty0
  67--113, 2013.

\bibitem[Rosenberg and Mansour(2019)]{rosenberg2019online}
Aviv Rosenberg and Yishay Mansour.
\newblock Online convex optimization in adversarial markov decision processes.
\newblock \emph{arXiv preprint arXiv:1905.07773}, 2019.

\bibitem[Sutton and Barto(2018)]{sutton2018reinforcement}
Richard~S Sutton and Andrew~G Barto.
\newblock \emph{Reinforcement learning: An introduction}.
\newblock MIT press, 2018.

\bibitem[Wachi and Sui(2020)]{wachi2020safe}
Akifumi Wachi and Yanan Sui.
\newblock Safe reinforcement learning in constrained markov decision processes.
\newblock In \emph{International Conference on Machine Learning}, pages
  9797--9806. PMLR, 2020.

\bibitem[Wang et~al.(2020)Wang, Du, Yang, and Salakhutdinov]{wang2020reward}
Ruosong Wang, Simon~S Du, Lin~F Yang, and Ruslan Salakhutdinov.
\newblock On reward-free reinforcement learning with linear function
  approximation.
\newblock \emph{arXiv preprint arXiv:2006.11274}, 2020.

\bibitem[Yang et~al.(2019)Yang, Sun, and Narasimhan]{yang2019generalized}
Runzhe Yang, Xingyuan Sun, and Karthik Narasimhan.
\newblock A generalized algorithm for multi-objective reinforcement learning
  and policy adaptation.
\newblock In \emph{Advances in Neural Information Processing Systems}, pages
  14610--14621, 2019.

\bibitem[Zanette and Brunskill(2019)]{zanette2019tighter}
Andrea Zanette and Emma Brunskill.
\newblock Tighter problem-dependent regret bounds in reinforcement learning
  without domain knowledge using value function bounds.
\newblock In \emph{International Conference on Machine Learning}, pages
  7304--7312. PMLR, 2019.

\bibitem[Zhang et~al.(2020{\natexlab{a}})Zhang, Ma, and Singla]{zhang2020task}
Xuezhou Zhang, Yuzhe Ma, and Adish Singla.
\newblock Task-agnostic exploration in reinforcement learning,
  2020{\natexlab{a}}.

\bibitem[Zhang et~al.(2020{\natexlab{b}})Zhang, Du, and Ji]{zhang2020nearly}
Zihan Zhang, Simon~S Du, and Xiangyang Ji.
\newblock Nearly minimax optimal reward-free reinforcement learning.
\newblock \emph{arXiv preprint arXiv:2010.05901}, 2020{\natexlab{b}}.

\end{thebibliography}

\newpage
\appendix

\section{Numerical Simulations}\label{append-sec:experiment}

Code for simulations is available at \url{https://github.com/uuujf/MORL}.

\paragraph{Comparison with the Optimal Single-Objective RL Algorithm.}
Figure~\ref{fig:simulation} shows a regret comparison between the proposed \MOVI and the best-in-hindsight policy.
In the experiment we simulate a random multi-objective MDP with $S = 20, A = 5, H = 10, d = 15$, and run the algorithms for $K = 5000$ episodes. 
The best-in-hindsight policy refers to a policy that is fixed across episodes and achieves maximum cumulative rewards in the whole game, i.e., $\arg\max_{\pi} \sum_{k=1}^K V^k_1(x_1;w^k)$.
Note that the best-in-hindsight policy is the optimal policy in the single-objective RL setting;
however it could be much worse than a time-varying policy in the multi-objective RL setting.
Since the information-theoretically adversarial preferences are computationally infeasible to compute, in Figure \ref{fig:simulation} we simply use a randomly generated set of preferences, for which the “best-in-hindsight” policy already performs poorly.
In sum, the plot shows that our algorithm achieves sublinear regret in online MORL, but the best single-objective RL algorithm will incur linear regret in online MORL.

\paragraph{Comparison with Q-Learning.}
Since MORL is connected to task-agnostic exploration \citep{zhang2020task} in the exploration setting (see discussions in Section \ref{sec:explore}), it is tempted to think that the Q-learning method studied in \citet{jin2018q,zhang2020task} could also work in the setting of online MORL.
However this is refuted by Figure \ref{fig:q-learning}.  We simulate a random multi-objective MDP with $S = 20, A = 5, H = 10, d = 15$, and run the algorithms for $K = 5000$ episodes. 
The Q-learning algorithm is specified by Algorithm 1 in \citet{jin2018q}, excepted that the reward is now a linear scalarization of a preference vector and a multi-objective reward vector.
Figure \ref{fig:q-learning} shows that Q-learning cannot achieve a sublinear regret in the setting of online MORL.

\begin{figure}
    \centering
    \includegraphics[width=0.5\linewidth]{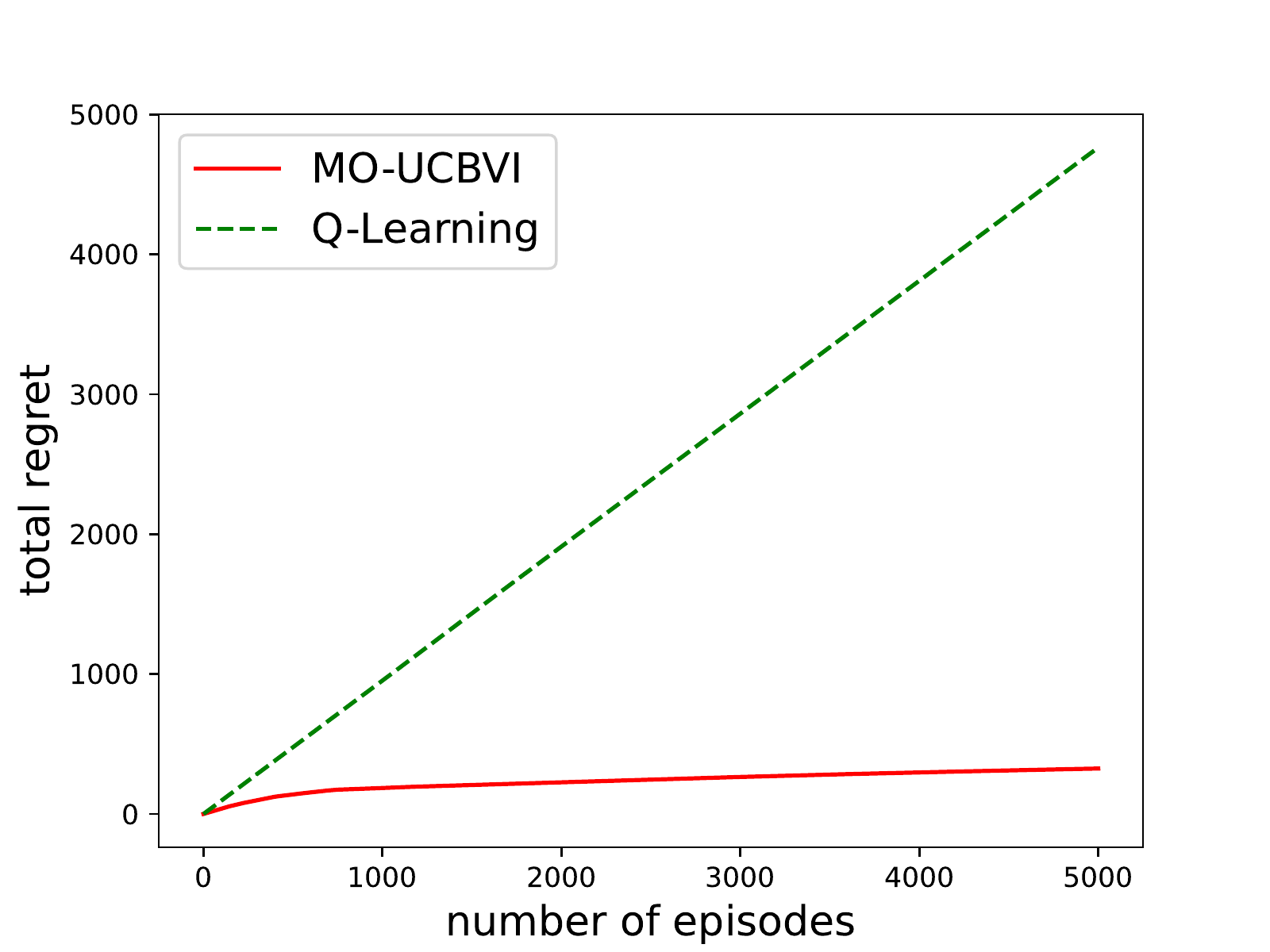}
    \caption{\small \MOVI vs. Q-learning \citep{jin2018q} in a simulated random multi-objective MDP. The plot suggests that \MOVI achieves sublinear regret but Q-learning incurs linear regret.}
    \label{fig:q-learning}
\end{figure}

\paragraph{The Effect of Number of Objectives.}
Figure \ref{fig:number-of-objectives} shows the performance of \MOVI with different number of objectives.
In the experiment we simulate a random multi-objective MDP with $S = 20, A = 5, H = 10$, and number of objectives $d \in \set{1,5,15,20,30}$, and run \MOVI for $K = 5000$ episodes. 
As shown in Figure \ref{fig:number-of-objectives}, the regret will increase as the number of objectives increases; moreover, in all settings \MOVI achieves a sublinear regret.

\begin{figure}
    \centering
    \includegraphics[width=0.5\linewidth]{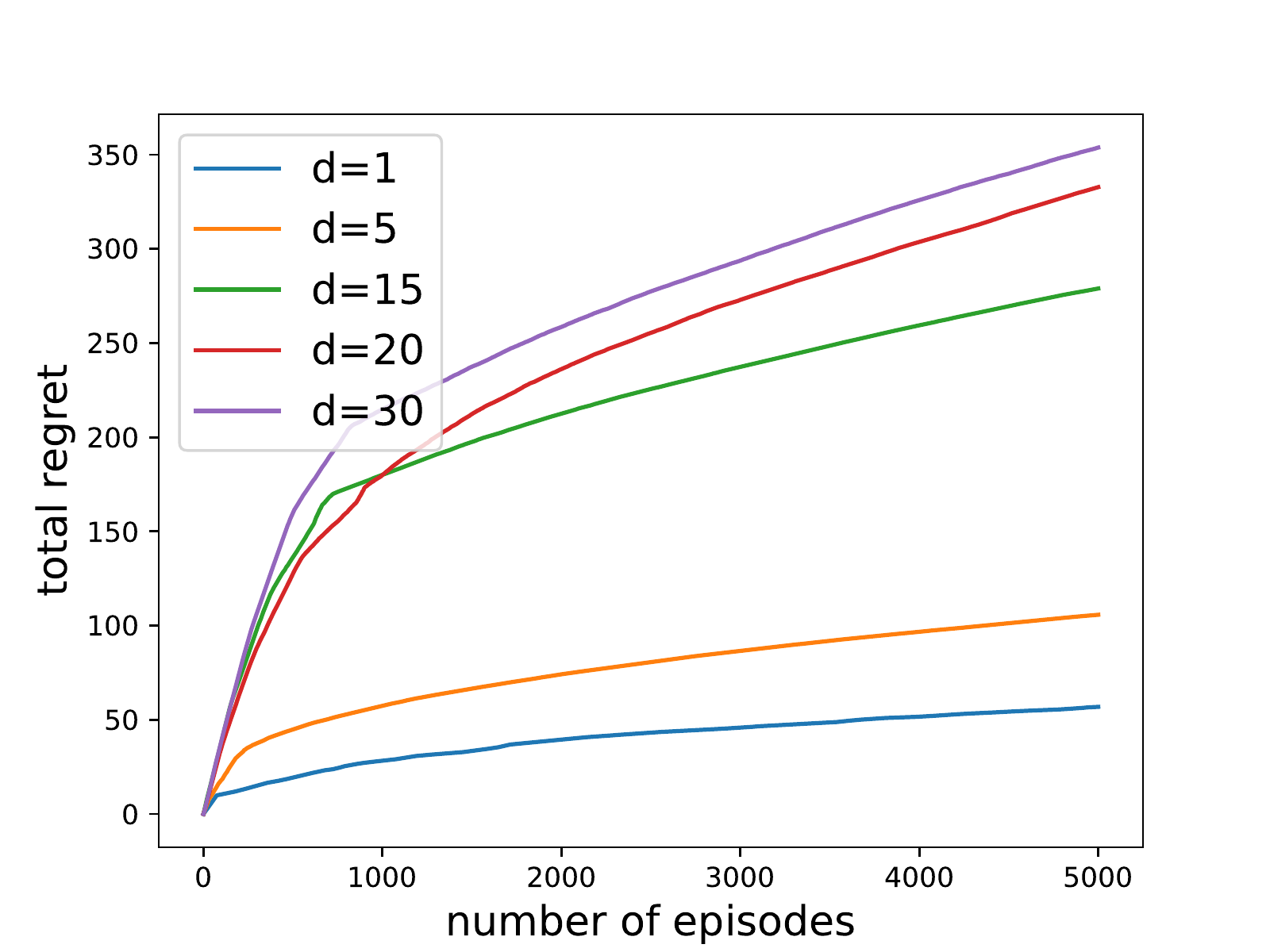}
    \caption{\small The effect of number of objectives. The plot shows the performance of \MOVI in a simulated random multi-objective MDP with different number of objectives. The plot suggests that regret increases as number of objectives increases, but \MOVI achieves sublinear regret in all cases, as predicted by Theorem~\ref{thm:regret-informal}.}
    \label{fig:number-of-objectives}
\end{figure}

\section{Proof of Theorem~\ref{thm:regret-informal}}\label{append-sec:proof-regret}
Our proof utilizes techniques from \citep{zanette2019tighter}.

\paragraph{Notations.}
For two functions $f(x) \ge 0$ and $g(x) \ge 0$ defined for $x \in [0, \infty)$,
we write $f(x) \lesssim g(x)$ if $f(x) \le c\cdot g(x)$ for some absolute constant $c > 0$; 
we write $f(x) \gtrsim g(x)$ if $g(x) \lesssim f(x)$;
and we write $f(x) \eqsim g(x)$ if $f(x) \lesssim g(x)\lesssim f(x)$.
Moreover, we write $f(x) = \Ocal (g(x))$ if $\lim_{x\to\infty} f(x) / g(x) < c$ for some absolute constant $c  >0$;
we write $f(x) = \Omega(g(x))$ if $g(x) = \Ocal (f(x))$;
and we write $f(x) = \Theta(g(x))$ if $f(x) = \Ocal (g(x))$ and $g(x) = \Ocal(f(x))$.
To hide the logarithmic factors, we write $f(x) = \widetilde{\Ocal} (g(x)) $ if $f(x) = \Ocal(g(x) \log^d x) $ for some absolute constant $d > 0$.
For $a,b \in \Rbb$, we write $a\land b := \min\{a,b\}$ and $a\lor b := \max\{a,b\}$.
We will use $\iota$ to denote a general logarithmic factor.
Let $\pi^k$ be the planning policy at the $k$-th episode, i.e., a greedy policy that maximizes $Q^k_h(x,a)$.
Let $w^k_h(x,a) := \Pbb \set{(x_h, a_h) = (x,a) \mid \pi^k, \Pbb}$ and $w^k(x,a) := \sum_{h} w^k_h(x,a) $.

\subsection{Good Events and Useful Lemmas}
Fix $\epsilon$ to be a constant to be determined later (we will set $\epsilon = 1/K$).
Consider the following events: 
\begingroup
\allowdisplaybreaks
\begin{gather}
    G_H := \set{\forall x,a,h,k,w,\  \abs{ \eval{ (\widehat{\Pbb}^k - \Pbb)V^*_{h+1}}_{x,a,w} } \le 2\epsilon + \sqrt{\frac{(d\land S)H^2 }{2 N^k(x,a)} \log \frac{6 H^2SAK}{\delta\epsilon} } }, \label{eq:GH} \tag{$\mathrm{ G_H} $} \\
    \begin{aligned}
        G_{\widehat{B}} &:= \Bigg\{\forall x,a,h,k,w,\  \abs{ \eval{(\widehat{\Pbb}^k - \Pbb)V^*_{h+1}}_{x,a,w} } \le \\
        & \qquad \qquad 2\epsilon + \sqrt{\frac{2(d\land S) \norm{V^*_{h+1}}^2_{\widehat{\Pbb}^k}}{N^k(x,a)} \log\frac{6 H^2 SAK}{\delta\epsilon}}  + \frac{7(d\land S)H}{3N^k(x,a)}\log\frac{6H^2SAK}{\delta\epsilon} \Bigg\},
    \end{aligned} \label{eq:GhatB} \tag{$\mathrm{ G_{\widehat{B}}}$} \\
     G_V := \set{\forall x,a,h,k,w, \ \abs{\eval{ \norm{V^*_h}_{\widehat{\Pbb}^k} - \norm{V^*_h}_{{\Pbb}}  }_{x,a,w} } \le 2\epsilon + \sqrt{\frac{4(d\land S)H^2}{N^k(x,a)}\log\frac{6H^2SAK}{\delta\epsilon}} }, \label{eq:GV} \tag{$\mathrm{G_V}$} \\
    \begin{aligned}
        G_{P} &:= \Bigg\{\forall x,a,y,k,\  \abs{\widehat{\Pbb}^k(y\mid x, a) - \Pbb(y\mid x, a)} \le \\
        &\qquad\qquad\qquad \sqrt{\frac{2\Pbb(y\mid x, a) }{N^k(x,a)}\log\frac{2S^2AK}{\delta}} 
        + \frac{2}{3N^k(x,a)}\log\frac{2S^2AK}{\delta}   \Bigg\}, 
    \end{aligned}\label{eq:GP} \tag{$\mathrm{G_P}$} \\
    \begin{aligned}
        G_{\widehat{P}} &:= \Bigg\{\forall x,a,y,k,\  \abs{\widehat{\Pbb}^k(y\mid x, a) - \Pbb(y\mid x, a)} \le \\
        &\qquad\qquad\qquad \sqrt{\frac{2\widehat{\Pbb}^k(y\mid x, a) }{N^k(x,a)}\log\frac{2S^2AK}{\delta}} 
        + \frac{7}{3N^k(x,a)}\log\frac{2S^2AK}{\delta}   \Bigg\}, 
    \end{aligned}\label{eq:GhatP} \tag{$\mathrm{G_{\widehat{P}}}$} \\
    G_N := \set{\forall x,a,k,\ N^k(x,a) \ge \half \sum_{j < k} w^j(x,a) - H \log \frac{HSA}{\delta} }. \label{eq:GN} \tag{$\mathrm{G_N}$}
\end{gather}
\endgroup

\begin{lem}[Probability of good events.]\label{lem:good-event-prob}
Each of the good event holds with probability at least $1-\delta$.
In particular, they hold simultaneously with probability at least
\[
\Pbb\set{ G_H \cap G_{\widehat{B}} \cap G_{V} \cap G_P \cap G_{\hat{P}} \cap  G_N } \ge 1-6\delta.
\]
\end{lem}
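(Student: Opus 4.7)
The plan is to establish the six concentration events separately and then combine them via a single union bound giving the claimed $1-6\delta$ guarantee. The events $G_P$ and $G_{\widehat{P}}$ depend only on the empirical transition kernel (not on the preference $w$), so they follow from applying the true and empirical Bernstein inequalities pointwise to the Bernoulli indicators $\mathbf{1}\{x'=y\}$ drawn from $\Pbb(\cdot\mid x,a)$, together with a union bound over $(x,a,y,k)\in\Scal\times\Acal\times\Scal\times[K]$; this produces the $\log(2S^2AK/\delta)$ factor. For $G_N$, a standard Azuma/Freedman martingale argument on $N^k(x,a) - \half\sum_{j<k}w^j(x,a)$ (whose one-step increments are bounded in $[0,H]$), followed by a union bound over $(x,a,k)$, yields the visitation lower bound; this mirrors the counterpart lemma in the \texttt{UCBVI} analysis~\citep{azar2017minimax}.

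The subtler events are $G_H$, $G_{\widehat{B}}$, and $G_V$, which must hold uniformly over the continuous set $\Wcal$. The key ingredient is a covering argument. Because $V^*_h(x;w)$ is a maximum over policies of an affine function of $w$ with slopes bounded by $H$ in $\ell_\infty$, it is $H$-Lipschitz in $w$ uniformly in $x$. Moreover, when $d>S$ the map $w\mapsto (V^*_h(\cdot;w))$ factors through a linear image of $\Wcal$ of rank at most $S$; hence there is an $\epsilon$-net $\Wcal_\epsilon \subset \Wcal$ of cardinality at most $(3/\epsilon)^{\min\{d,S\}}$ such that every $w\in\Wcal$ lies within $\epsilon/H$ of some $w'\in\Wcal_\epsilon$, which guarantees that each value-function quantity at $w$ differs from the corresponding quantity at $w'$ by at most $\epsilon$ --- this slack is exactly the additive $2\epsilon$ appearing in every good event.

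For each fixed $w'\in\Wcal_\epsilon$, the random variable $\eval{(\widehat{\Pbb}^k-\Pbb)V^*_{h+1}(\cdot;w')}_{x,a,w'}$ is, conditionally on $N^k(x,a)=n$, the average of $n$ centered i.i.d.\ samples of a bounded variable in $[-H,H]$ with empirical second moment $\eval{\norm{V^*_{h+1}}^2_{\widehat{\Pbb}^k}}_{x,a,w'}$. Hoeffding's inequality gives $G_H$ at this fixed tuple, while empirical Bernstein (combined with the usual self-bounding manipulation) gives $G_{\widehat{B}}$; for $G_V$ one applies Hoeffding to the squared value functions $V^*_h(\cdot;w')^2 \in [0,H^2]$. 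A union bound over $(x,a,h,k)$, over $n \le K$, and over $\Wcal_\epsilon$ introduces a log factor of order $\log(HSAK/\delta) + \min\{d,S\}\log(1/\epsilon)$. Setting $\epsilon = 1/K$ (as the main proof does) absorbs this into the single term $\log(6H^2SAK/(\delta\epsilon))$ inside the square root, while the $\min\{d,S\}$ coming from the covering dimension is pulled outside the log to form the variance proxy of the stated rate.

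A union bound over the six events then yields $1-6\delta$. The main obstacle is the covering step delivering $\min\{d,S\}$ rather than the naive $d$: when $d \gg S$ one must exploit that the only functionals of $w$ relevant to the concentration inequalities live in the $S$-dimensional space $(V^*_h(x;w))_{x\in\Scal}$, whose linear dependence on $w$ has rank at most $S$, so the effective covering dimension is $\min\{d,S\}$. Every other step is a routine Hoeffding/Bernstein/martingale concentration combined with union bounds.
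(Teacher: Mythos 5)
Your overall strategy is the same as the paper's: pointwise Bernstein/empirical-Bernstein plus a union bound over $(x,a,y,k)$ for $G_P,G_{\widehat P}$, a martingale visitation bound for $G_N$ (the paper invokes Lemma F.4 of Dann et al.), and a covering argument over preferences combined with Hoeffding / empirical Bernstein / a variance-concentration result for $G_H, G_{\widehat B}, G_V$, followed by a union bound over the six events. However, two specific steps as you have written them would not go through.

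First, the covering step. You claim an $\epsilon$-net $\Wcal_\epsilon\subset\Wcal$ of cardinality $(3/\epsilon)^{\min\{d,S\}}$ such that every $w\in\Wcal$ is within $\epsilon/H$ of a net point. When $d>S$ this is false: covering the $d$-dimensional $\ell_1$-ball to accuracy $\epsilon/H$ requires $\exp(\Omega(d))$ points, and the map $w\mapsto V^*_h(\cdot;w)$ is a pointwise maximum of linear functions, hence piecewise linear and not a ``linear image of rank $S$.'' The paper obtains $\min\{d,S\}$ by running \emph{two separate} covering arguments and taking the better one: an $(\epsilon/H)$-net of the preference ball in $\Rbb^d$ (size $(3H/\epsilon)^d$), and, independently, an $\epsilon$-net of the codomain $\{\vB\in\Rbb^S:\norm{\vB}_\infty\le H\}$ (size $(3H/\epsilon)^S$). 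The second argument works because the concentration inequality for $\eval{(\widehat{\Pbb}^k-\Pbb)V}_{x,a}$ depends on the value function $V$ only through $V$ itself, so it suffices that every preference-induced value function is $\epsilon$-close in $\norm{\cdot}_\infty$ to some net element; no net of $\Wcal$ itself is needed. Your argument should be restated as covering the value-function ball, not $\Wcal$.

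Second, $G_V$. Applying Hoeffding to the squared values $V^*_h(\cdot;w')^2\in[0,H^2]$ controls the difference of \emph{variances} at rate $H^2/\sqrt{N^k}$, but $G_V$ asserts concentration of the difference of \emph{standard deviations} $\eval{\norm{V^*_h}_{\widehat{\Pbb}^k}-\norm{V^*_h}_{\Pbb}}$ at rate $H/\sqrt{N^k}$. Converting via $|\hat\sigma-\sigma|\le\sqrt{|\hat\sigma^2-\sigma^2|}$ only yields a rate of order $H(\iota/N^k)^{1/4}$, which is too weak. The paper instead uses Theorem~10 of Maurer and Pontil, a self-bounding concentration inequality for the sample standard deviation itself, which gives the $H\sqrt{\iota/N^k}$ rate directly. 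With these two repairs your proof matches the paper's; the remaining steps (the $2\epsilon$ Lipschitz slack via the continuity lemma, and the final union bound over the six events giving $1-6\delta$) are as in the paper.
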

\begin{proof}
We only need to prove that each of the good event holds with probability at least $1-\delta$.

We first show that $\Pbb\set{G_H} \ge 1-\delta$ by Hoeffding's inequality and a covering argument.
In particular by Hoeffding's inequality we have that for fixed $x,a,h,k,w$,
\begin{equation*}
    \abs{ \eval{(\widehat{\Pbb}^k- \Pbb) V^*_{h+1}}_{x,a,w} } \le \sqrt{\frac{H^2}{2N^k(x,a)}\log\frac{2}{\delta}}
\end{equation*}
holds with probability at least $1-\delta$.
We first apply a covering argument for the preferences set.
We consider an $\frac{\epsilon}{H}$-covering $\Ccal$ for the unit ball $\set{w\in\Rbb^d:\norm{w}_1 \le 1}$, then $\abs{\Ccal} \le \bracket{\frac{3H}{\epsilon}}^d$, and for any $w$ in the ball, there exists $w' \in \Ccal$ such that $\norm{w-w'}_1 \le \frac{\epsilon}{H}$.
Then a union bound on $w \in \Ccal$ and $(x,a,k,h) \in \Scal \times \Acal \times [K] \times [H]$ yields that with probability at least $1-\delta$, the following holds for every $(x,a,k,h,w)\in \Scal \times \Acal \times [K] \times [H] \times \Ccal$: 
\begin{equation*}
 \abs{ \eval{(\widehat{\Pbb}^k- \Pbb) V^*_{h+1}}_{x,a,w}} \le \sqrt{\frac{dH^2}{2N^k(x,a)}\log\frac{6 H^2SAK}{\delta\epsilon}}
\end{equation*}
Now consider an arbitrary $w \in \Wcal$ (hence in the unit ball), let $w'\in \Ccal$ be such that $\norm{w-w'}_1 \le \frac{\epsilon}{H}$, then by Lemma~\ref{lem:w-continuity} we have that with probability at least $1-\delta$,
\begin{align*}
    \abs{\eval{(\widehat{\Pbb}^k - \Pbb)V^*_{h+1}}_{x,a,w}} 
    &\le \abs{ \eval{\widehat{\Pbb}^k V^*_{h+1}}_{x,a,w} - \eval{\widehat{\Pbb}^k V^*_{h+1}}_{x,a,w'}} \\
    &\qquad + \abs{\eval{(\widehat{\Pbb}^k-\Pbb) V^*_{h+1}}_{x,a,w'}} 
    + \abs{\eval{{\Pbb} V^*_{h+1}}_{x,a,w'} - \eval{{\Pbb} V^*_{h+1}}_{x,a,w}} \\
    &\le 2\epsilon + \sqrt{\frac{d H^2}{2N^k(x,a)}\log\frac{6H^2 S A K}{\delta\epsilon}}
\end{align*}
holds for every $(x,a,k,h,w) \in \Scal \times \Acal \times [K] \times [H] \times \Wcal$.
Similarly, we can apply the above covering argument for the value function set by considering an $\epsilon$-covering $\Ccal$ for the $H$-ball $\set{\vB \in\Rbb^S: \norm{\vB}_\infty \le H}$, then $\abs{\Ccal} \le \bracket{\frac{3H}{\epsilon}}^S$, then we obtain that with probability at least $1-\delta$,
\begin{align*}
    \abs{\eval{(\widehat{\Pbb}^k - \Pbb)V^*_{h+1}}_{x,a,w}} 
    \le 2\epsilon + \sqrt{\frac{S H^2}{2N^k(x,a)}\log\frac{6H^2 S A K}{\delta\epsilon}}
\end{align*}
holds for every $x,a,k,h$ and every value functions, hence for every preference-induced value functions.
The two inequalities together show that  $\Pbb\set{G_H} \ge 1-\delta$.

Similarly, we can apply the covering arguments with the empirical Bernstein's inequalities \citep{maurer2009empirical} to show that $\Pbb\set{G_{\widehat{B}}} \ge 1-\delta$.

$\Pbb\set{G_{V}} \ge 1-\delta$ is proved by the covering arguments with Theorem 10 from \citep{maurer2009empirical}.

$\Pbb\set{G_{{P}}} \ge 1-\delta$ and $\Pbb\set{G_{\widehat{P}}} \ge 1-\delta$ are proved by Bernstein and empirical Bernstein's inequalities \citep{maurer2009empirical}, respectively.


Finally, $\Pbb\set{G_{N}} \ge 1-\delta$ is due to Lemma F.4 from \citep{dann2017unifying}.


\end{proof}

\begin{lem}[Continuity]\label{lem:w-continuity}
For every $h$, we have
\(
    \abs{V^*_h(x;w) - V^*_h(x;w')} \le (H-h + 1)\norm{w-w'}_1.
\)
\end{lem}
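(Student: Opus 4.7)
The plan is to prove the bound by backward induction on $h$, going from $h = H+1$ down to $h = 1$, using the Bellman optimality equation. The key structural fact that makes this clean is that the preference $w$ enters the Bellman equation only through the inner product $\langle w, \rB_h(x,a)\rangle$, and the vector reward satisfies $\rB_h(x,a) \in [0,1]^d$, hence $\|\rB_h(x,a)\|_\infty \le 1$.

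The base case is $h = H+1$, where $V^*_{H+1}(x;w) = 0$ identically in $w$, so the claim holds trivially with factor $H - (H+1) + 1 = 0$. For the inductive step, assume the bound holds at step $h+1$. Using the Bellman equation $V^*_h(x;w) = \max_a\bigl\{\langle w, \rB_h(x,a)\rangle + \eval{\Pbb V^*_{h+1}}_{x,a,w}\bigr\}$ and the elementary inequality $|\max_a f(a) - \max_a g(a)| \le \max_a |f(a) - g(a)|$, I would bound
\[
\bigl|V^*_h(x;w) - V^*_h(x;w')\bigr| \le \max_a \Bigl\{ \bigl|\langle w - w', \rB_h(x,a)\rangle\bigr| + \bigl| \eval{\Pbb V^*_{h+1}}_{x,a,w} - \eval{\Pbb V^*_{h+1}}_{x,a,w'} \bigr| \Bigr\}.
\]
The first term is at most $\|w-w'\|_1 \cdot \|\rB_h(x,a)\|_\infty \le \|w-w'\|_1$ by Hölder. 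The second term is a convex combination (over next states $y$) of $|V^*_{h+1}(y;w) - V^*_{h+1}(y;w')|$, which by the induction hypothesis is at most $(H-h)\|w-w'\|_1$. Summing yields $(H-h+1)\|w-w'\|_1$, closing the induction.

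There is no real obstacle here; the only thing to be careful about is that the induction hypothesis must hold uniformly in $x$ so that the bound survives being pushed through the transition kernel $\Pbb(\cdot \mid x,a)$, and that the $\max_a$ on the outside is handled by the standard $|\max - \max| \le \max|\cdot|$ inequality rather than something more delicate. An alternative one-shot proof would observe that $V^\pi_h(x;w) = \bigl\langle w,\, \Ebb_{\pi,\Pbb}\bigl[\sum_{j=h}^H \rB_j(x_j,a_j) \mid x_h=x\bigr]\bigr\rangle$ is linear in $w$ with an expectation vector of $\ell_\infty$-norm at most $H-h+1$, so each $V^\pi_h(\cdot;w)$ is $(H-h+1)$-Lipschitz in $\|\cdot\|_1$, and the Lipschitz property is preserved under the $\max_\pi$; this also gives the result and may be even shorter to write.
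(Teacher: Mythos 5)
Your proof is correct and follows essentially the same argument as the paper: backward induction on $h$, splitting the Bellman update into the reward term (bounded by $\|w-w'\|_1$ via Hölder) and the transition term (bounded by the induction hypothesis), with your $|\max_a f - \max_a g|\le \max_a|f-g|$ step being just the symmetric form of the paper's ``pick the maximizing action for the larger value'' device. The alternative one-shot argument via linearity of $V^\pi_h(\cdot\,;w)$ in $w$ is a nice additional observation but not needed.
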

\begin{proof}
We prove it by induction.
For $H+1$, we have $V^*_{H+1}(x;w) = V^*_{H+1}(x;w') = 0$. 
Now suppose
\[
    \abs{V^*_{h+1}(x;w) - V^*_{h+1}(x;w')} \le (H-h)\norm{w-w'}_1,
\]
and consider $h$.
Without loss of generality, suppose $V^*_h(x;w) \ge V^*_h(x;w')$. Then
\begin{align*}
    \abs{V^*_h(x; w) - V^*_h(x; w')} 
    &= V^*_h(x; w) - V^*_h(x; w') \\
    &= \max_a Q^*_h(x, a; w) - \max_{a'}Q^*_h(x,a; w') \\
    &\le Q^*_h(x, a; w) - Q^*_h(x,a; w') \qquad (\text{let $a= \arg\max_a Q^*_h(x, a; w)$}) \\
     &\le \abs{\abracket{ w-w', \rB_h(x,a)}} + \sum_{y\in \Scal}\prob{y \mid x,a} \bracket{ V^*_{h+1}(y; w) - V^*_{h+1}(y; w') } \\
     &\le \norm{w-w'}_1 + (H-h)\norm{w-w'}_1 \\
     &= (H-h+1)\norm{w-w'}_1.
\end{align*}
Hence the conclusion holds.
\end{proof}

\begin{lem}[Bounds for lower order terms]\label{lem:lower-order-terms}
If events \eqref{eq:GP}, \eqref{eq:GhatP} hold, then for every $V_1, V_2$ such that $0 \le V_1(x;w) \le V_2(x;w) \le H$, and for every $x,a,h,k,w$, the following inequalities hold:
\begin{enumerate}
    \item \( \abs{ \eval{ (\widehat{\Pbb}^k - \Pbb) (V_2 - V_1)}_{x,a,w}}
        \le \frac{1}{H} \eval{\Pbb\bracket{V_2 - V_1} }_{x,a,w}  + \frac{2 H^2 S\iota}{N^k(x,a)}; \)
        
    \item \( \abs{ \eval{ (\widehat{\Pbb}^k - \Pbb) (V_2 - V_1)}_{x,a,w}}
        \le \frac{1}{H} \eval{ \widehat{\Pbb} \bracket{V_2 - V_1} }_{x,a,w}  + \frac{3 H^2 S\iota}{N^k(x,a)}; \)
        
    
    \item \(\abs{\eval{(\widehat{\Pbb}^k - \Pbb)(V_2 - V_1 )^2}_{x,a,w}} \le \eval{ \Pbb (V_2 - V_1 )^2}_{x,a,w} + \frac{2 H^2 S \iota}{N^k(x,a)};\)
    
    \item \(\eval{ \norm{ V_2 - V_1  }^2_{\widehat{\Pbb}^k}}_{x,a,w} \le 2 \eval{\Pbb (V_2 - V_1 )^2}_{x,a,w} + \frac{2 H^2 S \iota}{N^k(x,a)};\)
\end{enumerate}
\end{lem}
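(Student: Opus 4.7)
\textbf{Proof plan for Lemma \ref{lem:lower-order-terms}.}
All four bounds are standard Bernstein-style "lower-order term" estimates. For inequalities (1) and (2) my plan is to start from the pointwise Bernstein bound supplied by the event \eqref{eq:GP} (resp.\ \eqref{eq:GhatP}): for every $y$,
\[
    \bigl|\widehat{\Pbb}^k(y\mid x,a)-\Pbb(y\mid x,a)\bigr|
    \;\le\; \sqrt{\frac{2\Pbb(y\mid x,a)\,\iota}{N^k(x,a)}} + \frac{2\iota}{3 N^k(x,a)},
\]
multiply by $V_2(y;w)-V_1(y;w)\in[0,H]$, and sum over $y\in\Scal$. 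The key step is to apply the AM--GM inequality $\sqrt{uv}\le \tfrac12 u+\tfrac12 v$ in the form
\[
    \sqrt{\frac{2\Pbb(y)\,\iota}{N}}\,(V_2(y)-V_1(y))
    \;=\; \sqrt{\tfrac{\Pbb(y)(V_2(y)-V_1(y))}{H}}\cdot\sqrt{\tfrac{2H\iota(V_2(y)-V_1(y))}{N}}
    \;\le\; \frac{\Pbb(y)(V_2(y)-V_1(y))}{2H} + \frac{H\iota(V_2(y)-V_1(y))}{N}.
\]
Summing the first piece over $y$ produces exactly $\tfrac{1}{2H}\,\Pbb(V_2-V_1)\bigr|_{x,a,w}$, which is swallowed by the target $\tfrac{1}{H}\Pbb(V_2-V_1)\bigr|_{x,a,w}$. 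Summing the second piece, together with the linear correction term $\tfrac{2\iota}{3N}(V_2-V_1)$, is bounded using $\sum_y(V_2(y)-V_1(y))\le S\cdot H$ to give a remainder of the form $\tfrac{cH^2 S\iota}{N^k(x,a)}$ with an absorbable constant. Inequality (2) is obtained identically, starting from \eqref{eq:GhatP} in place of \eqref{eq:GP}, which is why the constant becomes $3$ rather than $2$ (the residual $\tfrac{7\iota}{3N}$ instead of $\tfrac{2\iota}{3N}$ contributes an extra $\tfrac{H^2 S\iota}{N}$).

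For inequality (3) I would use the same pointwise Bernstein estimate from \eqref{eq:GP} but applied to the function $(V_2-V_1)^2 \in [0,H^2]$. The AM--GM split this time is
\[
    \sqrt{\tfrac{2\Pbb(y)\iota}{N}}\,(V_2(y)-V_1(y))^2
    \;\le\; \tfrac12 \Pbb(y)(V_2(y)-V_1(y))^2 + \tfrac{\iota}{N}(V_2(y)-V_1(y))^2,
\]
after which summing over $y$ yields $\tfrac12 \Pbb(V_2-V_1)^2\bigr|_{x,a,w}$ plus a remainder $\tfrac{\iota}{N}\sum_y(V_2(y)-V_1(y))^2\le \tfrac{H^2 S\iota}{N}$; adding the linear contribution $\tfrac{2\iota}{3N}\sum_y (V_2(y)-V_1(y))^2\le \tfrac{2H^2 S\iota}{3N}$ gives the stated $\tfrac{2H^2 S\iota}{N^k(x,a)}$ remainder. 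Inequality (4) then follows from (3) and the decomposition
\[
    \bigl\|V_2-V_1\bigr\|^2_{\widehat{\Pbb}^k}\bigr|_{x,a,w}
    \;=\; \Pbb(V_2-V_1)^2\bigr|_{x,a,w}
    \;+\; (\widehat{\Pbb}^k-\Pbb)(V_2-V_1)^2\bigr|_{x,a,w},
\]
bounding the second summand by (3).

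The calculations are routine; the only minor obstacle is the constant bookkeeping, in particular making sure that the linear residual from the $\tfrac{2\iota}{3N}$ (resp.\ $\tfrac{7\iota}{3N}$) term in \eqref{eq:GP}/\eqref{eq:GhatP} combines with the AM--GM remainder to fit inside the stated constants $2$ and $3$. Note that $w$ plays no active role in these computations: the bounds hold pointwise in $y$, so the preference vector only enters through the fixed pair of functions $V_1(\cdot;w),V_2(\cdot;w)$ and the conclusions are uniform in $w$.
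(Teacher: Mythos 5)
Your proposal is correct and follows essentially the same route as the paper's proof: pointwise Bernstein bounds on $|\widehat{\Pbb}^k(y\mid x,a)-\Pbb(y\mid x,a)|$ from events \eqref{eq:GP} and \eqref{eq:GhatP}, an AM--GM split of the square-root term into a piece absorbed by the main term $\frac1H\Pbb(V_2-V_1)$ (resp.\ $\frac1H\widehat{\Pbb}^k(V_2-V_1)$, $\Pbb(V_2-V_1)^2$) plus an $O(H^2S\iota/N^k(x,a))$ remainder via $\sum_y (V_2(y;w)-V_1(y;w))\le SH$, and part (4) deduced from part (3). One small correction: the displayed decomposition in your argument for (4) should be an inequality rather than an equality, since $\norm{V_2-V_1}^2_{\widehat{\Pbb}^k}$ is a one-step variance and only satisfies $\norm{V_2-V_1}^2_{\widehat{\Pbb}^k}\le \widehat{\Pbb}^k(V_2-V_1)^2 = \Pbb(V_2-V_1)^2+(\widehat{\Pbb}^k-\Pbb)(V_2-V_1)^2$; the rest of your argument is unaffected.
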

\begin{proof}
For simplicity in this proof we denote $p(y) := \Pbb(y \mid x,a)$ and $\hat{p}(y) := \widehat{\Pbb}^k(y \mid x,a)$.

For the first inequality,
\begingroup
\allowdisplaybreaks
\begin{align*}
    &\ \abs{ \eval{ (\widehat{\Pbb}^k - \Pbb) (V_2 - V_1) }_{x,a,w} } 
     \le \sum_{y\in \Scal} \abs{\hat{p}^k(y) - p(y)}\bracket{V_2(y;w) - V_1(y;w)} \\
    &\le \sum_{y\in \Scal} \left(\sqrt{\frac{2p(y)\iota}{N^k(x,a)}} +\frac{2\iota}{3N^k(x,a)} \right)\bracket{V_2(y;w) - V_1(y;w)}  \qquad (\text{since \eqref{eq:GP} holds}) \\
    &\le \sum_{y\in \Scal} \left(\frac{p(y)}{H} + \frac{H\iota}{2N^k(x,a)} +\frac{2\iota}{3N^k(x,a)} \right)\bracket{V_2(y;w) - V_1(y;w)} \qquad(\text{use $\sqrt{ab} \le \half (a + b)$}) \\
    &\le \sum_{y\in \Scal} \frac{p(y)}{H}\bracket{V_2(y;w) - V_1(y;w)} +  \frac{2 H^2 S\iota}{N^k(x,a)}  \\
    &= \frac{1}{H}\eval{\Pbb\bracket{V_2 - V_1} }_{x, a,w}  + \frac{ 2  H^2 S\iota}{N^k(x,a)}.
\end{align*}
\endgroup

The second inequality is proved in a same way as the first one, except that in the second step we use event \eqref{eq:GhatP} instead of \eqref{eq:GP}.

For the third inequality,
\begin{align*}
    &\abs{\eval{(\widehat{\Pbb}^k - \Pbb)(V_2 - V_1 )^2}_{x,a,w}} \le 
    \sum_{y} \abs{\hat{p}(y) - p(y)} (V_2(y;w) - V_1(y;w) )^2 \\
    &\le \sum_{y} \bracket{ \sqrt{\frac{2p(y) \iota}{N^k(x,a)} }  + \frac{2\iota}{3N^k(x,a)} } (V_2(y;w) - V_1(y;w) )^2 \qquad (\text{by \eqref{eq:GP}})\\
    &\le \sum_{y} \bracket{ p(y) + \frac{\iota}{2N^k(x,a)}  + \frac{2\iota}{3N^k(x,a)} } (V_2(y;w) - V_1(y;w) )^2 \qquad (\text{use $\sqrt{ab} \le a/2 + b/2$})\\
    &\le \eval{\Pbb (V_2 - V_1)^2}_{x,a,w} + \frac{2H^2S\iota}{N^k(x,a)}.
\end{align*}

For the fourth inequality, notice the following
\[
    \norm{ V_2 - V_1  }^2_{\widehat{\Pbb}^k}
    \le \widehat{\Pbb}^k (V_2 - V_1)^2 \le {\Pbb} (V_2 - V_1)^2 +  (\widehat{\Pbb}^k-\Pbb) (V_2 - V_1)^2,
\]
and use the third inequality.

\end{proof}

\subsection{Proof of the Hoeffding Variant.}
In this part, we follow notations in Algorithm \ref{alg:online} and prove the first claim in Theorem \ref{thm:regret-informal}.

\paragraph{Bonus.}
We set the bonus function in Algorithm \ref{alg:online} to be 
\begin{equation}\label{eq:online-bonus}
    b^k(x, a) := 2\epsilon + \sqrt{\frac{(d\land S)H^2\iota}{2N^k(x, a)}},\qquad \iota := \log\frac{6H^2SAK}{\delta\epsilon}
\end{equation}
where $\epsilon$ will be set as $\epsilon = 1/K$.

\begin{lem}[Optimistic value function estimation]\label{lem:optimistic-value-hoeff}
If event \eqref{eq:GH} holds, then
\(V^*_h (x;w) \le V^k_h (x;w)\).
\end{lem}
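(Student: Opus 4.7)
The plan is to prove the bound by backward induction on $h$, from $H+1$ down to $1$, holding $x$ and $w$ arbitrary. The base case is immediate because both $V^*_{H+1}(x;w)$ and $V^k_{H+1}(x;w)$ are defined to be zero.

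For the inductive step, assume $V^*_{h+1}(y;w) \le V^k_{h+1}(y;w)$ for every $y$ and every $w$, and fix a state-action pair $(x,a)$. I would first handle the trivial case where the clipping in line~\ref{line:add-bonus} is active, i.e.\ $Q^k_h(x,a;w) = H$: since $V^*_h(x;w) \le H$ always, this case is immediate, so it suffices to argue on the event that $Q^k_h$ equals its unclipped value. Using the Bellman equation for $Q^*_h$ and the algorithmic definition of $Q^k_h$, I would decompose
\[
    Q^k_h(x,a;w) - Q^*_h(x,a;w)
    = b^k(x,a) + \eval{\widehat{\Pbb}^k (V^k_{h+1} - V^*_{h+1})}_{x,a,w} + \eval{(\widehat{\Pbb}^k - \Pbb) V^*_{h+1}}_{x,a,w}.
\]
The middle term is nonnegative by the inductive hypothesis (since $\widehat{\Pbb}^k(\cdot \mid x,a)$ is a probability distribution), so it suffices to show that $b^k(x,a)$ dominates $|\eval{(\widehat{\Pbb}^k - \Pbb) V^*_{h+1}}_{x,a,w}|$.

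This last bound is exactly what event \eqref{eq:GH} provides: on $G_H$, the transition estimation error against the optimal value function is controlled, uniformly over $(x,a,h,k,w)$, by $2\epsilon + \sqrt{(d\wedge S)H^2\iota/(2N^k(x,a))}$, which matches the definition of $b^k(x,a)$ in \eqref{eq:online-bonus} with the choice $\epsilon = 1/K$. Combining the three terms yields $Q^k_h(x,a;w) \ge Q^*_h(x,a;w)$ for every $(x,a)$, and then taking the maximum over $a$ on both sides gives $V^k_h(x;w) \ge V^*_h(x;w)$, completing the induction.

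There is no real obstacle here: the work of this lemma is already absorbed into the definition of the good event $G_H$ (which is why Lemma~\ref{lem:good-event-prob} is the genuinely hard statement, requiring the covering argument over $w$ and the value-function cover). The only minor subtlety is that the induction must be stated uniformly over $w$ so that the inductive hypothesis can be applied at the new preference $w$ when computing $\widehat{\Pbb}^k V^k_{h+1}|_{x,a,w} = \sum_y \widehat{\Pbb}^k(y\mid x,a) V^k_{h+1}(y;w)$, but this is automatic since the Bellman backup does not mix different preferences.
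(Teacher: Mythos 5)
Your proof is correct and follows essentially the same route as the paper's: backward induction on $h$, splitting off the clipped case $Q^k_h = H$, and using the inductive hypothesis together with event \eqref{eq:GH} to show the bonus dominates the estimation error $|\eval{(\widehat{\Pbb}^k-\Pbb)V^*_{h+1}}_{x,a,w}|$. The only cosmetic difference is that you establish the inequality at the $Q$-level for every action and then take a maximum, whereas the paper works only at the action $a=\arg\max_a Q^*_h(x,a;w)$; both are standard and equivalent here.
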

\begin{proof}
We prove the lemma by induction over $h \in [H]$.
For $H+1$, 
\(
    V^k_{H+1} (x;w) = V^*_{H+1} (x;w) = 0.
\)
Now suppose $V^k_{h+1} (x;w) \ge V^*_{h+1} (x;w)$ holds for all $(x,k,w) \in \Scal\times [K]\times\Wcal$, and consider $h$.
Let $a = \arg\max_a Q^*_h (x,a;w)$, then by definition we have
\begin{align*}
    V^*_h (x;w) 
    &= Q^*_h (x,a;w) = \abracket{w, \rB_h(x,a)} + \eval{\Pbb V^*_{h+1}}_{x,a,w}, \\
    V^k_h (x;w) 
    & \ge Q^k_h (x,a;w) = H \land \Big( b^k(x,a) + \abracket{w, \rB_h(x,a)} + \eval{\widehat{\Pbb}^k V^k_{h+1}}_{x,a,w} \Big).
\end{align*}
If $Q^k_h (x,a;w) = H$, then $V^k_h (x;w) \ge H \ge V^*_h (x;w)$;
otherwise we have
\begin{align*}
     V^k_h (x;w) - V^*_h (x;w) 
     &\ge b^k(x,a) + \eval{\widehat{\Pbb}^k V^k_{h+1}}_{x,a,w} - \eval{\Pbb V^*_{h+1}}_{x,a;w} \\
     &\ge b^k(x,a) + \eval{\widehat{\Pbb}^k V^*_{h+1}}_{x,a,w} - \eval{\Pbb V^*_{h+1}}_{x,a;w} \qquad (\text{by induction hypothesis})\\
     &\ge 0, \qquad (\text{by \eqref{eq:GH}})
\end{align*}
which completes the induction and finishes the proof.
\end{proof}

\begin{lem}[Per-episode regret decomposition]\label{lem:per-episode-regret-hoeff}
If events \eqref{eq:GH}, \eqref{eq:GP} hold, then for every $k$, it holds:
\begin{equation*}
    V_1^*(x_1;w^k) - V_1^{\pi^k}(x_1;w^k)\lesssim \Ebb_{\Pbb, \pi^k} \sum_{h=1}^H H \land \bracket{\epsilon + \sqrt{\frac{(d\land S)H^2 \iota}{N^k(x,a)} } + \frac{H^2 S\iota}{N^k(x_h,a_h)} }.
\end{equation*}
\end{lem}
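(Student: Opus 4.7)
The plan is to combine optimism (Lemma~\ref{lem:optimistic-value-hoeff}) with a telescoping argument along the trajectory sampled by $\pi^k$ under the true transition $\Pbb$. Using optimism, the per-episode regret is upper bounded by $V^k_1(x_1; w^k) - V^{\pi^k}_1(x_1; w^k)$. I would define $\delta_h := V^k_h(x_h; w^k) - V^{\pi^k}_h(x_h; w^k)$, subtract the two Bellman equations (noting that $V^{\pi^k}_h$ obeys the exact Bellman equation under $\Pbb$ while $V^k_h$ is an optimistic backup under $\widehat{\Pbb}^k$ with bonus $b^k$), and discard the clipping at $H$ since it only decreases $V^k_h$. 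With $a_h$ the greedy action of $\pi^k$, this gives
\[
\delta_h \le b^k(x_h,a_h) + \eval{\widehat{\Pbb}^k V^k_{h+1}}_{x_h,a_h,w^k} - \eval{\Pbb V^{\pi^k}_{h+1}}_{x_h,a_h,w^k}.
\]

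Next I would split the transition-estimation error as
\[
(\widehat{\Pbb}^k - \Pbb) V^k_{h+1} = (\widehat{\Pbb}^k - \Pbb) V^*_{h+1} + (\widehat{\Pbb}^k - \Pbb)(V^k_{h+1} - V^*_{h+1}).
\]
For the first piece, event \eqref{eq:GH} yields a uniform bound of order $\epsilon + \sqrt{(d\land S)H^2\iota / N^k(x_h,a_h)}$ --- precisely the scaling captured by $b^k$ as set in \eqref{eq:online-bonus}. For the second piece, since $0 \le V^*_{h+1} \le V^k_{h+1} \le H$ by Lemma~\ref{lem:optimistic-value-hoeff}, Lemma~\ref{lem:lower-order-terms}(1) gives
\[
\bigl|\eval{(\widehat{\Pbb}^k-\Pbb)(V^k_{h+1}-V^*_{h+1})}_{x_h,a_h,w^k}\bigr| \le \tfrac{1}{H}\eval{\Pbb(V^k_{h+1}-V^*_{h+1})}_{x_h,a_h,w^k} + \frac{2H^2S\iota}{N^k(x_h,a_h)},
\]
which contributes the lower-order $H^2S\iota / N^k$ term in the target bound. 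Since $V^{\pi^k}_{h+1} \le V^*_{h+1}$, we have $\Pbb(V^k_{h+1}-V^*_{h+1}) \le \Pbb(V^k_{h+1}-V^{\pi^k}_{h+1}) = \Ebb[\delta_{h+1}\mid x_h,a_h]$, so everything collapses into a clean recursion
\[
\delta_h \le \bigl(1+\tfrac{1}{H}\bigr)\Ebb[\delta_{h+1}\mid x_h,a_h] + \Delta^k_h(x_h,a_h),
\]
where $\Delta^k_h(x_h,a_h)$ combines $b^k(x_h,a_h)$, the $G_H$-term, and the $H^2S\iota/N^k$ term.

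Finally, I would unroll this recursion from $h=1$ through $H$, taking expectations under $(\Pbb, \pi^k)$. The compounding factor telescopes into $(1+1/H)^H \le e$, which is absorbed by the $\lesssim$ notation. Each individual per-step error is also trivially bounded by $H$ (since $\delta_h \le H$ always), yielding the $H \land (\cdot)$ clipping in the statement. Substituting $b^k$ from \eqref{eq:online-bonus} and collecting terms produces exactly the claimed bound.

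The main obstacle is the self-referential nature of $(\widehat{\Pbb}^k-\Pbb)V^k_{h+1}$: the value $V^k_{h+1}$ is constructed from the same data used to build $\widehat{\Pbb}^k$, so a naive concentration inequality does not apply. The ``add-and-subtract $V^*_{h+1}$'' trick resolves this --- the main part is controlled uniformly over $w$ by the covering-based event \eqref{eq:GH}, while the correction term is a non-negative function bounded by $H$ for which the coarse Lemma~\ref{lem:lower-order-terms}(1) bound suffices and only produces a lower-order tail. A minor technical point is verifying that the $1/H$ factor in Lemma~\ref{lem:lower-order-terms}(1) is exactly what is needed to keep $(1+1/H)^H$ bounded after unrolling, which is the reason the constant $H$ appears in that lemma and not some smaller quantity.
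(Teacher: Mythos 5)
Your proposal is correct and follows essentially the same route as the paper's proof: bound the regret by $V^k_1 - V^{\pi^k}_1$ via optimism, decompose the transition error as $(\widehat{\Pbb}^k-\Pbb)V^*_{h+1} + (\widehat{\Pbb}^k-\Pbb)(V^k_{h+1}-V^*_{h+1})$, control the first term by event \eqref{eq:GH} and the second by Lemma~\ref{lem:lower-order-terms}, then unroll the resulting $(1+1/H)$ recursion and use $(1+1/H)^H\le e$ together with the trivial bound $\delta_h\le H$ for the clipping. No gaps.
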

\begin{proof}
We first note the following recursion,
\begingroup
\allowdisplaybreaks
\begin{align*}
    &\ V^k_h(x;w) - V^{\pi^k}_h(x;w) 
    = Q^k_h(x,a;w) - Q^{\pi^k}_h(x,a;w) \qquad (\text{set $a = \pi^k_h(x)$}) \\
    &\le  \eval{ b^k +\widehat{\Pbb}^k V^k_{h+1} - \Pbb V^{\pi^k}_{h+1} }_{x,a,w} \\
    &= \eval{ b^k + (\widehat{\Pbb}^k-\Pbb)V^*_{h+1} + (\widehat{\Pbb}^k-\Pbb) (V^k_{h+1}-V^*_{h+1}) + \Pbb(  V^k_{h+1} - V^{\pi^k}_{h+1}) }_{x,a,w} \\
    &\le \eval{ 2 b^k  + (\widehat{\Pbb}^k-\Pbb) (V^k_{h+1}-V^*_{h+1}) + \Pbb(  V^k_{h+1} - V^{\pi^k}_{h+1}) }_{x,a,w} \quad (\text{by \eqref{eq:GH}}) \\
    &\le \eval{ 2 b^k + \frac{2H^2 S\iota}{N^k}  + \bracket{1+\frac{1}{H}} \Pbb(  V^k_{h+1} - V^{\pi^k}_{h+1}) }_{x,a,w} \quad (\text{by Lemma~\ref{lem:lower-order-terms}}),
\end{align*}
\endgroup
and by solving the recursion and noting that $(1+1/H)^H \le e$ we have 
\begin{align*}
    V^k_1(x_1;w^k) - V_1^{\pi^k}(x_1; w^k)
    &\le e \cdot \Ebb_{\Pbb, \pi^k} \sum_{h=1}^H \bracket{2b^k(x_h,a_h) + \frac{2H^2 S\iota}{N^k(x_h,a_h)}} \\
    &\lesssim \Ebb_{\Pbb, \pi^k} \sum_{h=1}^H \bracket{\epsilon + \sqrt{\frac{(d\land S)H^2 \iota}{N^k(x,a)} } + \frac{H^2 S\iota}{N^k(x_h,a_h)} }.
\end{align*}
The claim is proved by using Lemma~\ref{lem:optimistic-value-hoeff} and noting that the value functions are bounded by $H$.
\end{proof}

Now we are ready to prove the first part of Theorem~\ref{thm:regret-informal}.
\begin{thm}[Restatement of Theorem \ref{thm:regret-informal}, Hoeffding part]\label{thm:regret-formal}
    Consider Algorithm~\ref{alg:online} with bonus function \eqref{eq:online-bonus} and $\epsilon = 1/K$, then for any sequence of the incoming preferences $\set{w^1,w^2,\dots,w^K}$,
    the regret~\eqref{eq:regret} of Algorithm~\ref{alg:online} is bounded by
    \[
     \regret (K) \lesssim \sqrt{(d\land S) \cdot H^3 SA K\cdot \log({HSAK}/{\delta})} + H^2 S^2 A \log^2 (HSAK/\delta)
    \]
    with probability at least $1-\delta$.
\end{thm}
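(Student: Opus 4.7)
The plan is to obtain the global bound by summing the per-episode decomposition of Lemma~\ref{lem:per-episode-regret-hoeff} over $k=1,\dots,K$ and then controlling the resulting sums over visit counts by standard pigeonhole-type arguments, mirroring the \texttt{UCBVI} analysis of \citep{azar2017minimax,zanette2019tighter}. First I would invoke Lemma~\ref{lem:good-event-prob} (after rescaling $\delta$) to ensure that all good events \eqref{eq:GH}, \eqref{eq:GP}, \eqref{eq:GhatP}, \eqref{eq:GN} hold simultaneously with probability at least $1-\delta$, so that Lemmas~\ref{lem:optimistic-value-hoeff} and~\ref{lem:per-episode-regret-hoeff} apply uniformly in $k,h,w$. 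Summing the per-episode bound then gives
\[
\regret(K) \lesssim \sum_{k=1}^K \Ebb_{\Pbb,\pi^k}\sum_{h=1}^H H \land \Big( \epsilon + \sqrt{\tfrac{(d\land S)H^2\iota}{N^k(x_h,a_h)}} + \tfrac{H^2 S\iota}{N^k(x_h,a_h)} \Big).
\]

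Next I would replace the trajectory expectations $\Ebb_{\Pbb,\pi^k}$ by the realized state-action visits along the actual sampled trajectory. The episodic differences form a bounded (by $H$) martingale difference sequence, so a single application of Azuma--Hoeffding contributes only a lower-order $\Ocal(H\sqrt{K\log(1/\delta)})$ term. After this substitution, the regret is reduced to sums indexed by the visited pairs $(x_h^k,a_h^k)$.

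The core analytic step is then the two standard pigeonhole inequalities
\[
\sum_{k=1}^K\sum_{h=1}^H \frac{1}{\sqrt{N^k(x_h^k,a_h^k)}} \lesssim \sqrt{HSAK}, \qquad \sum_{k=1}^K\sum_{h=1}^H \frac{1}{N^k(x_h^k,a_h^k)} \lesssim SA\log K,
\]
which follow by summing $\sum_{j=1}^{N^K(x,a)} 1/\sqrt{j} \lesssim \sqrt{N^K(x,a)}$ and $\sum_{j=1}^{N^K(x,a)} 1/j \lesssim \log K$ over $(x,a)\in\Scal\times\Acal$, and for the first inequality applying Cauchy--Schwarz together with $\sum_{x,a} N^K(x,a) = HK$; pairs with $N^k(x,a)=0$ are absorbed into the $H\land(\cdot)$ cap and contribute only $\Ocal(HSA)$. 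Plugging these bounds back in, the $1/\sqrt{N^k}$ term yields the dominant $\sqrt{(d\land S)H^3 SAK\iota}$, the $1/N^k$ term yields the lower-order $H^2 S^2 A\iota \log K \lesssim H^2 S^2 A\log^2(HSAK/\delta)$, and the $\epsilon$ contribution is $KH\epsilon = H$ once $\epsilon = 1/K$ is chosen, which is absorbed.

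The main obstacle is the adversarial nature of $\{w^k\}_{k=1}^K$: naively, concentration bounds like \eqref{eq:GH} would only hold for a single preference fixed in advance, whereas here $w^k$ may depend on all past randomness. This is resolved entirely inside Lemma~\ref{lem:good-event-prob} through a covering argument: either take an $(\epsilon/H)$-net over the preference simplex, of size at most $(3H/\epsilon)^d$, or an $\epsilon$-net over the candidate value-function ball $\{\vB\in\Rbb^S:\norm{\vB}_\infty\le H\}$, of size at most $(3H/\epsilon)^S$, union-bounding over whichever is smaller together with $(x,a,h,k)\in\Scal\times\Acal\times[H]\times[K]$. Paying a single $\log(3H/\epsilon)$ per dimension yields the characteristic $\min\{d,S\}$ factor inside $\iota$. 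Once the concentration events are made uniform in $w$, the per-episode bound automatically applies to any adversarially chosen $w^k$ and the remaining summation calculation is routine.
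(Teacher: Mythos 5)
Your proposal is correct and follows the same skeleton as the paper's proof: good events via Lemma~\ref{lem:good-event-prob} (with the covering argument over preferences of size $(3H/\epsilon)^d$ or over value functions of size $(3H/\epsilon)^S$ supplying the $\min\{d,S\}$ factor), optimism via Lemma~\ref{lem:optimistic-value-hoeff}, the per-episode decomposition of Lemma~\ref{lem:per-episode-regret-hoeff}, and then a summation over episodes. The one place you genuinely diverge is the summation step. The paper never passes to realized trajectories: since the regret \eqref{eq:regret} is already defined through expected values $V^{\pi^k}_1$, it keeps everything in expectation, writes the sum against the visitation measure $w^k_h(x,a)=\Pbb\{(x_h,a_h)=(x,a)\mid\pi^k,\Pbb\}$, and uses the event \eqref{eq:GN} (Lemma F.4 of \citealp{dann2017unifying}) to lower-bound the realized count $N^k(x,a)$ by $\frac12\sum_{j<k}w^j(x,a)-H\iota$ on the set $L^k$ of sufficiently-visited pairs, after which the ``integration trick'' is performed on the deterministic weights $w^j(x,a)$. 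Your route instead converts the expectations to realized visits via Azuma--Hoeffding and then applies the pigeonhole bounds on realized counts. Both are standard and both yield the stated bound; the paper's version avoids the extra martingale term and the need to argue about within-episode repeat visits, while yours avoids introducing $\mathrm{G_N}$. Two small imprecisions in your version, neither fatal: (a) the per-episode martingale increment is bounded by $H^2$ (a sum of $H$ terms each capped at $H$), not $H$; you either accept the $\sqrt{H^4K\iota}$ term (which is \emph{not} always dominated by $\sqrt{(d\land S)H^3SAK\iota}$ unless $H\lesssim SA$) or decompose the martingale step-by-step over the $HK$ steps to get $\sqrt{H^3K\iota}$, which is always lower order; and (b) because the MDP is stationary, a pair $(x,a)$ may be visited several times within one episode while $N^k(x,a)$ is frozen, so the pigeonhole sums need the usual first-visit/doubling adjustment (costing only an extra $\Ocal(HSA)$-type term), which is exactly the bookkeeping that \eqref{eq:GN} handles in the paper.
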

\begin{proof}
    First by Lemma~\ref{lem:per-episode-regret-hoeff} we have 
    \begin{align*}
        &\regret(K) = \sum_{k=1}^K V^*_1(x_1; w^k) - V^{\pi^k}_1(x_1; w^k) \\
        &\lesssim \sum_{k=1}^K\sum_{h=1}^H \sum_{x,a} w^k_h(x,a) \cdot H \land \Bigg(  \epsilon + \frac{H^2S\iota}{N^k(x,a)} + \sqrt{\frac{(d\land S)H^2\iota}{N^k(x,a)}}  \Bigg) \\
        &\lesssim 
        \underbrace{ \sum_{k=1}^K\sum_{h=1}^H \sum_{x,a\notin L^k} w^k_h(x,a) \cdot H }_{(i)}
        + \underbrace{ \sum_{k=1}^K\sum_{h=1}^H \sum_{x,a\in L^k} w^k_h(x,a) \cdot \epsilon }_{(ii)} \\
        &\ + \underbrace{ \sum_{k=1}^K\sum_{h=1}^H \sum_{x,a\in L^k} w^k_h(x,a) \cdot \frac{HS\iota}{N^k(x,a)} }_{(iii)} 
        + \underbrace{\sum_{k=1}^K\sum_{h=1}^H \sum_{x,a\in L^k} w^k_h(x,a) \cdot \sqrt{\frac{(d\land S)H^2 \iota}{N^k(x,a)}} }_{(iv)},
    \end{align*}
where we set 
\begin{equation}\label{eq:good-state-actions-hoeff}
    L^k := \big\{(x,a) : \sum_{j < k}w^j(x,a) \ge 2H \iota \big\}.
\end{equation}
We next bound each terms separately.

\emph{Term (i).} By \eqref{eq:good-state-actions-hoeff} we have that
\(
\text{(i)} =  \sum_{k=1}^K \sum_{x,a\notin L^k} w^k(x,a) \cdot H \lesssim H^2 SA \iota.
\)

\emph{Term (ii).} By definition we have that
\(
    \text{(ii)} =  \sum_{k=1}^K \sum_{x,a\in L^k} w^k(x,a) \cdot \epsilon \lesssim HSA  K \epsilon.
\)

\emph{Term (iii).}
The third term is bounded as follows:
\begingroup
\allowdisplaybreaks
\begin{align*}
    \text{(iii)} 
    &= \sum_{k=1}^K \sum_{x,a} w^k(x,a) \cdot  \frac{H^2 S\iota}{N^k(x,a)} \cdot \ind{\sum_{j<k}w^j(x,a) \ge 2H\iota} \qquad(\text{by \eqref{eq:good-state-actions-hoeff}})\\
    &\lesssim \sum_{x,a}\sum_{k=1}^K w^k(x,a) \cdot \frac{H^2 S\iota}{\sum_{j<k}w^j(x,a) - H\iota} \cdot \ind{\sum_{j<k}w^j(x,a) \ge 2H\iota} \qquad(\text{by \eqref{eq:GN}}) \\
    &\lesssim H^2 S\iota\cdot SA \cdot \iota = H^2 S^2 A \iota^2. \qquad(\text{integration trick})
\end{align*}
\endgroup

\emph{Term (iv).}
The fourth term is bounded as follows:
\begin{align*}
    \text{(iv)} 
    &= \sum_{k=1}^K \sum_{x,a} w^k(x,a) \cdot \sqrt{\frac{(d\land S)H^2 \iota}{N^k(x,a)}} \cdot \ind{\sum_{j<k}w^j(x,a) \ge 2H\iota} \qquad(\text{by \eqref{eq:good-state-actions-hoeff}}) \\
    &\lesssim \sum_{x,a} \sum_{k=1}^K  w^k(x,a) \cdot \sqrt{\frac{(d\land S)H^2 \iota}{ \sum_{j<k}w^j(x,a) - H\iota }} \cdot \ind{\sum_{j<k}w^j(x,a) \ge 2H\iota} \qquad(\text{by \eqref{eq:GN}}) \\
    &\lesssim \sqrt{(d\land S)H^2 \iota } \cdot SA \cdot \sqrt{HK} = \sqrt{ (d\land S)H^3 SA K \iota  }.\qquad(\text{integration trick})
\end{align*}

Summing up these terms, setting $\epsilon = 1/K$ and applying Lemma~\ref{lem:good-event-prob}, we complete the proof.

\end{proof}

\begin{algorithm}[H]
    \caption{\MOVI (Bernstein Variant)}
    \label{alg:online-Bernstein}
    \begin{algorithmic}[1]
    \STATE initialize history $\Hcal^0 = \emptyset$, $\iota =  \log (HSAK/\delta)$
    \FOR{episode $k=1,2,\dots,K$}
        \STATE $N^k(x,a),\ \widehat{\Pbb}^k (y \mid x,a)\leftarrow \EmpiTrans(\Hcal^{k-1})$
        \STATE receive a preference $w^k$
         \STATE set $V^k_{H+1}(x ; w^k) = \underline{V}^k_{H+1}(x ; w^k) = 0$ 
        \FOR{step $h = H, H-1, \dots , 1$}
            \FOR{$(x,a) \in \Scal \times \Acal$}
        \STATE compute bonus \( b^k_h(x,a) :\eqsim \sqrt{\frac{(d\land S) \iota}{N^k(x,a)}}\cdot \bracket{ \norm{{V}^k_{h+1}}_{\widehat{\Pbb}^k} + \norm{{V}^k_{h+1} -\underline{V}^k_{h+1} }_{\widehat{\Pbb}^k} }  + \frac{(d\land S)H \iota}{N^k(x,a)}\) 
        \STATE compute bonus \( a^k_h(x,a) :\eqsim \sqrt{\frac{(d\land S) \iota}{N^k(x,a)}}\cdot \bracket{ \norm{\underline{V}^k_{h+1}}_{\widehat{\Pbb}^k} + \norm{{V}^k_{h+1} -\underline{V}^k_{h+1} }_{\widehat{\Pbb}^k} }  + \frac{(d\land S)H \iota}{N^k(x,a)}\) 
                \STATE $Q^k_h(x,a ; w^k) = \min\set{H, \abracket{ w^k, \rB_h(x, a) } + b^k(x,a) + {\widehat{\Pbb}^k_h V^k_{h+1}}\big|_{x,a, w^k} }$
                \STATE $V^k_h(x ; w^k)=\max_{a\in \Acal}Q^k_h(x,a; w^k)$
                \STATE $\pi^k_h(x) = \arg\max_a Q^k_h(x,a ; w^k)$
                \STATE $\underline{Q}^k_h(x,a ; w^k) = \max\set{0, \abracket{ w^k, \rB_h(x, a) } - a^k(x,a) + {\widehat{\Pbb}^k_h \underline{V}^k_{h+1}}\big|_{x,a, w^k} }$
                \STATE $\underline{V}^k_h(x ; w^k)=\underline{Q}^k_h(x,\pi^k_h(x); w^k)$
            \ENDFOR
        \ENDFOR
        \STATE receive initial state $x_1^k = x_1$
        \FOR{step $h=1,2,\dots,H$}
            \STATE take action $a^k_h = \pi^k_h(x^k_h)$, and obtain a new state $x^{k}_{h+1}$
        \ENDFOR 
        \STATE update history $\Hcal^k = \Hcal^{k-1} \cup \{x^k_h, a^k_h\}_{h=1}^H$
    \ENDFOR
    \end{algorithmic}
\end{algorithm}

\subsection{Proof of Bernstein Variant}
In this part, we follow notations in Algorithm \ref{alg:online-Bernstein} and prove the second claim in Theorem \ref{thm:regret-informal}.

For a (estimated) transition kernel $\Pbb$ and a (estimated) value function $V_h(y ; w)$, define its one-step variance as 
\[
\eval{ \norm{V_h}^2_{\Pbb} }_{x,a,w} := \sum_{y \in \Scal} \Pbb(y \mid x,a) \bracket{ V_h(y; w) - \eval{\Pbb V_h}_{x,a,w} }^2.
\]
Moreover, we often omit $\eval{}_{x,a,w}$ and simply write $\norm{V_h}^2_{\Pbb}$ when $x,a,w$ are clear from the context.

\paragraph{Bonus.}
Recall that 
\begin{equation}\label{eq:online-bonus-bern}
    \begin{aligned}
    a^k_h(x,a,w)
    &:= 2\epsilon + \sqrt{\frac{2(d\land S) \iota}{N^k(x,a)}}\cdot \bracket{ \norm{\underline{V}^k_{h}}_{\widehat{\Pbb}^k} + \norm{{V}^k_{h} -\underline{V}^k_{h} }_{\widehat{\Pbb}^k} }  + \frac{7(d\land S)H \iota}{3N^k(x,a)}, \\
    b^k_h(x,a,w)
    &:= 2\epsilon + \sqrt{\frac{2(d\land S) \iota}{N^k(x,a)}}\cdot \bracket{ \norm{{V}^k_{h}}_{\widehat{\Pbb}^k} + \norm{{V}^k_{h} -\underline{V}^k_{h} }_{\widehat{\Pbb}^k} }  + \frac{7(d\land S)H \iota}{3N^k(x,a)},
    \end{aligned}
\end{equation}
where $\epsilon$ is a parameter to be determined (we will set $\epsilon = 1/K$).

\begin{lem}[Optimistic value estimation]\label{lem:optimistic-value-bern}
    Under event \eqref{eq:GhatB}, we have that for every $x,a,k,h,w$, 
    \begin{itemize}
        \item \(\abs{ \eval{(\widehat{\Pbb}^k - \Pbb)V^*_{h+1}}_{x,a,w} } \le b^k_h(x,a,w) \land a^k_h(x,a,w);  \)
        \item \(
            \underline{V}^k_h(x;w) \le V^*_h(x;w) \le {V}^k_h(x;w)
            \)
    \end{itemize}
   \end{lem}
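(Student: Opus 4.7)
The plan is to prove the two claims by induction on $h$, from $h=H+1$ downward, where parts (1) and (2) are proved in tandem so that each can use the sandwich in the inductive hypothesis of the other.

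For the base case $h=H+1$, all three quantities $\underline{V}^k_{H+1}$, $V^*_{H+1}$, $V^k_{H+1}$ equal zero, so part (2) is immediate and part (1) vacuously holds (since the variance on a zero function vanishes and we only need it for $h+1$ anyway). For the inductive step, suppose $\underline{V}^k_{h+1}(x;w) \le V^*_{h+1}(x;w) \le V^k_{h+1}(x;w)$ for every $x,w$. First I would prove part (1) at level $h$ using this sandwich. The event \eqref{eq:GhatB} gives a bound on $|\eval{(\widehat{\Pbb}^k - \Pbb)V^*_{h+1}}_{x,a,w}|$ in terms of $\norm{V^*_{h+1}}_{\widehat{\Pbb}^k}$. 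The key trick is the triangle inequality for the $\widehat{\Pbb}^k$-variance seminorm: since $0 \le V^k_{h+1} - V^*_{h+1} \le V^k_{h+1} - \underline{V}^k_{h+1}$ pointwise by the hypothesis, we get
\[ \norm{V^*_{h+1}}_{\widehat{\Pbb}^k} \le \norm{V^k_{h+1}}_{\widehat{\Pbb}^k} + \norm{V^k_{h+1} - V^*_{h+1}}_{\widehat{\Pbb}^k} \le \norm{V^k_{h+1}}_{\widehat{\Pbb}^k} + \norm{V^k_{h+1} - \underline{V}^k_{h+1}}_{\widehat{\Pbb}^k}, \]
and symmetrically using $\underline{V}^k_{h+1}$ on the left. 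Plugging these into \eqref{eq:GhatB} yields $|\eval{(\widehat{\Pbb}^k - \Pbb)V^*_{h+1}}_{x,a,w}| \le b^k_h(x,a,w) \land a^k_h(x,a,w)$ with the chosen bonuses \eqref{eq:online-bonus-bern}, after absorbing the $2\epsilon$ slack.

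For the upper bound in part (2) at level $h$, I would choose $a^* = \arg\max_a Q^*_h(x,a;w)$ and unfold:
\[
V^k_h(x;w) - V^*_h(x;w) \ge Q^k_h(x,a^*;w) - Q^*_h(x,a^*;w).
\]
If the $\min\{H,\cdot\}$ truncation is active, $V^k_h \ge H \ge V^*_h$ and we are done; otherwise the difference equals $b^k_h + (\widehat{\Pbb}^k - \Pbb) V^*_{h+1} + \widehat{\Pbb}^k(V^k_{h+1} - V^*_{h+1})|_{x,a^*,w}$, where the last term is nonnegative by the inductive hypothesis and the first two combine nonnegatively by part (1) just established. The lower bound is symmetric: let $a' = \pi^k_h(x) = \arg\max_a Q^k_h(x,a;w)$, so $\underline{V}^k_h(x;w) = \underline{Q}^k_h(x,a';w)$, and $V^*_h(x;w) \ge Q^*_h(x,a';w)$. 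If $\underline{Q}^k_h$ is truncated at $0$ then trivially $V^*_h \ge 0 = \underline{V}^k_h$; otherwise $Q^*_h(x,a';w) - \underline{Q}^k_h(x,a';w) = a^k_h - (\widehat{\Pbb}^k - \Pbb) V^*_{h+1} + \widehat{\Pbb}^k(V^*_{h+1} - \underline{V}^k_{h+1})|_{x,a',w} \ge 0$, again by part (1) and the inductive hypothesis.

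The main obstacle I anticipate is bookkeeping around the variance triangle inequality and the truncations, specifically making sure the $\norm{V^k_{h+1} - \underline{V}^k_{h+1}}_{\widehat{\Pbb}^k}$ term in the bonus actually dominates the error $\norm{V^k_{h+1} - V^*_{h+1}}_{\widehat{\Pbb}^k}$ and $\norm{V^*_{h+1} - \underline{V}^k_{h+1}}_{\widehat{\Pbb}^k}$ without picking up extra factors. The clean way to do this is to observe that the $\widehat{\Pbb}^k$-variance seminorm is monotone on nonnegative functions that are pointwise comparable, so the sandwich $0 \le V^*_{h+1} - \underline{V}^k_{h+1} \le V^k_{h+1} - \underline{V}^k_{h+1}$ (and likewise for $V^k_{h+1} - V^*_{h+1}$) lets us replace the unknown $V^*$-gap by the observable $V^k - \underline{V}^k$ gap without losing constants beyond those already absorbed into $b^k_h, a^k_h$ via \eqref{eq:online-bonus-bern}.
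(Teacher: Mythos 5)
Your proof is correct and follows essentially the same route as the paper's: a downward induction in which the sandwich at step $h+1$, the triangle inequality for the seminorm $\norm{\cdot}_{\widehat{\Pbb}^k}$, and event \eqref{eq:GhatB} give the concentration bound $\abs{\eval{(\widehat{\Pbb}^k-\Pbb)V^*_{h+1}}_{x,a,w}} \le b^k_h \land a^k_h$, which then yields optimism/pessimism at step $h$ via the standard argmax unfolding with the truncations at $H$ and at $0$. One caveat, which you share with the paper's own steps \eqref{eq:V*-bound-upper-bar}--\eqref{eq:V*-bound-lower-bar}: the variance seminorm is \emph{not} monotone under pointwise domination of nonnegative functions (take $f$ nonconstant and $g \equiv \max f$, so $0 \le f \le g$ but $\Var(f) > 0 = \Var(g)$), so the replacement $\norm{V^k_{h+1}-V^*_{h+1}}_{\widehat{\Pbb}^k}\le\norm{V^k_{h+1}-\underline{V}^k_{h+1}}_{\widehat{\Pbb}^k}$ should instead be justified by passing to second moments, $\sqrt{\Var(f)}\le\sqrt{\widehat{\Pbb}^k f^2}\le\sqrt{\widehat{\Pbb}^k g^2}$, with the gap term in the bonus read as a second moment --- which is harmless here since Lemma~\ref{lem:lower-order-terms} converts it to $\widehat{\Pbb}^k(V^k_{h+1}-\underline{V}^k_{h+1})^2$ downstream anyway.
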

\begin{proof}
    We prove the claims by induction. For $H+1$, $V^*_{H+1} (x,a;w) = \overline{V}^k_{H+1}(x,a;w) = \underline{V}^k_{H+1}(x,a;w) = 0$, hence the hypotheses hold.
    Now suppose the hypotheses hold for $h+1$, i.e.,
\begin{gather}
    \abs{\eval{(\widehat{\Pbb}^k - \Pbb)V^*_{h+1}}_{x,a,w} } \le b^k_h(x,a,w) \land a^k_h(x,a,w), \label{eq:empirical-value-error-h+1} \\
    \underline{V}^k_{h+1}(x;w) \le V^*_{h+1}(x;w) \le \overline{V}^k_{h+1}(x;w) \label{eq:optimistic-h+1}
\end{gather}
    and consider $h$.
    First we have the following two sets of inequalities:
    \begin{align*}
        \overline{V}^k_h(x;w) - V^*_h(x;w)
        &\ge \overline{Q}^k_h(x,a;w) - Q^*_h(x,a;w)\qquad (\text{set $a = \pi_h^*(x)$}) \\
        &\ge b^k_h(x,a,w) + \eval{\widehat{\Pbb}^k \overline{V}^k_{h+1}}_{x,a,w} - \eval{\Pbb V^*_{h+1}}_{x,a,w} \\
        &\ge b^k_h(x,a,w) + \eval{(\widehat{\Pbb}^k -\Pbb) V^*_{h+1}}_{x,a,w} \qquad (\text{by \eqref{eq:optimistic-h+1}}) \\
        &\ge 0, \qquad (\text{by \eqref{eq:empirical-value-error-h+1}}) 
    \end{align*}
    and
\begin{align*}
    V^*_h(x;w) - \underline{V}^k_h(x;w) 
    &\ge  Q^*_h(x,a;w) - \underline{Q}^k_h(x,a;w) \qquad (\text{set $a = \underline{\pi}_h^*(x)$}) \\
    &\ge a^k_h(x,a,w) + \eval{\Pbb V^*_{h+1}}_{x,a,w} - \eval{\widehat{\Pbb}^k \underline{V}^k_{h+1}}_{x,a,w} \\
    &\ge  a^k_h(x,a,w) + \eval{(\Pbb - \widehat{\Pbb}^k) V^*_{h+1}}_{x,a,w} \qquad (\text{by \eqref{eq:optimistic-h+1}}) \\
    &\ge 0. \qquad (\text{by \eqref{eq:empirical-value-error-h+1}}) 
\end{align*}
These justify that 
\begin{equation}
    \underline{V}^k_{h}(x;w) \le V^*_{h}(x;w) \le \overline{V}^k_{h}(x;w). \label{eq:optimistic-h}
\end{equation}
Second, the above inequalities imply that 
\begin{align}
    \norm{V^*_{h}}_{\widehat{\Pbb}^k} 
    & \le \norm{\overline{V}^k_{h}}_{\widehat{\Pbb}^k} + \norm{\overline{V}^k_{h} -V^*_{h} }_{\widehat{\Pbb}^k}
    \le \norm{\overline{V}^k_{h}}_{\widehat{\Pbb}^k} + \norm{\overline{V}^k_{h} -\underline{V}^k_{h} }_{\widehat{\Pbb}^k},\label{eq:V*-bound-upper-bar} \\
     \norm{V^*_{h}}_{\widehat{\Pbb}^k} 
    & \le \norm{\underline{V}^k_{h}}_{\widehat{\Pbb}^k} + \norm{V^*_{h} - \underline{V}^k_{h}}_{\widehat{\Pbb}^k}
    \le \norm{\underline{V}^k_{h}}_{\widehat{\Pbb}^k} + \norm{\overline{V}^k_{h} -\underline{V}^k_{h} }_{\widehat{\Pbb}^k},\label{eq:V*-bound-lower-bar}
\end{align}
therefore,
\begin{align*}
    \abs{\eval{(\widehat{\Pbb}^k - \Pbb)V^*_{h}}_{x,a,w} }
    &\le 2\epsilon + \sqrt{\frac{2(d\land S) \iota}{N^k(x,a)}}\cdot \norm{V^*_{h}}_{\widehat{\Pbb}^k}  + \frac{7(d\land S)H \iota}{3N^k(x,a)} \qquad (\text{by \eqref{eq:GhatB}}) \\
    &\le a^k_{h-1}(x,a;w) \land b^k_{h-1}(x,a;w). \qquad (\text{by \eqref{eq:V*-bound-upper-bar}, \eqref{eq:V*-bound-lower-bar} and \eqref{eq:online-bonus-bern}})
\end{align*}
This completes our induction.
\end{proof}

\begin{lem}[Bonus upper bound]\label{lem:bouns-upper-bound}
If events \eqref{eq:GhatB}, \eqref{eq:GV} hold, then we have that
\begin{align*}
    \eval{ b_h^k \lor a_h^k}_{x, a, w} &\le \eval{ 3\epsilon + \sqrt{\frac{2(d\land S)\iota}{N^k}} \Big( \norm{{V}^*_{h+1}}_{{\Pbb}} + 2\sqrt{ \Pbb({V}^k_{h+1} - \underline{V}^k_{h+1} )^2 }   \Big) + \frac{15 H S \iota}{N^k} }_{x,a,w}.
\end{align*}
\end{lem}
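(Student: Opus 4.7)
My plan is to prove the bound in three steps, each addressing one difference between~\eqref{eq:online-bonus-bern} and the claimed target: the bonuses use $V^k_{h+1}$ (or $\underline V^k_{h+1}$) inside the empirical standard deviation instead of $V^*_{h+1}$; they are written in terms of $\widehat{\Pbb}^k$ instead of $\Pbb$; and they carry an $\epsilon$ coefficient of $2$ rather than $3$. The first step reduces both $b^k_h$ and $a^k_h$ to expressions involving only $V^*_{h+1}$ and the optimism gap $V^k_{h+1} - \underline V^k_{h+1}$. Lemma~\ref{lem:optimistic-value-bern} gives the sandwich $\underline V^k_{h+1} \le V^*_{h+1} \le V^k_{h+1}$, so the triangle inequality yields $\norm{V^k_{h+1}}_{\widehat{\Pbb}^k} \le \norm{V^*_{h+1}}_{\widehat{\Pbb}^k} + \norm{V^k_{h+1} - V^*_{h+1}}_{\widehat{\Pbb}^k}$, and symmetrically for $\underline V^k_{h+1}$. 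The variance-second-moment bound $\norm{\cdot}^2_{\widehat{\Pbb}^k} \le \widehat{\Pbb}^k(\cdot)^2$, together with the pointwise domination $|V^k - V^*| \le V^k - \underline V^k$, controls each correction by $\sqrt{\widehat{\Pbb}^k(V^k_{h+1} - \underline V^k_{h+1})^2}$. Combining with the $\norm{V^k_{h+1} - \underline V^k_{h+1}}_{\widehat{\Pbb}^k}$ already present inside each bonus bounds $b^k_h \lor a^k_h$ by
\[
2\epsilon + \sqrt{\tfrac{2(d\land S)\iota}{N^k}}\Bigl(\norm{V^*_{h+1}}_{\widehat{\Pbb}^k} + 2\sqrt{\widehat{\Pbb}^k(V^k_{h+1}-\underline V^k_{h+1})^2}\Bigr) + \tfrac{7(d\land S)H\iota}{3N^k}.
\]

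The second step passes from $\widehat{\Pbb}^k$ to $\Pbb$. Event~\eqref{eq:GV} gives $\norm{V^*_{h+1}}_{\widehat{\Pbb}^k} \le \norm{V^*_{h+1}}_{\Pbb} + 2\epsilon + \sqrt{4(d\land S)H^2\iota/N^k}$, and item~(4) of Lemma~\ref{lem:lower-order-terms} (which tacitly relies on the generic good events $G_P$ and $G_{\widehat P}$) combined with $\sqrt{a+b} \le \sqrt a + \sqrt b$ yields $\sqrt{\widehat{\Pbb}^k(V^k_{h+1}-\underline V^k_{h+1})^2} \le \sqrt{2\Pbb(V^k_{h+1}-\underline V^k_{h+1})^2} + H\sqrt{2S\iota/N^k}$. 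Substituting produces the two main terms of the target expression, together with several small residuals.

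The third step collects these residuals. Cross-products of $\sqrt{2(d\land S)\iota/N^k}$ with the additive errors from step~2 contribute terms of order $(d\land S)H\iota/N^k$, which together with the $\tfrac{7}{3}(d\land S)H\iota/N^k$ carried through fit into $15 HS\iota/N^k$ after using $d\land S \le S$. The stray $2\epsilon$ inherited from~\eqref{eq:GV} is multiplied by $\sqrt{2(d\land S)\iota/N^k}$ and split via AM-GM $(2\sqrt{ab}\le a+b)$ into $\epsilon^2 \le \epsilon$ plus $2(d\land S)\iota/N^k$, supplying the extra $\epsilon$ that upgrades $2\epsilon$ into $3\epsilon$ and folding the remainder into the $HS\iota/N^k$ term. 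The only real obstacle is constant-tracking: taking the square root in Lemma~\ref{lem:lower-order-terms}(4) produces a spurious factor of $\sqrt{2}$ on the $\sqrt{\Pbb(V^k_{h+1}-\underline V^k_{h+1})^2}$ term, which I would absorb into the already generous constant $15$ on the residual (equivalently a small redefinition of $\iota$) rather than attempt to remove by a sharper empirical Bernstein bound on $\widehat{\Pbb}^k(V^k_{h+1}-\underline V^k_{h+1})^2$.
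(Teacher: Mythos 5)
Your proposal is correct and follows essentially the same route as the paper's proof: triangle inequality plus the sandwich from Lemma~\ref{lem:optimistic-value-bern} to reduce to $\norm{V^*_{h+1}}_{\widehat{\Pbb}^k}$ and the gap $V^k_{h+1}-\underline{V}^k_{h+1}$, event \eqref{eq:GV} to pass to $\norm{V^*_{h+1}}_{\Pbb}$, Lemma~\ref{lem:lower-order-terms}(4) for the gap term, and AM--GM to fold the $\epsilon$ cross-term and the $\Ocal(1/N^k)$ residuals into $3\epsilon + 15HS\iota/N^k$. If anything your write-up is slightly more careful than the paper's: you justify $\norm{V^k_{h+1}-V^*_{h+1}}_{\widehat{\Pbb}^k}\le\sqrt{\widehat{\Pbb}^k(V^k_{h+1}-\underline{V}^k_{h+1})^2}$ via the second moment rather than asserting monotonicity of the standard deviation, you note that Lemma~\ref{lem:lower-order-terms} additionally requires \eqref{eq:GP}/\eqref{eq:GhatP} (omitted from the lemma's stated hypotheses), and you flag the $\sqrt{2}$ constant on the $\sqrt{\Pbb(V^k_{h+1}-\underline{V}^k_{h+1})^2}$ term that the paper silently drops.
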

\begin{proof}
    We first prove the bound for $b_h^k(x, a, w)$. Note that 
\begin{align*}
    &\ \norm{{V}^k_{h+1} }_{\widehat{\Pbb}^k} + \norm{{V}^k_{h+1} - \underline{V}^k_{h+1} }_{\widehat{\Pbb}^k} 
    \le \norm{{V}^*_{h+1} }_{\widehat{\Pbb}^k} + \norm{{V}^k_{h+1} - {V}^*_{h+1} }_{\widehat{\Pbb}^k} + \norm{{V}^k_{h+1} - \underline{V}^k_{h+1} }_{\widehat{\Pbb}^k} \\
    &\le  \norm{{V}^*_{h+1} }_{\widehat{\Pbb}^k} +  2\norm{{V}^k_{h+1} - \underline{V}^k_{h+1} }_{\widehat{\Pbb}^k} \qquad(\text{by Lemma~\ref{lem:optimistic-value-bern}}) \\
    &\le 2\epsilon + \norm{{V}^*_{h+1} }_{{\Pbb}} + \sqrt{\frac{4 (d\land S)H^2 \iota }{N^k(x,a)}} +  2\norm{{V}^k_{h+1} - \underline{V}^k_{h+1} }_{\widehat{\Pbb}^k} \qquad(\text{by \eqref{eq:GV}}) \\
    &\le \eval{2\epsilon + \norm{{V}^*_{h+1} }_{{\Pbb}} + \sqrt{\frac{4 (d\land S)H^2 \iota }{N^k}} + 2\sqrt{2  \Pbb ({V}^k_{h+1} - \underline{V}^k_{h+1})^2 + \frac{2H^2 S \iota}{N^k} }  }_{x,a,w}  (\text{by Lemma~\ref{lem:lower-order-terms}}) \\
    &\le \eval{2\epsilon + \norm{{V}^*_{h+1} }_{{\Pbb}} +  2\sqrt{  \Pbb \bracket{\overline{V}^k_{h+1} - \underline{V}^k_{h+1}}^2 } + \sqrt{ \frac{50 H^2 S \iota}{N^k}  } }_{x,a,w}.
\end{align*}
Then the bound for $b_h^k(x, a, w)$ is obtained by substituting the above into \eqref{eq:online-bonus-bern} and use 
\[\sqrt{\frac{2(d\land S)\iota}{N^k(x,a)}} \cdot 2\epsilon \le \epsilon^2 + \frac{2(d \land S) \iota}{N^k(x,a)} \le \epsilon + \frac{2HS\iota}{N^k(x,a)}.\]
Similarly we can obtain the bound for $a_h^k(x, a, w)$.
\end{proof}

\begin{lem}[Per-episode regret decomposition]\label{lem:per-episode-regret}
If events \eqref{eq:GhatB}, \eqref{eq:GV}, \eqref{eq:GP} hold, then
\begin{equation*}
    \begin{aligned}
& V_1^*(x_1;w^k) - V_1^{\pi^k}(x_1;w^k)  \lesssim \\
&\ \Ebb_{\Pbb,\pi^k} \sum_{h=1}^H H \land \Bigg( \epsilon + \eval{ \sqrt{\frac{(d\land S)\iota}{N^k}} \Big( \norm{{V}^*_{h+1}}_{\Pbb} + \sqrt{\Pbb ({V}^k_{h+1} - \underline{V}^k_{h+1})^2 } \Big) + \frac{H^2 S\iota}{N^k} }_{x_h, a_h, w^k}  \Bigg).
    \end{aligned}
\end{equation*}
\end{lem}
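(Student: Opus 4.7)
The plan is to carry out the same style of one-step recursion followed by unrolling as in Lemma \ref{lem:per-episode-regret-hoeff}, but to substitute the Bernstein machinery (Lemmas \ref{lem:optimistic-value-bern} and \ref{lem:bouns-upper-bound}) for the Hoeffding-type arguments. By optimism (Lemma \ref{lem:optimistic-value-bern}), $V^*_1(x_1;w^k) \le V^k_1(x_1;w^k)$, so it suffices to bound $V^k_1 - V^{\pi^k}_1$ by the claimed expression.

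Fix $h$ and $x$, set $a=\pi^k_h(x)$, and assume the truncation of $Q^k_h$ at $H$ is inactive (the truncated case is trivially absorbed by the $H\land(\cdot)$ in the target statement). The Bellman equations yield
\[
V^k_h(x;w^k) - V^{\pi^k}_h(x;w^k) \le b^k_h(x,a,w^k) + \eval{(\widehat{\Pbb}^k-\Pbb)V^*_{h+1} + (\widehat{\Pbb}^k-\Pbb)(V^k_{h+1}-V^*_{h+1}) + \Pbb(V^k_{h+1}-V^{\pi^k}_{h+1})}_{x,a,w^k}.
\]
I would bound the first of these three pieces directly by $b^k_h(x,a,w^k)$ using Lemma \ref{lem:optimistic-value-bern}. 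For the second piece, I would apply Lemma \ref{lem:lower-order-terms} (part 1) with $V_1=V^*_{h+1}$ and $V_2=V^k_{h+1}$, which is legitimate because $0\le V^*_{h+1}\le V^k_{h+1}\le H$ by optimism, obtaining a bound by $\tfrac{1}{H}\eval{\Pbb(V^k_{h+1}-V^*_{h+1})}_{x,a,w^k}+\tfrac{2H^2 S\iota}{N^k(x,a)}$, and then using $V^*_{h+1}\ge V^{\pi^k}_{h+1}$ to replace $V^*_{h+1}$ by $V^{\pi^k}_{h+1}$ in this term. Collecting these bounds produces the recursion
\[
V^k_h - V^{\pi^k}_h \le 2\,b^k_h(x,a,w^k) + \tfrac{2H^2 S\iota}{N^k(x,a)} + \Bigl(1+\tfrac{1}{H}\Bigr)\eval{\Pbb\bigl(V^k_{h+1}-V^{\pi^k}_{h+1}\bigr)}_{x,a,w^k}.
\]

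Next I would unroll this recursion $H$ times along the policy $\pi^k$, using $(1+1/H)^H\le e$, to arrive at
\[
V^k_1(x_1;w^k) - V^{\pi^k}_1(x_1;w^k) \lesssim \Ebb_{\Pbb,\pi^k}\sum_{h=1}^{H}\Bigl(b^k_h(x_h,a_h,w^k)+\tfrac{H^2 S\iota}{N^k(x_h,a_h)}\Bigr).
\]
Substituting the bonus upper bound from Lemma \ref{lem:bouns-upper-bound}, which replaces the empirical variance terms hidden inside $b^k_h$ by $\norm{V^*_{h+1}}_{\Pbb}+\sqrt{\Pbb(V^k_{h+1}-\underline{V}^k_{h+1})^2}$ at the price of an additive $HS\iota/N^k$ term that is absorbed into the existing $H^2 S\iota/N^k$, yields exactly the summand in the statement (up to absolute constants and the $\epsilon$ slack). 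Finally, the per-step difference is trivially at most $H$, so the outer $H\land(\cdot)$ is free and we obtain the claimed bound.

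The main obstacle, compared with the Hoeffding case, is that $b^k_h$ now carries \emph{empirical} variance terms $\norm{V^k_{h+1}}_{\widehat{\Pbb}^k}$ and $\norm{V^k_{h+1}-\underline{V}^k_{h+1}}_{\widehat{\Pbb}^k}$, whereas the target is phrased in population form via $\norm{V^*_{h+1}}_{\Pbb}$ and $\sqrt{\Pbb(V^k_{h+1}-\underline{V}^k_{h+1})^2}$. Carrying out this conversion requires both optimism (to relate $V^k,\underline{V}^k$ to $V^*$) and event \eqref{eq:GV} (to pass from $\widehat{\Pbb}^k$ to $\Pbb$), but all of this bookkeeping has already been prepackaged into Lemma \ref{lem:bouns-upper-bound}; once that lemma is invoked the remaining work is a routine accounting of constants.
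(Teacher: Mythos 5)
Your proposal is correct and follows essentially the same route as the paper's proof: the identical three-term decomposition of $Q^k_h - Q^{\pi^k}_h$, the use of Lemma~\ref{lem:optimistic-value-bern} to absorb $(\widehat{\Pbb}^k-\Pbb)V^*_{h+1}$ into the bonus, Lemma~\ref{lem:lower-order-terms} (part 1, which only needs \eqref{eq:GP}) for the correction term, unrolling with $(1+1/H)^H\le e$, and finally Lemma~\ref{lem:bouns-upper-bound} to convert the empirical-variance bonus into the population form. The bookkeeping details you flag (replacing $V^*_{h+1}$ by $V^{\pi^k}_{h+1}$ via $V^*_{h+1}\ge V^{\pi^k}_{h+1}$, and the free outer $H\land(\cdot)$) are exactly how the paper handles them.
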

\begin{proof}
We first note the following recursion,
\begin{align*}
    &\ V^k_h(x;w) - V^{\pi^k}_h(x;w) 
    = Q^k_h(x,a;w) - Q^{\pi^k}_h(x,a;w) \qquad (\text{set $a = \pi^k_h(x)$}) \\
    &\le  \eval{ b^k_h +\widehat{\Pbb}^k V^k_{h+1} - \Pbb V^{\pi^k}_{h+1} }_{x,a,w} \\
    &= \eval{ b^k_h + (\widehat{\Pbb}^k-\Pbb)V^*_{h+1} + (\widehat{\Pbb}^k-\Pbb) (V^k_{h+1}-V^*_{h+1}) + \Pbb(  V^k_{h+1} - V^{\pi^k}_{h+1}) }_{x,a,w} \\
    &\le \eval{ 2 b^k_h  + (\widehat{\Pbb}^k-\Pbb) (V^k_{h+1}-V^*_{h+1}) + \Pbb(  V^k_{h+1} - V^{\pi^k}_{h+1}) }_{x,a,w} \quad (\text{by Lemma~\ref{lem:optimistic-value-bern}}) \\
    &\le \eval{ 2 b^k_h + \frac{2H^2 S\iota}{N^k}  + \bracket{1+\frac{1}{H}} \Pbb(  V^k_{h+1} - V^{\pi^k}_{h+1}) }_{x,a,w} \quad (\text{by Lemma~\ref{lem:lower-order-terms}}),
\end{align*}
and by solving the recursion and noting that $(1+1/H)^H \le e$ we have 
\begin{align*}
    &\ V^k_1(x_1;w^k) - V_1^{\pi^k}(x_1; w^k)
    \le e \cdot \Ebb_{\Pbb, \pi^k} \sum_{h=1}^H \bracket{2b^k_h(x_h,a_h) + \frac{2H^2 S\iota}{N^k(x_h,a_h)}} \\
    &\lesssim \Ebb_{\Pbb, \pi^k} \sum_{h=1}^H \bracket{ \epsilon + \eval{ \sqrt{\frac{(d\land S)\iota}{N^k}}\cdot \bracket{ \norm{{V}^*_{h+1}}_{\Pbb} + \sqrt{\Pbb ({V}^k_{h+1} - \underline{V}^k_{h+1})^2 } } + \frac{H^2 S\iota}{N^k} }_{x_h, a_h, w^k}  },
\end{align*}
where the last inequality is by Lemma~\ref{lem:bouns-upper-bound}.
The claim is proved by using Lemma~\ref{lem:optimistic-value-bern} and noting that the value functions are bounded by $H$.

\end{proof}

\begin{lem}[Lower order regrets]\label{lem:lower-order-regrets}
    By setting $\epsilon = 1/K$, we have that
\[
    \sum_{k=1}^K \Ebb_{\Pbb, \pi^k}\sbracket{ \sum_{h=1}^H \eval{\Pbb ({V}^k_{h+1} - \underline{V}^k_{h+1})^2 }_{x_h,a_h,w^k} }
\lesssim H^5 S^2 A \iota^2.
\]
\end{lem}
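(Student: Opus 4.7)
The goal is to bound the cumulative second-moment
\[
\Lambda \;:=\; \sum_{k=1}^K \Ebb_{\Pbb, \pi^k}\sum_{h=1}^H \eval{\Pbb (V^k_{h+1} - \underline{V}^k_{h+1})^2}_{x_h, a_h, w^k}
\]
by a sum of \emph{squared} exploration bonuses. This is the crucial difference from the linear-regret analysis: by the standard counting lemma (derived from event \eqref{eq:GN}), $\sum_k \Ebb_{\Pbb, \pi^k}\sum_h 1/N^k(x_h, a_h) \lesssim HSA\iota$, so a bound in terms of $\sum_k \Ebb \sum_h (b+a)^2(x_h,a_h)$ has no $\sqrt{K}$ factor, only $\iota$-type dependence on $K$, which is exactly the $K$-free shape required by the statement.

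Let $\delta^k_h(x; w) := V^k_h(x;w) - \underline{V}^k_h(x;w) \in [0, H]$. From Algorithm~\ref{alg:online-Bernstein}, and using $V^* \in [\underline{V}^k, V^k]$ (Lemma~\ref{lem:optimistic-value-bern}) to drop clipping, one obtains the pointwise Bellman-type recursion
\[
\delta^k_h(x; w) \;\le\; (b^k_h + a^k_h)(x, \pi^k_h(x), w) + \eval{\widehat{\Pbb}^k \delta^k_{h+1}}_{x, \pi^k_h(x), w}.
\]
Iterating this under $\widehat{\Pbb}^k$ and applying the Cauchy--Schwarz inequality (turning a sum into an $H$-weighted sum of squares) gives the pointwise squared bound
\[
\delta^k_h(x; w)^2 \;\le\; H \cdot \widehat{\Ebb}_{\widehat{\Pbb}^k, \pi^k}\!\Bigl[\sum\nolimits_{h'=h}^H (b^k_{h'} + a^k_{h'})^2(x_{h'}, a_{h'}, w)\,\Bigm|\, x_h = x \Bigr].
\]

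The next step is to convert this simulated-dynamics expectation to a true-dynamics expectation. Define the auxiliary function $f^k_h(x) := \widehat{\Ebb}[\sum_{h'\ge h}(b+a)^2 \mid x_h = x]$, which satisfies $f^k_h = (b+a)^2 + \widehat{\Pbb}^k f^k_{h+1}$ and lies in $[0, H^3]$ since each squared bonus is bounded by $H^2$. Since $f^k_{h+1}$ falls outside the $[0, H]$ range used in Lemma~\ref{lem:lower-order-terms}, apply that lemma to the rescaled $f^k_{h+1}/H^2 \in [0, H]$ and multiply by $H^2$ to obtain $|(\widehat{\Pbb}^k - \Pbb) f^k_{h+1}| \le \Pbb f^k_{h+1}/H + 2H^4 S\iota/N^k$. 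Solving the induced recursion with the $(1 + 1/H)^H \le e$ trick yields
\[
\Ebb_{\Pbb, \pi^k}[f^k_h(x_h)] \;\lesssim\; \Ebb_{\Pbb, \pi^k}\sum\nolimits_{h'\ge h}\Bigl[(b+a)^2(x_{h'}, a_{h'}) + \frac{H^4 S \iota}{N^k(x_{h'}, a_{h'})}\Bigr].
\]

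Finally, invoke the Bernstein bonus form~\eqref{eq:online-bonus-bern} together with the trivial $\norm{V^k_{h+1}}_{\widehat{\Pbb}^k}, \norm{\delta^k_{h+1}}_{\widehat{\Pbb}^k} \le H$ to conclude $(b^k_h + a^k_h)^2 \lesssim (d\land S) H^2 \iota/N^k + H^2 S^2 \iota^2/(N^k)^2$. Combining the displays, writing $\Pbb \delta^2|_{x_h, a_h} = \Ebb[\delta(x_{h+1})^2]$ and summing over $h$ and $k$, the main term is
\[
H^2 \cdot (d\land S) H^2 \iota \cdot \sum\nolimits_k \Ebb \sum\nolimits_h \tfrac{1}{N^k(x_h,a_h)} \;\lesssim\; (d\land S)\,H^5 SA\,\iota^2 \;\le\; H^5 S^2 A\, \iota^2,
\]
using $d \land S \le S$. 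The remaining $1/(N^k)^2$ contribution sums (via $\sum_n 1/n^2 \le \pi^2/6$, per state-action pair) to $O(SA)$ and produces only a lower-order $H^4 S^3 A\, \iota^2$ term; the $H^4 S\iota/N^k$ switching error contributes $H^5 S^2 A\, \iota^2$, matching the target.

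\textbf{Main obstacle.} The switching step is the delicate part. The auxiliary value function $f^k_h$ takes values in $[0, H^3]$ rather than the $[0, H]$ range assumed in Lemma~\ref{lem:lower-order-terms}, which forces a rescaling and inflates the residual to $H^4 S\iota/N^k$; propagating this error through the Bellman recursion requires careful bookkeeping to confirm that, even after the $H$-factor from Cauchy--Schwarz and the $H$-factor from $\sum_h \sum_{h' \ge h}$ path-counting, it stays within the claimed $H^5 S^2 A\, \iota^2$ envelope rather than blowing up to $H^6$ or $S^3$.
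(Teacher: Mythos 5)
Your skeleton matches the paper's: the same recursion $\delta^k_h \le (a^k_h+b^k_h) + \widehat{\Pbb}^k\delta^k_{h+1}$, the same Cauchy--Schwarz step converting a squared sum into $H$ times a sum of squares, the same counting/integration tricks via \eqref{eq:GN}. But you reverse the order of two operations — you \emph{square first} under the empirical dynamics and \emph{switch to the true dynamics afterwards} — and this reversal is where your bound breaks. Your auxiliary function $f^k_{h+1}\in[0,H^3]$ forces the rescaled application of Lemma~\ref{lem:lower-order-terms}, whose additive residual is then $2H^4S\iota/N^k$, entering \emph{linearly} in $1/N^k$. Tracking prefactors: one factor of $H$ from Cauchy--Schwarz ($\delta^2\le H f$), one factor of $H$ from the double sum $\sum_h\sum_{h'\ge h}$, and then $\sum_k\Ebb\sum_{h'} 1/N^k \lesssim SA\iota$, so the switching error contributes $H^2\cdot H^4S\iota\cdot SA\iota = H^6S^2A\iota^2$ — a full factor of $H$ above the claimed $H^5S^2A\iota^2$. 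Your assertion that this term "contributes $H^5S^2A\iota^2$, matching the target" drops one of the two $H$ prefactors; I do not see how to recover it within your ordering, since any reweighting of the AM--GM split in Lemma~\ref{lem:lower-order-terms} trades the residual against the multiplicative $(1+\alpha)^H$ blow-up.

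The paper's ordering avoids this: it switches dynamics \emph{inside the linear recursion} for $\delta^k_h$ (where $\delta^k_{h+1}\in[0,H]$, so the residual is only $H^2S\iota/N^k$), solves that recursion to get a true-dynamics linear bound $\delta^k_h \lesssim \Ebb_{\Pbb,\pi^k}\sum_{t\ge h} H\land(\epsilon + \sqrt{H^2S\iota/N^k} + H^2S\iota/N^k)$, and only then squares. The switching residual thereby becomes a $(H^2S\iota/N^k)^2 = H^4S^2\iota^2/(N^k)^2$ term, and $\sum_k\Ebb\sum_h 1/(N^k)^2$ restricted to the good set $M^k=\{\sum_{j<k}w^j\ge 2HS\iota\}$ sums to $O(A/(H\iota))$ rather than $O(SA\iota)$, which is exactly what keeps the total at $H^5S^2A\iota^2$. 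Relatedly, your treatment of the $1/(N^k)^2$ contribution via $\sum_n 1/n^2=O(1)$ per state-action pair yields $H^4S^3A\iota^2$, which is not dominated by $H^5S^2A\iota^2$ when $S>H$; you need the same good-set truncation (pay $H^2$ per visit until $N^k\gtrsim HS\iota$, contributing $H^2\cdot SA\cdot HS\iota$, and only then invoke the $1/(N^k)^2$ decay). Both defects are repaired simultaneously by adopting the switch-then-square order.
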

\begin{proof}
    We first note the following recursion,
    \begingroup
\allowdisplaybreaks
    \begin{align*}
        &\ V^k_{h}(x;w) - \underline{V}^k_h(x;w)
        = Q^k_h(x,a;w) - \underline{Q}^k_h(x,a;w)\quad(\text{set $a=\pi^k_h(x)$}) \\
        &\le \eval{ a^k_h + b^k_h + \widehat{\Pbb}^k (V^k_{h+1} - \underline{V}^k_{h+1}) }_{x,a,w} \\
        &= \eval{ a^k_h + b^k_h  + (\widehat{\Pbb}^k - \Pbb) (V^k_{h+1} - \underline{V}^k_{h+1}) + \Pbb (V^k_{h+1} - \underline{V}^k_{h+1}) }_{x,a,w} \\
        &\le \eval{a^k_h + b^k_h + \frac{2H^2S\iota}{N^k} + \bracket{1+\frac{1}{H}}\Pbb (V^k_{h+1} - \underline{V}^k_{h+1})  }_{x,a,w} \qquad (\text{by Lemma~\ref{lem:lower-order-terms}}) \\
        &\le \eval{ 6\epsilon + 6H \sqrt{\frac{2(d\land S)\iota}{N^k}}  + \frac{30HS\iota}{N^k} +  \frac{2H^2S\iota}{N^k}  +  \bracket{1+\frac{1}{H}}\Pbb (V^k_{h+1} - \underline{V}^k_{h+1}) }_{x,a,w} \\ &\qquad (\text{by Lemmas~\ref{lem:optimistic-value-bern} and \ref{lem:bouns-upper-bound}}) \\
        &\le \eval{ 6\epsilon + \sqrt{\frac{100H^2S\iota}{N^k}} + \frac{32 H^2S\iota}{N^k}  +  \bracket{1+\frac{1}{H}}\Pbb (V^k_{h+1} - \underline{V}^k_{h+1}) }_{x,a,w},
    \end{align*}
    \endgroup
    by solving which and noting that $(1+1/H)^H\le e$ and that the value functions are bounded by $H$, we obtain that 
    \begin{align*}
        V^k_{h}(x;w) - \underline{V}^k_h(x;w)
        &\lesssim \Ebb_{\Pbb, \pi^k} \sum_{t \ge h} H \land \bracket{ \epsilon+ \sqrt{\frac{100H^2S\iota}{N^k(x_h,a_h)}} + \frac{H^2 S\iota}{N^k(x_h,a_h)}  },
    \end{align*}
    therefore
    \begin{align*}
        \bracket{V^k_{h}(x;w) - \underline{V}^k_h(x;w)}^2 
        &\lesssim \bracket{ \Ebb_{\Pbb, \pi^k} \sum_{t \ge h} H \land \bracket{  \epsilon+ \sqrt{\frac{H^2S\iota}{N^k(x_h,a_h)}} + \frac{H^2 S\iota}{N^k(x_h,a_h)}  }}^2 \\
        &\lesssim H \cdot \Ebb_{\Pbb, \pi^k} \sum_{t \ge h} \bracket{ H \land \bracket{  \epsilon+ \sqrt{\frac{H^2S\iota}{N^k(x_h,a_h)}} + \frac{H^2 S\iota}{N^k(x_h,a_h)}  } }^2 \\
        &\lesssim H \cdot \Ebb_{\Pbb, \pi^k} \sum_{t \ge h} H^2 \land \bracket{ \epsilon^2 + \frac{H^2 S \iota}{ N^k(x_h, a_h) } + \frac{H^4 S^2\iota^2}{\bracket{N^k(x_h, a_h)}^2 } }.
    \end{align*}
Then
\begingroup
\allowdisplaybreaks
\begin{align*}
    &\  \text{LHS} := \sum_{k=1}^K \Ebb_{\Pbb, \pi^k}{ \sum_{h=1}^H \eval{\Pbb ({V}^k_{h+1} - \underline{V}^k_{h+1})^2 }_{x_h,a_h,w^k} }\\
    &=\sum_{k=1}^K  \sum_{h=1}^H \sum_{x,a} w^k_h(x,a) \eval{\Pbb ({V}^k_{h+1} - \underline{V}^k_{h+1})^2 }_{x,a,w^k}\\
    &\lesssim H \cdot \sum_{k=1}^K  \sum_{h=1}^H \sum_{x,a} w^k_h(x,a)\cdot \Ebb_{\Pbb, \pi^k} \sum_{t \ge h} H^2 \land \bracket{ \epsilon^2 + \frac{H^2 S \iota}{ N^k(x_h, a_h) } + \frac{H^4 S^2\iota^2}{\bracket{N^k(x_h, a_h)}^2 } }  \\
    &\lesssim H^2 \cdot \sum_{k=1}^K  \sum_{h=1}^H \sum_{x,a} w^k_h(x,a) \cdot H^2 \land \bracket{ \epsilon^2 + \frac{H^2 S \iota}{ N^k(x_h, a_h) } + \frac{H^4 S^2\iota^2}{\bracket{N^k(x_h, a_h)}^2 } }  \\
    &\lesssim \sum_{k,h} \sum_{x,a \notin M^k} w^k_h(x,a) H^4
    +  \sum_{k,h} \sum_{x,a \in M^k} w^k_h(x,a) \Big( H^2 \epsilon^2 + \frac{H^4 S \iota}{ N^k(x, a) } + \frac{H^6 S^2\iota^2}{\bracket{N^k(x, a)}^2 } \Big),
\end{align*}
\endgroup
where we set 
\begin{equation*}
    M^k := \big\{(x,a) : \sum_{j < k} w^j(x,a) \ge 2 H S \iota \big\}.
\end{equation*}
Then by \eqref{eq:GN} and the integration tricks (see the proof of Theorem~\ref{thm:regret-formal-bern}), we obtain 
\begin{align*}
    \text{LHS}
    &\lesssim H^4 \cdot SA \cdot HS\iota + H^2 \epsilon^2 \cdot SA \cdot HK + H^4 S \iota \cdot SA \cdot \iota + H^6 S^2 \iota^2 \cdot SA / (HS\iota)    \\
    &\lesssim H^5 S^2 A \iota^2 + H^3 SAK \epsilon^2 \lesssim H^5 S^2 A \iota^2 ,
\end{align*}
where the last inequality is because $\epsilon = 1/K$.
\end{proof}



Now we are ready to prove the second part of Theorem~\ref{thm:regret-informal}.
\begin{thm}[Restatement of Theorem \ref{thm:regret-informal}, Bernstein part]\label{thm:regret-formal-bern}
    Consider Algorithm~\ref{alg:online-Bernstein} with bonus function \eqref{eq:online-bonus-bern} and $\epsilon = 1/K$, then for any sequence of the incoming preferences $\set{w^1,w^2,\dots,w^K}$,
    the regret~\eqref{eq:regret} of Algorithm~\ref{alg:online-Bernstein} is bounded by
    \[\regret(K) \lesssim \sqrt{ (d\land S)H^2 SA K \iota^2} +  H^{2.5} S^2 A \iota^2, \quad \iota := \log({HSAK}/{\delta}),\]
    with probability at least $1-\delta$.
\end{thm}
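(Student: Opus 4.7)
The plan is to closely mirror the Hoeffding analysis of Theorem~\ref{thm:regret-formal}, but now use Lemma~\ref{lem:per-episode-regret} (whose bonus is Bernstein-type) and harvest the tighter $\norm{V^*}_\Pbb$ factor via the law of total variance. First I would union-bound the six events in Lemma~\ref{lem:good-event-prob}, working on their intersection at the cost of probability $6\delta$, and set $\epsilon=1/K$. Summing the per-episode bound in Lemma~\ref{lem:per-episode-regret} over $k=1,\dots,K$, I would split each inner sum over $(x,a)$ into the ``well-visited'' set $L^k:=\{(x,a):\sum_{j<k}w^j(x,a)\ge 2H\iota\}$ (as in the Hoeffding proof) and its complement. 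The contributions coming from $(x,a)\notin L^k$, from the $\epsilon$ slack, and from the $H^2S\iota/N^k$ term are controlled by the identical pigeonhole/integration computations as in the Hoeffding proof, yielding a combined lower-order contribution $\widetilde{\Ocal}(H^2S^2A\iota^2)$ that is absorbed into $H^{2.5}S^2A\iota^2$.

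The essentially new contributions are
\[ T_1 := \sum_{k,h,x,a\in L^k} w^k_h(x,a)\sqrt{\tfrac{(d\land S)\iota}{N^k(x,a)}}\cdot\eval{\norm{V^*_{h+1}}_{\Pbb}}_{x,a,w^k} \]
and the analogous sum $T_2$ with $\norm{V^*_{h+1}}_{\Pbb}$ replaced by $\sqrt{\eval{\Pbb(V^k_{h+1}-\underline{V}^k_{h+1})^2}_{x,a,w^k}}$. For both terms I would apply Cauchy--Schwarz and factor off $\sum_{k,h,x,a\in L^k} w^k_h(x,a)\cdot(d\land S)\iota/N^k(x,a)\lesssim (d\land S)\,SA\,\iota^2$, which follows from event~\eqref{eq:GN} together with the standard integration-over-visit-counts trick used at the end of the proof of Theorem~\ref{thm:regret-formal}.

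For $T_2$, Lemma~\ref{lem:lower-order-regrets} already gives $\sum_{k}\Ebb_{\Pbb,\pi^k}\sum_h\eval{\Pbb(V^k_{h+1}-\underline{V}^k_{h+1})^2}_{x_h,a_h,w^k}\lesssim H^5S^2A\iota^2$, so $T_2\lesssim\sqrt{(d\land S)SA\iota^2\cdot H^5S^2A\iota^2}\le H^{2.5}S^2A\iota^2$, again absorbed into the lower-order piece. For $T_1$, the remaining factor is $\sum_k\Ebb_{\Pbb,\pi^k}\sum_h\eval{\norm{V^*_{h+1}}^2_\Pbb}_{x_h,a_h,w^k}$; since, conditional on $w^k$, the problem is a standard single-objective MDP with rewards in $[0,1]$, the law-of-total-variance argument of~\citet{azar2017minimax,zanette2019tighter} gives per-episode bound yielding $\sum_{k=1}^K\Ebb_{\Pbb,\pi^k}\sum_{h=1}^H\eval{\norm{V^*_{h+1}}^2_\Pbb}_{x_h,a_h,w^k}\lesssim KH^2+H\cdot\regret(K)$. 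Cauchy--Schwarz then gives $T_1\lesssim\sqrt{(d\land S)H^2SAK\iota^2}+\sqrt{(d\land S)HSA\iota^2\cdot\regret(K)}$; AM--GM turns the second summand into $\tfrac{1}{2}\regret(K)+\Ocal((d\land S)HSA\iota^2)$, which is absorbed by the master inequality $\regret(K)\le T_1+T_2+(\text{lower order})$, and solving this self-bounding inequality produces the claimed rate.

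The step I expect to be hardest is the law-of-total-variance bound behind $T_1$: the trajectory is generated by the (possibly very suboptimal) learner's policy $\pi^k$, while the variance is taken of the \emph{optimal} value function $V^*(\cdot;w^k)$, so the usual telescoping identity only holds up to a gap that is itself regret-like. Careful bookkeeping as in \citet{azar2017minimax}---matching conditional second moments of the returns under $\pi^k$ and $\pi^*_{w^k}$ step by step---supplies the extra $H\cdot\regret(K)$ slack, which is then swallowed by the self-bounding step above. The adversarial nature of the $w^k$'s causes no additional trouble, since the whole argument is carried out within a single episode with $w^k$ fixed and then summed over $k$.
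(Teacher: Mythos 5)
Your proposal is correct and follows essentially the same route as the paper's proof: the same split into well-visited pairs $L^k$, the same use of Lemma~\ref{lem:per-episode-regret} and Lemma~\ref{lem:lower-order-regrets} for the $\sqrt{\Pbb(V^k-\underline{V}^k)^2}$ term, and the same Cauchy--Schwarz plus law-of-total-variance plus self-bounding argument for the leading $\norm{V^*}_{\Pbb}$ term. The ``careful bookkeeping'' you flag is resolved in the paper exactly as you anticipate, by writing $\norm{V^*_{h+1}}_{\Pbb}\le\norm{V^{\pi^k}_{h+1}}_{\Pbb}+\norm{V^*_{h+1}-V^{\pi^k}_{h+1}}_{\Pbb}$, applying the exact total-variance identity to the learner's own return (giving $H^2K$), and bounding the second piece by $H^2\cdot\regret(K)$, which the self-bounding step then absorbs.
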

\begin{proof}
    First by Lemma~\ref{lem:per-episode-regret} we have 
    \begingroup
    \allowdisplaybreaks
    \begin{align*}
        &\ \regret(K) = \sum_{k=1}^K V^*_1(x_1; w^k) - V^{\pi^k}_1(x_1; w^k) \\
        &\lesssim \sum_{k=1}^K\sum_{h=1}^H \sum_{x,a} w^k_h(x,a)  H \land \Big(   \epsilon + \frac{H^2 S\iota}{N^k}  +  \sqrt{\frac{(d\land S)\iota}{N^k}}\Big( \sqrt{\Pbb ({V}^k_{h+1} - \underline{V}^k_{h+1})^2 }  +  {\norm{{V}^*_{h+1}}_{\Pbb} }\Big) \Big) \\
        &\lesssim 
        \underbrace{ \sum_{k=1}^K\sum_{h=1}^H \sum_{x,a\notin L^k} w^k_h(x,a)  H }_{(i)}
        + \underbrace{ \sum_{k=1}^K\sum_{h=1}^H \sum_{x,a\in L^k} w^k_h(x,a) \epsilon }_{(ii)}
        + \underbrace{ \sum_{k=1}^K\sum_{h=1}^H \sum_{x,a\in L^k} w^k_h(x,a)  \frac{H^2S\iota}{N^k(x,a)} }_{(iii)} \\
        & \quad + \underbrace{\sum_{k=1}^K\sum_{h=1}^H \sum_{x,a\in L^k} w^k_h(x,a) \cdot \sqrt{\frac{(d\land S) \iota}{N^k(x,a)}}\cdot \eval{\sqrt{\Pbb ({V}^k_{h+1} - \underline{V}^k_{h+1})^2 }}_{x,a,w^k} }_{(iv)} \\
        &\quad + \underbrace{\sum_{k=1}^K\sum_{h=1}^H \sum_{x,a\in L^k} w^k_h(x,a) \cdot  \sqrt{\frac{(d\land S)\iota}{N^k(x,a)}}\cdot \eval{\norm{{V}^*_{h+1}}_{\Pbb}}_{x,a,w^k} }_{(v)},
    \end{align*}
    \endgroup
where we set 
\begin{equation}\label{eq:good-state-actions}
    L^k := \big\{(x,a) : \sum_{j < k}w^j(x,a) \ge 2H \iota \big\}.
\end{equation}
We next bound each of these terms separately.

\emph{Term (i).} By \eqref{eq:good-state-actions} we have that
\(
\text{(i)} =  \sum_{k=1}^K \sum_{x,a\notin L^k} w^k(x,a)  H \lesssim H^2 SA \iota.
\)

\emph{Term (ii).} By definition we have that
\(
    \text{(ii)} =  \sum_{k=1}^K \sum_{x,a\in L^k} w^k(x,a)  \epsilon \lesssim HSA  K \epsilon.
\)

\emph{Term (iii).}
The third term is bounded as follows:
\begin{align*}
    \text{(iii)} 
    &= \sum_{k=1}^K \sum_{x,a} w^k(x,a) \cdot  \frac{H^2 S\iota}{N^k(x,a)} \cdot \ind{\sum_{j<k}w^j(x,a) \ge 2H\iota} \qquad(\text{by \eqref{eq:good-state-actions}})\\
    &\lesssim \sum_{x,a}\sum_{k=1}^K w^k(x,a) \cdot \frac{H^2 S\iota}{\sum_{j<k}w^j(x,a) - H\iota} \cdot \ind{\sum_{j<k}w^j(x,a) \ge 2H\iota} \qquad(\text{by \eqref{eq:GN}}) \\
    &\lesssim H^2S\iota\cdot SA \cdot \iota = H^2 S^2 A \iota^2. \qquad(\text{integration trick})
\end{align*}

\emph{Term (iv).} 
By Lemma~\ref{lem:lower-order-regrets} and the integration tricks, we bound the fourth term as follows:
\begin{align*}
     \text{(iv)} 
    &\le \sqrt{(d\land S)\iota} \cdot \sqrt{\sum_{k=1}^K\sum_{h=1}^H \sum_{x,a\in L^k} \frac{w^k_h(x,a)}{N^k(x,a)} } \cdot \sqrt{ \sum_{k=1}^K\sum_{h=1}^H \sum_{x,a\in L^k} w^k_h(x,a) \cdot {\Pbb ({V}^k_{h+1} - \underline{V}^k_{h+1})^2 } } \\
    &\lesssim \sqrt{S\iota} \cdot \sqrt{SA \iota} \cdot \sqrt{H^5 S^2 A \iota^2} \lesssim  H^{2.5} S^2 A \iota^2.
\end{align*}

\emph{Term (v).} The fifth term is the leading term. We proceed to bound this term by noting
\begingroup
\allowdisplaybreaks
\begin{align*}
    \text{(v)} 
    &\le \sum_{k=1}^K\sum_{h=1}^H \sum_{x,a\in L^k} w^k_h(x,a) \cdot \bracket{ \sqrt{\frac{(d\land S)\iota}{N^k(x,a)}}\cdot\bracket{ \norm{{V}^{\pi^k}_{h+1}}_{\Pbb} + \norm{{V}^{*}_{h+1} - {V}^{\pi^k}_{h+1}}_{\Pbb}} } \\
    &\le \sqrt{(d\land S)\iota} \cdot \sqrt{\sum_{k=1}^K\sum_{h=1}^H \sum_{x,a\in L^k} \frac{w^k_h(x,a)}{N^k(x,a)} } \cdot \\
    & \Bigg( 
        \sqrt{ \underbrace{ \sum_{k=1}^K\sum_{h=1}^H \sum_{x,a\in L^k} w^k_h(x,a) \cdot   \norm{{V}^{\pi^k}_{h+1} }^2_{\Pbb} }_{\text{(v1)}}  }
        +\sqrt{ \underbrace{ \sum_{k=1}^K\sum_{h=1}^H \sum_{x,a\in L^k} w^k_h(x,a) \cdot   \norm{{V}^{*}_{h+1} - {V}^{\pi^k}_{h+1}}^2_{\Pbb}  }_{\text{(v2)}}  }
    \Bigg),
\end{align*}
\endgroup
where 
\begin{equation*}
    \text{(v1)} = \sum_{k=1}^K \Var_{\pi^k, \Pbb} \sbracket{ \sum_{h=1}^H \abracket{w^k, \rB_h(x_h, a_h)} } \le H^2 K
\end{equation*}
by the law of total variance and that the cumulative reward cannot exceed $H$, and 
\begin{align*}
    \text{(v2)} 
    &\le \sum_{k=1}^K\sum_{h=1}^H \sum_{x,a\in L^k} w^k_h(x,a) \cdot \eval{\Pbb  ({V}^{*}_{h+1} - {V}^{\pi^k}_{h+1})^2}_{x,a,w^k}\\
    &\le H \cdot \sum_{k=1}^K\sum_{h=1}^H \sum_{x,a\in L^k} w^k_h(x,a) \cdot  \eval{\Pbb({V}^{*}_{h+1} - {V}^{\pi^k}_{h+1})}_{x,a,w^k} \\
    &\le H^2 \cdot \sum_{k=1}^K \bracket{ V^*_1(x_1;w^k) - V^{\pi^k}_1(x_1;w^k) }
    = H^2 \cdot \regret(K).
\end{align*}
Hence, 
\begin{align*}
    \text{(v)} 
    &\lesssim \sqrt{(d\land S)\iota} \cdot \sqrt{SA \iota} \cdot \bracket{ \sqrt{H^2 K} + H \cdot \sqrt{\regret(K)} } \\
    &\lesssim \sqrt{ (d\land S)H^2 SA K \iota^2} + \sqrt{ (d\land S) H^2 SA \iota^2} \cdot \sqrt{\regret(K)}.
\end{align*}

Summing up terms (i) to (v) and choosing $\epsilon = 1/K$, we obtain 
\begin{align*}
\regret(K) &\lesssim H^2 SA \iota + H S A K \epsilon + H^2 S^2 A \iota^2 +  H^{2.5} S^2 A \iota^2 \\
&\qquad + \sqrt{ (d\land S)H^2 SA K \iota^2} + \sqrt{ (d\land S) H^2 SA \iota^2} \cdot \sqrt{\regret(K)} \\
&\lesssim  H^{2.5} S^2 A \iota^2 + \sqrt{ (d\land S)H^2 SA K \iota^2} + \sqrt{ (d\land S) H^2 SA \iota^2} \cdot \sqrt{\regret(K)},
\end{align*}
solving which we obtain 
\[
    \regret(K) \lesssim \sqrt{ (d\land S)H^2 SA K \iota^2} + H^{2.5} S^2 A \iota^2.
\]
Applying Lemma~\ref{lem:good-event-prob} completes the proof.
\end{proof}

\section{Proof of Theorem~\ref{thm:exploration-informal} and Corollary \ref{thm:rfe}}\label{append-sec:proof-explore}
In this section, we follow notations in Algorithm \ref{alg:exploration} and Algorithm \ref{alg:planning} and prove Theorem \ref{thm:exploration-informal}.
Let $\overline{V}^k_h(x;w) := \max_a \overline{Q}^k_h(x,a;w)$.
Let $\pi^k$ be the planning policy at the $k$-th episode and $\bar{\pi}^k$ be the exploration policy at the $k$-th episode.

\paragraph{Bonus.}
We set the bonus functions in Algorithm \ref{alg:planning} and Algorithm \ref{alg:exploration} to be 
\begin{equation}\label{eq:plan-explore-bonus}
    b^k(x, a) := 2\epsilon + \sqrt{\frac{(d\land S)H^2\iota}{2N^k(x, a)}},\
    c^k(x,a) := \frac{3H^2 S\iota}{N^k(x,a)} + 2b^k(x,a), \
    \iota := \log\frac{6H^2SAK}{\delta\epsilon},
\end{equation}
where $\epsilon$ is a parameter to be decided later (we will set $\epsilon = 1/K$).


We note Lemma~\ref{lem:optimistic-value-hoeff} applies for the planning phase.
The follows lemma relates planning error with exploration regret.
\begin{lem}[Planning error]\label{lem:planning-error}
If events \eqref{eq:GH}, \eqref{eq:GhatP} hold, then for every $x,a,h,k,w$,
\[
    V^k_h (x, w) - V_h^{\pi^k} (x,w) \le \bracket{1 + \frac{1}{H} }^{H-h+1}  \cdot \overline{V}^k_h(x).
\]
In particular
\( V^k_1 (x_1; w) - V^{\pi^k}_1 (x_1; w) \le e \cdot \overline{V}^k_1 (x) \).
\end{lem}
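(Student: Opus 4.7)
The plan is to prove this by backward induction on $h$ from $H+1$ down to $1$, showing that $V^k_h(x;w) - V^{\pi^k}_h(x;w) \le (1+1/H)^{H-h+1}\,\overline{V}^k_h(x)$ for every $x$ and every preference $w$. The base case $h=H+1$ is immediate since both sides vanish. For the inductive step, the strategy is to derive a recursion that mirrors the one in Lemma \ref{lem:per-episode-regret-hoeff}, but expressed in terms of $\widehat{\Pbb}^k$ rather than $\Pbb$, so that the inductive hypothesis can be composed with the Bellman-type identity satisfied by $\overline{V}^k_h$ under the empirical kernel.

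Concretely, fix $x$, set $a := \pi^k_h(x)$, and start from the two one-step identities
\[
V^k_h(x;w) \le \abracket{w,\rB_h(x,a)} + b^k(x,a) + \eval{\widehat{\Pbb}^k V^k_{h+1}}_{x,a,w},\quad
V^{\pi^k}_h(x;w) = \abracket{w,\rB_h(x,a)} + \eval{\Pbb V^{\pi^k}_{h+1}}_{x,a,w},
\]
which hold regardless of whether the $\min\{H,\cdot\}$ clipping is active (the first inequality is trivial under clipping). Subtracting and splitting yields
\[
V^k_h - V^{\pi^k}_h \le b^k(x,a) + \eval{\widehat{\Pbb}^k(V^k_{h+1} - V^{\pi^k}_{h+1})}_{x,a,w} + \eval{(\widehat{\Pbb}^k-\Pbb)V^{\pi^k}_{h+1}}_{x,a,w}.
\]
To control the last term, write $V^{\pi^k}_{h+1} = V^*_{h+1} - (V^*_{h+1} - V^{\pi^k}_{h+1})$: the first piece is bounded by $b^k(x,a)$ via \eqref{eq:GH}, while for the second piece the monotonicity $0 \le V^*_{h+1} - V^{\pi^k}_{h+1} \le V^k_{h+1} - V^{\pi^k}_{h+1}$ (using Lemma \ref{lem:optimistic-value-hoeff}) together with Lemma \ref{lem:lower-order-terms}(2) gives an upper bound $\tfrac{1}{H}\widehat{\Pbb}^k(V^k_{h+1} - V^{\pi^k}_{h+1}) + 3H^2S\iota/N^k(x,a)$. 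Combining,
\[
V^k_h - V^{\pi^k}_h \le 2b^k(x,a) + \tfrac{3H^2 S\iota}{N^k(x,a)} + \bigl(1+\tfrac{1}{H}\bigr)\eval{\widehat{\Pbb}^k(V^k_{h+1}-V^{\pi^k}_{h+1})}_{x,a,w} = c^k(x,a) + \bigl(1+\tfrac{1}{H}\bigr)\eval{\widehat{\Pbb}^k(V^k_{h+1}-V^{\pi^k}_{h+1})}_{x,a,w},
\]
where the final equality invokes the definition of $c^k$ in \eqref{eq:plan-explore-bonus}. Applying the induction hypothesis to $V^k_{h+1}-V^{\pi^k}_{h+1}$ under the expectation $\widehat{\Pbb}^k$ yields $V^k_h - V^{\pi^k}_h \le c^k(x,a) + (1+1/H)^{H-h+1}\,\widehat{\Pbb}^k\overline{V}^k_{h+1}\big|_{x,a}$.

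Finally I would close the induction by comparing the right-hand side to $\overline{V}^k_h(x)$. Since $a = \pi^k_h(x)$, by construction of \UCBQ in Algorithm \ref{alg:exploration} with zero preference,
\[
\overline{V}^k_h(x) \ge \overline{Q}^k_h(x,a) = \min\bigl\{H,\ c^k(x,a) + \widehat{\Pbb}^k\overline{V}^k_{h+1}\big|_{x,a}\bigr\}.
\]
Split into two cases: if the argument of the $\min$ is $\ge H$, then $\overline{V}^k_h(x) = H$, and since $V^k_h \le H$ and $V^{\pi^k}_h \ge 0$ the bound is trivial; otherwise the argument of the $\min$ equals $c^k + \widehat{\Pbb}^k\overline{V}^k_{h+1}$, and using $(1+1/H)^{H-h+1} \ge 1$ we factor out this coefficient to get the desired inequality. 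The last display in the lemma then follows by evaluating at $h=1$ and noting $(1+1/H)^H \le e$.

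The main obstacle is isolating the exact factor $(1+1/H)$ rather than $(1+2/H)$ in the recursion: doing the decomposition on $(\widehat{\Pbb}^k-\Pbb)V^{\pi^k}_{h+1}$ rather than on $\Pbb(V^k_{h+1}-V^{\pi^k}_{h+1})$ is crucial because it uses the empirical Bernstein-type inequality (Lemma \ref{lem:lower-order-terms}(2)) exactly once, and thereafter keeps the empirical kernel throughout so that the $\widehat{\Pbb}^k\overline{V}^k_{h+1}$ produced by the induction hypothesis matches the one appearing in the definition of $\overline{V}^k_h$ without any further change-of-measure error. The truncation bookkeeping is otherwise a small but necessary check.
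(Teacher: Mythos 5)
Your proposal is correct and matches the paper's own proof essentially step for step: the same backward induction, the identical three-term decomposition (your splitting of $(\widehat{\Pbb}^k-\Pbb)V^{\pi^k}_{h+1}$ into $(\widehat{\Pbb}^k-\Pbb)V^*_{h+1}$ minus $(\widehat{\Pbb}^k-\Pbb)(V^*_{h+1}-V^{\pi^k}_{h+1})$ is algebraically the paper's decomposition), the same use of \eqref{eq:GH} and Lemma~\ref{lem:lower-order-terms}(2) to produce the $(1+1/H)$ factor and the $3H^2S\iota/N^k$ correction absorbed into $c^k$, and the same factoring/truncation case split at the end. No gaps.
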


\begin{proof}
We prove the conclusion by induction.
For $H+1$, the conclusion holds since both the left high side and the right hand side are zero.
Next we assume the conclusion holds for $h+1$, i.e.,
\begin{equation}\label{eq:plan-error-hypothesis}
    V^k_{h+1} (x, w) - V_{h+1}^{\pi^k} (x,w) \le \bracket{1 + \frac{1}{H} }^{H-h}  \cdot \overline{V}^k_{h+1}(x),
\end{equation}
and consider $h$.
    Recall the definitions in Algorithm~\ref{alg:exploration} and Algorithm~\ref{alg:planning}. Set $a=\pi^k_h(x)$, then we have 
    \begin{align}
        V^k_h (x; w) 
        &= Q^k_h(x,a;w) = H\land \bracket{ \abracket{w, \rB_h(x,a)} + \eval{\widehat{\Pbb}^k V^k_{h+1}}_{x,a,w} + b^k(x,a) }, \label{eq:plan-error-vk}\\
        V^{\pi^k}_h (x;w) 
        &= Q^{\pi^k}_h(x,a;w) =  \abracket{w, \rB_h (x,a)} + \eval{{\Pbb} V^{\pi^k}_{h+1}}_{x,a,w},  \label{eq:plan-error-vpi}
    \end{align}
    and
    \begin{equation*}
        \overline{V}^k_h(x) = \max_{a'}\overline{Q}^k_h(x,a') \ge \overline{Q}^k_h(x,a) = H\land \bracket{  \eval{\widehat{\Pbb}^k \overline{V}^{k}_{h+1}}_{x,a} + c^k(x,a) }.
    \end{equation*}
    If $ \overline{Q}^k_h(x,a) = H$, clearly we have 
    \(
        V^k_h(x;w) - V^{\pi^k}_h (x;w)  \le H \le \overline{V}^k_h(x),
    \)
    thus the conclusion holds.
    In the following suppose  
    \begin{equation}\label{eq:plan-error-vbar}
        \overline{V}^k_h(x) = \max_{a'}\overline{Q}^k_h(x,a') \ge \overline{Q}^k_h(x,a) =  \eval{\widehat{\Pbb}^k \overline{V}^{k}_{h+1}}_{x,a} + c^k(x,a) .
    \end{equation}
Then~\eqref{eq:plan-error-vk} and \eqref{eq:plan-error-vpi} yield
\begingroup
\allowdisplaybreaks
    \begin{align*}
        &\ V^k_h (x, w) - V_h^{\pi^k} (x,w)  
        \le  \eval{\widehat{\Pbb}^k V^k_{h+1}}_{x,a,w}  - \eval{{\Pbb} V^{\pi^k}_{h+1}}_{x,a,w}  + b^k(x,a)\\
        &= \eval{\widehat{\Pbb}^k (V^k_{h+1} - V^{\pi^k}_{h+1})  + ( \Pbb - \widehat{\Pbb}^k ) ( V^*_{h+1} - V^{\pi^k}_{h+1} )    + ( \widehat{\Pbb}^k - \Pbb )V^*_{h+1} }_{x,a,w} + b^k(x,w) \\
        &\le \eval{ \widehat{\Pbb}^k (V^k_{h+1} - V^{\pi^k}_{h+1} ) + ( \Pbb - \widehat{\Pbb}^k  ) ( V^*_{h+1}  - V^{\pi^k}_{h+1} ) }_{x,a,w} + 2b^k(x,a). \qquad (\text{by Lemma~\ref{lem:optimistic-value-hoeff}}) \\
    &\le  \bracket{1 + \frac{1}{H} } \eval{ \widehat{\Pbb}^k \bracket{V^k_{h+1} - V^{\pi^k}_{h+1} } }_{x,a,w} + \frac{3 H^2 S\iota}{N^k(x,a)} + 2b^k(x,a) \qquad (\text{by Lemma~\ref{lem:lower-order-terms}}) \\
    &\le \bracket{1 + \frac{1}{H} } \eval{ \widehat{\Pbb}^k \bracket{ \bracket{1 + \frac{1}{H} }^{H-h} \overline{V}^k_{h+1} } }_{x,a} + \frac{3H^2 S\iota}{N^k(x,a)} + 2b^k(x,a) \qquad (\text{by~\eqref{eq:plan-error-hypothesis}})\\
    &= \bracket{1 + \frac{1}{H} }^{H-h+1} \eval{ \widehat{\Pbb}^k{ \overline{V}^k_{h+1} } }_{x,a} + c^k(x,a) \qquad (\text{by~\eqref{eq:plan-explore-bonus}})\\
    &\le \bracket{1 + \frac{1}{H} }^{H-h+1}  \overline{Q}^k_h(x,a) \le \bracket{1 + \frac{1}{H} }^{H-h+1}  \overline{V}^k_h(x) \qquad (\text{by~\eqref{eq:plan-error-vbar}}).
\end{align*}
\endgroup
Thus the conclusion also holds for $h$. By induction we complete the proof. 
\end{proof}

The following lemma is a consequence of Theorem~\ref{thm:regret-formal} with zero reward.
\begin{lem}[Zero reward regret]\label{lem:zero-reward-regret}
In~\eqref{eq:plan-explore-bonus} set 
$\epsilon = 1/K$, then with probability at least $1-\delta$,
the total value (regret) of Algorithm~\ref{alg:exploration} is bounded by
\[ 
\sum_{k=1}^K \overline{V}^k_1(x_1) \lesssim \sqrt{(d\land S) \cdot H^3 SA K \iota} + H^2 S^2 A \iota^2, \quad \iota:= \log (HSAK/\delta).
\]
\end{lem}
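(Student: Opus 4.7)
The plan is to mirror the proof of Theorem~\ref{thm:regret-formal} (the Hoeffding regret bound) in the special case of zero reward, interpreting the exploration-phase optimistic value $\overline{V}^k_1(x_1)$ as the cumulative ``virtual reward'' generated by the inflated bonus $c^k(x,a)$. Since the reward is $0$, the Bellman update in $\UCBQ$ reduces to $\overline{Q}^k_h(x,a) = H\land(c^k(x,a) + \widehat{\Pbb}^k\overline{V}^k_{h+1}|_{x,a})$, so for $\bar a := \bar\pi^k_h(x) = \arg\max_a \overline{Q}^k_h(x,a)$ we have $\overline{V}^k_h(x) \le H\land(c^k(x,\bar a) + \widehat{\Pbb}^k \overline{V}^k_{h+1}|_{x,\bar a})$.

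Next, since $0 \le \overline{V}^k_{h+1} \le H$, I would apply the first inequality of Lemma~\ref{lem:lower-order-terms} (with $V_1=0$, $V_2=\overline{V}^k_{h+1}$) under the good events~\eqref{eq:GH} and~\eqref{eq:GP} to replace $\widehat{\Pbb}^k$ by $\Pbb$, at the cost of a $(1+1/H)$ multiplicative factor and an additive $\frac{2H^2 S\iota}{N^k(x,\bar a)}$ term. This yields the recursion
\[
\overline{V}^k_h(x) \;\lesssim\; H \land \Big( c^k(x,\bar a) + \tfrac{2H^2 S\iota}{N^k(x,\bar a)} + (1+\tfrac{1}{H})\,\Pbb\overline{V}^k_{h+1}|_{x,\bar a}\Big).
\]
Unrolling for $H$ steps under $\Ebb_{\Pbb,\bar\pi^k}$ and using $(1+1/H)^H \le e$ gives $\overline{V}^k_1(x_1) \lesssim \Ebb_{\Pbb,\bar\pi^k}\sum_{h=1}^H H\land\bigl(c^k(x_h,a_h) + \tfrac{H^2 S\iota}{N^k(x_h,a_h)}\bigr)$. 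Crucially, the inflation of the exploration bonus in~\eqref{eq:plan-explore-bonus}, namely the $\frac{3H^2 S\iota}{N^k(x,a)}$ summand in $c^k$, was chosen precisely so that the leftover lower-order term is absorbed; substituting $c^k = \frac{3H^2 S\iota}{N^k} + 2b^k$ and the Hoeffding form of $b^k$ produces exactly the per-episode bound of Lemma~\ref{lem:per-episode-regret-hoeff}, i.e.
\[
\overline{V}^k_1(x_1) \;\lesssim\; \Ebb_{\Pbb,\bar\pi^k}\sum_{h=1}^H H\land\Big(\epsilon + \sqrt{\tfrac{(d\land S)H^2\iota}{N^k(x_h,a_h)}} + \tfrac{H^2 S\iota}{N^k(x_h,a_h)}\Big).
\]

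Summing over $k=1,\dots,K$, I would then repeat verbatim the four-term bookkeeping from the proof of Theorem~\ref{thm:regret-formal}: split state-action pairs by the visitation threshold $L^k = \{(x,a) : \sum_{j<k}w^j(x,a) \ge 2H\iota\}$ as in~\eqref{eq:good-state-actions-hoeff}, bound under-visited pairs by $H^2SA\iota$, use event~\eqref{eq:GN} to convert $N^k(x,a)$ into $\tfrac12\sum_{j<k}w^j(x,a)$ on $L^k$, and apply the standard integration tricks $\sum_{k} w^k/N^k \lesssim SA\iota^2$ and $\sum_{k} w^k/\sqrt{N^k} \lesssim \sqrt{SA\cdot HK}$. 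These produce the $\sqrt{(d\land S)\,H^3 SAK\,\iota}$ leading term and the $H^2 S^2 A\iota^2$ lower-order term. Setting $\epsilon = 1/K$ makes the $\epsilon$-contribution negligible, and the $1-\delta$ probability statement follows from Lemma~\ref{lem:good-event-prob}.

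The only real subtlety — and what I expect to be the main sanity check rather than obstacle — is confirming that the $3H^2 S\iota/N^k$ component baked into $c^k$ is large enough to dominate the $2H^2 S\iota/N^k$ term coming out of Lemma~\ref{lem:lower-order-terms} after unrolling, so that no extra additive term degrades the final $(d\land S)$ dependence. Once that bookkeeping is verified, the conclusion follows by the same arithmetic as in Theorem~\ref{thm:regret-formal}.
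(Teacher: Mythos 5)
Your proposal is correct and is essentially the paper's own argument: the paper proves this lemma by simply invoking Theorem~\ref{thm:regret-formal} with the reward set to zero, which, once unpacked, is exactly the recursion-plus-integration-trick derivation you spell out (with the enlarged bonus $c^k$ contributing only to the $H^2S^2A\iota^2$ lower-order term). One cosmetic note: the $3H^2S\iota/N^k$ summand in $c^k$ does not need to dominate anything here --- it is absorbed by the same bookkeeping as term (iii) --- its real purpose is Lemma~\ref{lem:planning-error}, where it must cover the conversion error between $\widehat{\Pbb}^k$ and $\Pbb$ in the planning phase.
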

\begin{proof}
Use Theorem~\ref{thm:regret-formal} with preference/reward set to be zero.
\end{proof}

We are ready to prove Theorem~\ref{thm:exploration-informal}.
\begin{thm}[Restatement of Theorem~\ref{thm:exploration-informal}]\label{thm:exploration-formal}
Consider Algorithm~\ref{alg:exploration} and Algorithm~\ref{alg:planning} with bonus function \eqref{eq:plan-explore-bonus} and $\epsilon = 1/K$, and suppose that Algorithm~\ref{alg:exploration} is run for 
    \[ K \eqsim \frac{(d\land S)\cdot H^3 SA \iota}{\epsilon^2}+\frac{H^2S^2 A\iota^2 }{\epsilon}, \quad \iota :=\log\frac{HSA}{\delta\epsilon} \]
    episodes, then Algorithm~\ref{alg:planning} outputs an $(\epsilon, \delta)$-PAC policy for preference-free exploration.
\end{thm}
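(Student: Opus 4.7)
The plan is to combine the per-episode planning-error bound (Lemma~\ref{lem:planning-error}), optimism for the planning UCB-Q (Lemma~\ref{lem:optimistic-value-hoeff}), and the cumulative ``zero-reward regret'' bound (Lemma~\ref{lem:zero-reward-regret}), and then average over the uniformly drawn policy that Algorithm~\ref{alg:planning} returns. The crucial feature that makes a single exploration phase suffice for all preferences is that the bound $\overline{V}^k_1(x_1)$ is \emph{preference-free}: it was produced by running \UCBQ with $w = 0$ during exploration, so it depends only on the collected transitions and the enlarged bonus $c^k$, not on any particular $w \in \Wcal$.

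First, I would condition on the good events of Lemma~\ref{lem:good-event-prob} holding jointly (the statement of Lemma~\ref{lem:good-event-prob} already supplies high probability after a $\delta$-rescaling; note that the covering of $\Wcal$ is baked into the logarithmic factor $\iota$, so the good events are uniform in $w$). Then, for any fixed $w \in \Wcal$ and any episode $k \in [K]$, the planning UCB-Q in Algorithm~\ref{alg:planning} coincides structurally with the online UCB-Q in Algorithm~\ref{alg:online}, so Lemma~\ref{lem:optimistic-value-hoeff} gives $V^*_1(x_1;w) \le V^k_1(x_1;w)$, and Lemma~\ref{lem:planning-error} gives
\[ V^k_1(x_1;w) - V^{\pi^k}_1(x_1;w) \;\le\; e\cdot \overline{V}^k_1(x_1). \]
Chaining these two inequalities yields $V^*_1(x_1;w) - V^{\pi^k}_1(x_1;w) \le e\cdot \overline{V}^k_1(x_1)$ for every $w$ simultaneously.

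Since Algorithm~\ref{alg:planning} returns $\pi$ drawn uniformly from $\{\pi^k\}_{k=1}^K$, the value of this randomized policy equals $V^{\pi}_1(x_1;w) = \frac{1}{K}\sum_{k=1}^K V^{\pi^k}_1(x_1;w)$, so averaging the per-episode inequality and invoking Lemma~\ref{lem:zero-reward-regret} gives
\[ V^*_1(x_1;w) - V^{\pi}_1(x_1;w) \;\le\; \frac{e}{K}\sum_{k=1}^K \overline{V}^k_1(x_1) \;\lesssim\; \sqrt{\frac{(d\wedge S)\, H^3 SA\, \iota}{K}} + \frac{H^2 S^2 A \,\iota^2}{K}, \]
uniformly in $w$. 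Choosing $K \eqsim (d\wedge S) H^3 SA\,\iota/\epsilon^2 + H^2 S^2 A\,\iota^2/\epsilon$ forces each of the two summands above to be at most $\epsilon/2$, delivering the desired $(\epsilon,\delta)$-PAC guarantee.

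The only real obstacle is the uniformity-in-$w$ requirement in the PAC definition: one has to argue that the same exploration-phase data supports near-optimal planning for every preference at once, rather than for a single fixed preference. This is exactly what the pairing of Lemma~\ref{lem:planning-error} with Lemma~\ref{lem:zero-reward-regret} accomplishes, because the enlarged exploration bonus $c^k = \tfrac{3H^2 S\iota}{N^k} + 2 b^k$ is engineered so that the inductive step in Lemma~\ref{lem:planning-error} absorbs both the optimism slack ($2b^k$) and the lower-order Bernstein-type remainder ($\tfrac{3H^2 S\iota}{N^k}$) for \emph{any} preference $w$. Everything else is a routine assembly of the three lemmas and the choice of $K$; no new concentration argument is needed.
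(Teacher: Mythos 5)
Your proposal is correct and follows essentially the same route as the paper's proof: condition on the good events, chain optimism (Lemma~\ref{lem:optimistic-value-hoeff}) with the preference-free planning-error bound (Lemma~\ref{lem:planning-error}), average the uniformly drawn policy, apply the zero-reward regret bound (Lemma~\ref{lem:zero-reward-regret}), and solve for $K$. The observation that $\overline{V}^k_1(x_1)$ is preference-independent and therefore yields uniformity over $w$ is exactly the mechanism the paper relies on.
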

\begin{proof}
    Recall that in the planning phase, we uniformly sample one of the $K$ policies as our final policy. Thus, with probability at least $1-\delta$,
    \begin{align*}
        V^*_1(x_1; w) - V_1^{\pi}(x_1;w) 
        &= \frac{1}{K} \sum_{k=1}^K \bracket{ V^*_1(x_1; w) - V_1^{\pi^k}(x_1;w)  } \\
        &\le \frac{1}{K} \sum_{k=1}^K \bracket{ V_1^k (x_1; w) - V_1^{\pi^k}(x_1;w) } \qquad (\text{by Lemma~\ref{lem:optimistic-value-hoeff}}) \\
        &\le \frac{e}{K} \sum_{k=1}^K { \overline{V}_1^k (x_1) } \qquad (\text{by Lemma~\ref{lem:planning-error}}) \\
        &\lesssim  \sqrt{\frac{(d\land S)\cdot H^3 SA\iota }{K} } + \frac{H^2S^2 A \iota^2}{K}, \qquad (\text{by Lemma~\ref{lem:zero-reward-regret}}) 
    \end{align*}
    where $\iota = \log (HSAK/\delta)$,
    for the right hand side to be bounded by $\epsilon$, it suffices to set 
    \[
    K \eqsim \frac{(d\land S)\cdot H^3 SA }{\epsilon^2} \log\frac{HSA}{\delta\epsilon} + \frac{H^2S^2 A }{\epsilon}\log^2\frac{HSA}{\delta\epsilon}.
    \]
\end{proof}

\subsection{Proof of Corollary \ref{thm:rfe}}
For the first conclusion, we simply set $d=SA$ and apply Theorem \ref{thm:exploration-informal}.

For the second conclusion about non-stationary MDPs, the proof logic follows with small revisions.

First, as the MDP is non-stationary, we replace $\Pbb$ by $\set{\Pbb_h}_{h=1}^H$. Similar in the algorithms, we need to replace $N^k(x,a)$ by $\set{N^k_h(x,a)}_{h=1}^H$, which represents the number of visits to $(x,a)$ at step $h$ upto episode $k$. Then the empirical transition probability will be estimated by 
\(
\widehat{\Pbb}^k_h ( y \mid x,a) = {N^k_h(x,a,y)}/{N^k_h(x,a)},   
\)
for $N^k_h(x,a) \ge 1$.

Second, note that for stationary MDP, we have $\sum_{(x,a)} N^K(x,a) = HK$, but for non-stationary MDP, this needs to be replaced by $\sum_{(x,a)} N^K_h (x,a) = K$ for every $h$.
Then for non-stationary MDP, the integration tricks used in the proof of Theorem \ref{thm:regret-formal} needs to be revised accordingly. Let us take term (iv) as an example, the revised analysis should be:
\begin{align*}
    \text{(iv)} 
    &= \sum_{k=1}^K \sum_{h=1}^H \sum_{x,a} w^k_h(x,a) \cdot \sqrt{\frac{(d\land S)H^2 \iota}{N_h^k(x,a)}} \cdot \ind{\sum_{j<k}w^j_h(x,a) \ge 2\iota} \qquad(\text{by a revised \eqref{eq:good-state-actions-hoeff}}) \\
    &\lesssim \sum_{h=1}^H \sum_{x,a} \sum_{k=1}^K  w^k_h(x,a) \cdot \sqrt{\frac{(d\land S)H^2 \iota}{ \sum_{j<k}w^j_h(x,a) - \iota }} \cdot \ind{\sum_{j<k} w^j(x,a) \ge 2\iota} \ (\text{by a revised \eqref{eq:GN}}) \\
    &\lesssim \sqrt{(d\land S)H^2 \iota } \cdot HSA \cdot \sqrt{K} = \sqrt{ (d\land S)H^4 SA K \iota  }.\qquad(\text{integration trick})
\end{align*}
Therefore the obtained bound has an enlarged $H$ dependence.

Last, repeating the proof of Theorem \ref{thm:exploration-informal} with the revised notations and the above revised inequalities, we obtain the exploration trajectory complexity for non-stationary MDP, where the order of $H$ is enlarged due to the revised Theorem \ref{thm:regret-formal}.

\section{Proof of Theorem~\ref{thm:exploration-lower-bound-informal}}\label{append-sec:pfe-lower-bound}
In this section, we follow notations in Algorithm \ref{alg:exploration} and Algorithm \ref{alg:planning} and prove Theorem \ref{thm:exploration-lower-bound-informal}.
Our construction of hard instance is based on the results from \citet{jin2020reward}.

The following two definitions are migrated from reward-free exploitation (RFE)~\citep{jin2020reward} to preference-free exploitation (PFE) in MORL.

\begin{defi}[Set of MOMDPs]
    Fix $\Scal, \Acal, H, \rB, \Wcal$ as the state sets, action sets, length of horizon, reward vector, preferences set, respectively.
Then a set of transition probabilities $\Pscr$ induces \emph{a set of MOMDPs}, denoted as $(\Scal, \Acal, H, \Pscr, \rB, \Wcal) := \set{(\Scal, \Acal, H, \Pbb, \rB, \Wcal) : \Pbb \in \Pscr}$.
\end{defi}

\begin{defi}[$(\epsilon, p)$-correctness]
We say a PFE algorithm is $(\epsilon, p)$-correct for a set of MOMDPs $(\Scal, \Acal, H, \Pscr, \rB, \Wcal)$, if with probability at least $1-p$, 
\begin{equation*}
    V_1^{\pi_{w}} (x_1; w; \Pbb) \ge V_1^*(x_1 ;w ; \Pbb) - \epsilon, \ \text{ for all } w\in\Wcal \text{ and } \Pbb \in \Pscr,
\end{equation*}
where $\pi_{w}$ is the policy given by the PFE algorithm for preference $w\in\Wcal$.
\end{defi}

\paragraph{Basic hard instance.}
We specify the following set of MOMDPs $(\Scal, \Acal, 2, \Pscr_{\basic}(s,\epsilon), \rB, \Wcal)$ as a basic hard instance:
\begin{itemize}
    \item A state set $\Scal := \set{s}\cup [d]$, where $s$ is the initial state, and $\abs{\Scal} = d+1$.
    \item An action set $\Acal := [A]$. 
    \item A horizon length $H=2$. 
    \item A $d$-dimensional action-independent reward vector, described by a matrix $\rB := \bracket{0, I_{d\times d}} \in \Rbb^{d \times (d+1)}$, i.e., the $i$-th row $\rB^{(i)}(\cdot) \in \Rbb^{d+1}$ sets reward $1$ at state $i$, and reward $0$ elsewhere. 
    \item A preference set $\Wcal := \set{w\in\Rbb^d, \norm{w}_1 = 1}$. For $w\in\Wcal$, ${w^\top \rB} \in \Rbb^{d+1}$ gives a scalarized reward.
    \item A set of transition probabilities
    \begin{align*}
        \Pscr_{\basic}(s,\epsilon) := \set{\Pbb\; :\; \forall a\in[A],\ i\in[d],\ \abs{\Pbb\bracket{i \given s,a}-\frac{1}{d}} \le \frac{\epsilon}{d},
        \ \Pbb\bracket{i \given i, a} = 1}.
    \end{align*}
    For $\Pbb\in\Pscr_{\basic}(s,\epsilon)$, the initial state is always set to be $s$; then the agent takes an action and transits to $[d]$ with a near uniform distribution; finally, $[d]$ are all absorbing states.
\end{itemize}

For the above basic hard instance, \citet{jin2020reward} gives the following lemma to characterizes its learning complexity.
\begin{lem}[Lower bound on the sample complexity of PFE for the basic hard instance]\label{lem:basic-hard-instance}
Fix $\epsilon \le 1, p\le \frac{1}{2}, A\ge 2$, suppose $d\ge \bigO{\log A}$.
There exists a distribution $\Dcal$ over $\Pscr_{\basic}(s,\epsilon)$, such that any $(\epsilon/12, p)$-correct PFE algorithm $\algo$ for $(\Scal, \Acal, 2, \Pscr_{\basic}, \rB, \Wcal)$ must satisfy
\begin{equation*}
    \Ebb_{\Pbb\sim\Dcal}\expect[\Pbb, \algo]{K} \ge \bigOmg{\frac{dA}{\epsilon^2}},
\end{equation*}
where $K$ is the number of trajectories collected by $\algo$ in the exploration phase.
\end{lem}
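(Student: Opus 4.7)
The strategy is a Bayesian lower bound via Assouad's method, following the template of \citet{jin2020reward} for reward-free exploration. I would specify $\Dcal$ by drawing an independent Rademacher sign matrix $\theta \in \set{-1,+1}^{A\times d}$ and setting the transitions from $s$ so that $\Pbb_\theta(i \mid s,a)$ encodes $\theta_{a,i}$ via a perturbation of size $\Theta(\epsilon/d)$ around the uniform distribution (with a small action-dependent correction so each row sums to one, keeping $\Pbb_\theta \in \Pscr_{\basic}(s,\epsilon)$). Because $H=2$ and the states in $[d]$ are absorbing with reward matrix $\rB = (0, I)$, the value of playing $a$ at $s$ under preference $w\in\Wcal$ reduces to the inner product $\abracket{w, \Pbb_\theta(\cdot\mid s,a)}$, which is affinely linear in $\abracket{w, \theta_a}$. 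The $(\epsilon/12,p)$-correctness requirement applied to the $d$ canonical preferences $w = e_i$ therefore translates into recovering, for each coordinate $i$, an action $a$ with $\theta_{a,i} = +1$ (which exists under the prior with probability $1 - 2^{-A}$).

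Given this reduction, the rest of the proof is standard multi-hypothesis testing. Each episode starts at $s$, applies one action $a\in [A]$, and produces one multinomial draw from $\Pbb_\theta(\cdot\mid s,a)$; let $N_a$ denote the expected number of episodes that play action $a$ at $s$. Flipping a single bit $\theta_{a,i}$ alters only the $a$-th transition row, and a direct KL computation yields a per-episode divergence $O(\epsilon^2/d)$ between the pre- and post-flip kernels, since the $i$-th coordinate is perturbed by $O(\epsilon/d)$ around a baseline mass of $1/d$. Assouad's lemma, applied to the product Rademacher prior restricted to the bits $\set{\theta_{a,i}}_{i\in [d]}$ of a single row, then implies that for any algorithm the average Hamming error on row $a$ is $\Omega(d)$ whenever $\Ebb[N_a] = o(d/\epsilon^2)$. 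Being $(\epsilon/12,p)$-correct on all of $\set{e_i}_{i=1}^d$ simultaneously forces each row's Hamming error to be $o(d)$, hence $\Ebb[N_a] \gtrsim d/\epsilon^2$ for every $a\in [A]$, and summing across actions gives $\Ebb[K] = \sum_a \Ebb[N_a] \gtrsim Ad/\epsilon^2$.

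The main technical obstacle is executing Assouad's lemma with multinomial (rather than Bernoulli) observations, since a single multinomial draw from $\Pbb_\theta(\cdot\mid s,a)$ could, a priori, provide joint information about all $d$ bits $\theta_{a,\cdot}$ faster than $d$ independent Bernoulli tests would permit. The resolution is that each bit perturbs only a single coordinate of base mass $1/d$, so the per-sample Fisher information about any one bit remains $\Theta(\epsilon^2/d)$, which matches the Bernoulli rate and keeps the product-Assouad bound tight. The side condition $d \ge \Omega(\log A)$ enters to ensure the coordinate-wise maxima $\max_a \theta_{a,i}$ exhibit a typical gap under the Rademacher prior, so that the PAC-to-Hamming reduction in the first paragraph is nondegenerate; this matches the parameter regime used by \citet{jin2020reward} for the analogous reward-free exploration lower bound, whose Bayesian machinery the plan directly imports.
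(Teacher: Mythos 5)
The paper's own proof of this lemma is essentially a one\hyphen line citation: it observes that the basic hard instance, together with the class of scalarized rewards that the preferences generate, coincides with the basic hard instance of \citet{jin2020reward}, Lemma D.2, and imports their $\Omega(dA/\epsilon^2)$ bound verbatim. You instead attempt to reprove that bound from scratch via Assouad's method. That is a legitimately more ambitious route, but your reduction has a concrete flaw at its first step.

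The problem is the PAC-to-Hamming reduction. With transitions $\Pbb_\theta(i\mid s,a) = \frac{1}{d}\bigl(1+\Theta(\epsilon)\,\theta_{a,i}\bigr)$ and test preference $w=e_i$, the value of playing action $a$ is exactly $\Pbb_\theta(i\mid s,a)$, so the gap between an action with $\theta_{a,i}=+1$ and one with $\theta_{a,i}=-1$ is only $\Theta(\epsilon/d)$. Hence for $d\ge 24$ \emph{every} action is $(\epsilon/12)$-optimal for every canonical preference, and $(\epsilon/12,p)$-correctness imposes no constraint on the bits $\theta_{a,i}$; the $\Omega(d)$ Hamming-error bound from Assouad's lemma then says nothing about the algorithm. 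More generally, under the normalization $\norm{w}_1=1$ used for the basic instance, the value gap between any two actions is at most $\frac{2\epsilon}{d}\norm{w}_1 = 2\epsilon/d$, so no test preference in that class can create the $\Theta(\epsilon)$ separation your argument needs. What \citet{jin2020reward} actually do --- and what the paper's citation is implicitly relying on --- is to test with \emph{dense}, sup-norm-bounded rewards of the form $r(\cdot)=\ind{\theta_{a^*,\cdot}=+1}$: then action $a^*$ beats every other action by $\Theta(\epsilon)$, using the fact that for $d\gtrsim\log A$ the $A$ random sign rows are pairwise near-orthogonal (this, rather than a ``typical gap of the coordinate-wise maxima,'' is the role of the condition $d\ge\Omega(\log A)$). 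The resulting hypothesis test is over which of $2^{\Theta(d)}$ sign patterns each row carries, with per-episode KL of order $\epsilon^2$, yielding $\Ebb[N_a]\gtrsim d/\epsilon^2$ per action and $\Ebb[K]\gtrsim dA/\epsilon^2$ overall. Your per-bit information count is consistent with this, but it cannot be activated until the reduction connecting $(\epsilon/12)$-correctness to the recovery task is repaired along these lines.
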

\begin{proof}
    See \citet{jin2020reward}, Lemma D.2.
    Note that by the construction of the basic hard instance, the preference-weighted reward recovers any reward that is action-independent, thus for the basic hard instance defined in the language of PFE, it equals to the basic hard instance studied by \citet{jin2020reward}, Lemma D.2 for RFE.
\end{proof}

\paragraph{Hard instance in full version.}
Based on the basic hard instance, we next build the hard instance in full version that witnesses the lower bound in Theorem~\ref{thm:exploration-lower-bound-informal}.
Let $n = 2^{\ell_0}$.
The hard instance is a set of MOMDPs $(\Scal, \Acal, H, \Pscr, \rB, \Wcal)$ specified as follows:
\begin{itemize}
    \item A state set
    \[
        \Scal := \set{(s,\ell): s\in [2^\ell],\ \ell = 0,\dots,\ell_0} \bigcup \set{(s,\ell_0 + 1):s\in[d]},
    \]
    where $(0,0)$ is the initial state.
    The states can be viewed as a $\ell_0$-layer binary tree with $d$ leaves attached to the last layer. 
    Clearly, $\abs{\Scal} = 2^{\ell_0+1} - 1 + d = 2n-1+d = \Theta\bracket{n}$.
    \item An action set $\Acal := [A]$. 
    \item A horizon length $H$.
    \item A $2d$-dimensional action-independent reward vector, described by a matrix 
    \[
        \rB := 
        \begin{pmatrix}
        0_{d\times (n-1)} & A_{d\times n} & 0_{d\times d} \\
        0_{d\times (n-1)} & 0_{d\times n} & I_{d\times d}
        \end{pmatrix} \in \Rbb^{2d \times (2n-1+d)},
    \]
    where $A\in\Rbb^{d\times n}$ is a fixed matrix that we will specify later.
    \item A preference set $\Wcal := \Wcal_A \oplus \Wcal_{\basic} \subset \Rbb^{2d}$.
    Here $\Wcal_{\basic} \subset \Rbb^d$ is the preference set for the basic hard instance and gives the set of the last $d$-dimensions of a preference vector.
    And we define $\Wcal_A := \set{A e_i,\ e_i\in\Rbb^n,\ i=1,\dots, n} \subset \Rbb^d$, which gives the set of the first $d$-dimensions of a preference vector.
    Here $e_i$ is the $i$-th standard coordinate vector for $\Rbb^n$.

    Then ${w^\top \rB \in \Rbb^{2n-1+d}}$ gives a scalarized reward.
    The first $2n-1$ dimensions of the scalarized reward ${w^\top \rB}$ are always zero, i.e., the reward for the states in the $0,\dots,\ell_0$-th layers is always zero.
    The last $d$ dimensions of the scalarized reward is sampled from $\set{ A^\top A e_i,\ i=1,\dots, n}$, which specifies the reward for states in the $(\ell_0+1)$-th layer.
    \item A set of transition probabilities
    \begin{align*}
    \Pscr := \big\{
    & \prob{x_{\ell+1} = (s, \ell+1) \given x_\ell = (s,\ell), a = 1} = 1, \ \ell=0,\dots,\ell_0-1,\\
    & \prob{x_{\ell+1} = (2^\ell + s, \ell+1) \given x_\ell = (s,\ell), a > 1} = 1, \ \ell=0,\dots,\ell_0-1;\\
    & \prob{x_{\ell_0 + 1} \given x_{\ell_0} = (s,\ell_0), a} = \Pbb_{\basic}^s, \ \Pbb_{\basic}^s \in \Pscr_{\basic}(s,\epsilon)
    \big\}.
\end{align*}
Notice the transition probabilities up to the $\ell_0$-th layer are fixed. 
And the transition probability from the $\ell_0$-th layer to the $(\ell_0 + 1)$-th layer is specified by $n$ multiples of the transition probabilities that are defined earlier for the basic hard instance.
In sum, a transition probability $\Pbb \in \Pscr$ corresponds to $n$ multiples of transition probabilities $\Pbb_{\basic}^{s} \in \Pscr_{\basic}(s,\epsilon)$, $s\in[n]$.
\end{itemize}


The following lemma is the key ingredient for obtaining lower bound for PFE.
\begin{lem}\label{lem:jl-embed}
Let $e_1, \dots, e_n$ be the standard coordinate vector for $\Rbb^n$. 
Let $e_0$ be the zero vector.
Suppose $d \ge \frac{200}{\epsilon_1^2}\log (n+1)$.
Then there exists a matrix $A \in \Rbb^{d\times n}$ such that 
\begin{align*}
    \norm{A^\top A e_i - e_i}_\infty \le \epsilon_1,\quad i=1,\dots,n.
\end{align*}
\end{lem}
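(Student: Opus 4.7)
\textbf{Proof proposal for Lemma~\ref{lem:jl-embed}.}
The plan is a direct application of the Johnson–Lindenstrauss (JL) lemma to the finite set $T := \{e_0, e_1, \dots, e_n\} \subset \mathbb{R}^n$ of $n+1$ points, combined with the polarization identity to pass from approximate isometry to approximate orthonormality of the images.

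First, I would invoke the standard JL lemma: for any $\epsilon_0 \in (0,1)$ and any set of $n+1$ points in $\mathbb{R}^n$, provided $d \geq C\epsilon_0^{-2}\log(n+1)$ for an appropriate absolute constant $C$ (taking, e.g., a random matrix with i.i.d.\ Gaussian entries scaled by $1/\sqrt{d}$, and applying a union bound over the $\binom{n+1}{2}$ pairs), there exists a linear map $A : \mathbb{R}^n \to \mathbb{R}^d$ such that
\begin{equation*}
    (1-\epsilon_0)\|u-v\|_2^2 \;\leq\; \|Au - Av\|_2^2 \;\leq\; (1+\epsilon_0)\|u-v\|_2^2
    \quad \text{for all } u,v \in T.
\end{equation*}
Setting $v = e_0 = 0$ gives $\bigl|\|Ae_i\|_2^2 - 1\bigr| \leq \epsilon_0$ for every $i \in [n]$, since $\|e_i\|_2 = 1$.

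Next, for any distinct $i,j \in [n]$, the identity $\|e_i - e_j\|_2^2 = 2$ and the JL guarantee give $\bigl|\|Ae_i - Ae_j\|_2^2 - 2\bigr| \leq 2\epsilon_0$. Expanding $\|Ae_i - Ae_j\|_2^2 = \|Ae_i\|_2^2 + \|Ae_j\|_2^2 - 2\langle Ae_i, Ae_j\rangle$ and combining with the diagonal bound above yields $|\langle Ae_i, Ae_j\rangle| \leq 2\epsilon_0$ via the triangle inequality. Observing that the $j$-th coordinate of $A^\top A e_i$ equals $\langle Ae_i, Ae_j\rangle$, the diagonal and off-diagonal estimates combine to give
\begin{equation*}
    \|A^\top A e_i - e_i\|_\infty \;\leq\; 2\epsilon_0,\qquad i = 1,\dots,n.
\end{equation*}
Choosing $\epsilon_0 := \epsilon_1/2$ finishes the proof; the required dimension $d \geq 4C\epsilon_1^{-2}\log(n+1)$ is consistent with the constant $200$ stated in the lemma when $C$ is the standard JL constant.

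There is no real obstacle here beyond bookkeeping on the numerical constant in the JL lemma; the substantive content is simply that an approximate isometry on $\{0,e_1,\dots,e_n\}$ automatically yields an approximately orthonormal family $\{Ae_i\}$, hence a Gram matrix $A^\top A$ close to the identity when evaluated on the standard basis. This is exactly the property needed downstream to encode $n = 2^{\ell_0}$ near-orthogonal reward directions into a $d$-dimensional preference space, refining the hard instance of \citet{jin2020reward}.
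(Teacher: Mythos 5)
Your proposal is correct and follows essentially the same route as the paper's proof: apply the Johnson--Lindenstrauss lemma to the $n+1$ points $\{e_0, e_1, \dots, e_n\}$, use the polarization identity to convert approximate distance preservation into approximate orthonormality of $\{Ae_i\}$, and read off the entries of $A^\top A e_i$ as inner products $\langle Ae_i, Ae_j\rangle$. The only cosmetic difference is that you invoke the squared-norm form of JL directly while the paper uses the norm form and squares it, yielding slightly different intermediate constants ($2\epsilon_0$ versus $4\epsilon$) before the final rescaling.
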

\begin{proof}
For $d \ge \frac{8}{\epsilon^2}\log (n+1)$, by Johnson-Lindenstrauss lemma~\citep{johnson1984extensions}, there exists matrix $A \in \Rbb^{d\times n}$, such that
\[
    (1-\epsilon)\norm{e_i - e_j}_2 \le \norm{A e_i - A e_j}_2 \le (1+\epsilon)\norm{e_i - e_j}_2,\quad \forall i, j \in \set{0,1,\dots,n}.
\]
Thus for $i\ne j$,
\begin{align*}
    \abracket{Ae_i, Ae_j} & = \frac{1}{2}\bracket{\norm{A(e_i - e_0)}^2_2 + \norm{A(e_j-e_0)}_2^2 - \norm{A(e_i-e_j)}^2_2} \\
    &\begin{cases}
    \le \half \bracket{2(1+\epsilon)^2 - 2(1-\epsilon)^2} = 4\epsilon, \\
    \ge \half \bracket{2(1-\epsilon)^2 - 2(1+\epsilon)^2} = -4\epsilon.
    \end{cases}
\end{align*}
And for $i=j$,
\[
    \abracket{Ae_i, Ae_i} = \norm{A(e_i-e_0)}^2_2 
    \begin{cases}
    \le (1+\epsilon)^2 \le 1+3\epsilon, \\
    \ge (1-\epsilon)^2 \ge 1-3\epsilon.
    \end{cases}
\]
In sum
\[
    \delta_{i,j}-4\epsilon \le \abracket{A^\top A e_i,\ e_j} \le \delta_{i,j} + 4\epsilon.
\]
Note that $A^\top A e_i = \sum_{j=1}^n \abracket{A^\top A e_i, e_j}$.
Thus the above inequality implies 
\(
    \norm{A^\top A e_i - e_i}_{\infty} \le 4\epsilon.
\)
A rescale of $\epsilon$ completes the proof.
\end{proof}

We can then extend results from \citet{jin2020reward} to PFE using Lemma~\ref{lem:jl-embed}.

\begin{lem}[Generalized Lemma D.3 of~\cite{jin2020reward}]\label{lem:visit-critic-state}
Fix $\epsilon < \frac{1}{4}$, $\epsilon_0 < \frac{1}{16H}$ and $\epsilon_1 < \frac{1}{16}$.
Consider $\Pbb \in \Pscr$ and $w = ( A e_s,\, v) \in \Wcal$, where $v\in\Rbb^d$ is nearly uniform, i.e., 
\[
   \norm{v - \frac{1}{d}\onebb    }_\infty \le \frac{\epsilon_0}{d}.
\]
Then if policy $\pi$ is $\epsilon$-optimal, i.e.,
\(
    V_1^{\pi}(\Pbb, w) \ge V_1^{*}(\Pbb, w) - \epsilon,
\)
it must visit state $(s,\ell_0)$ with probability at least $\half$, i.e.,
\[
    \Pbb^{\pi} [x_{\ell_0} = (s, \ell_0)] \ge \frac{1}{2}.
\]
\end{lem}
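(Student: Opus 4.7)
The plan is to compute $V_1^\pi(\Pbb, w)$ and $V_1^*(\Pbb, w)$ explicitly in terms of the hitting probabilities $p_{s'} := \Pbb^\pi[\text{trajectory reaches } (s', \ell_0)]$ for $s' \in [n]$, and then invert the $\epsilon$-optimality inequality to lower bound $p_s$. Along any trajectory, the cumulative reward decomposes into three epochs dictated by the block structure of $\rB$: (i) during the first $\ell_0$ steps the state sits in layers $0,\dots,\ell_0-1$ and earns zero reward (these are the first $n-1$ zero columns of $\rB$); (ii) at the step when the state is in layer $\ell_0$, say $(s',\ell_0)$, the reward equals $(Ae_s)^\top A e_{s'} = (A^\top A)_{s,s'}$; (iii) during the remaining $H-\ell_0-1$ steps the state is some absorbing leaf $i \in [d]$ with per-step reward $v_i$.

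For epoch (ii), Lemma~\ref{lem:jl-embed} gives $(A^\top A)_{s,s} \in [1-\epsilon_1, 1+\epsilon_1]$ and $|(A^\top A)_{s,s'}| \le \epsilon_1$ for $s'\ne s$, so its expected contribution to $V_1^\pi$ lies in $[p_s - \epsilon_1,\, p_s + \epsilon_1]$. For epoch (iii), the per-step expected reward from $(s',\ell_0)$ after any action $a$ satisfies
\[
    \Big| \sum_i \Pbb_{\basic}^{s'}(i \mid s', a)\, v_i - \tfrac{1}{d} \Big| \le \sum_i \Pbb_{\basic}^{s'}(i \mid s', a) \cdot \tfrac{\epsilon_0}{d} = \tfrac{\epsilon_0}{d},
\]
using $\sum_i \Pbb_{\basic}^{s'}(i\mid s',a) = 1$ and $|v_i - 1/d| \le \epsilon_0/d$. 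Hence the total epoch-(iii) contribution to $V_1^\pi$ lies in $[(H-\ell_0-1)(1-\epsilon_0)/d,\, (H-\ell_0-1)(1+\epsilon_0)/d]$, uniformly in $\pi$.

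Combining the three epochs gives $V_1^\pi \le p_s + \epsilon_1 + (H-\ell_0-1)(1+\epsilon_0)/d$, while the deterministic policy that routes through the fixed binary tree to $(s,\ell_0)$ attains hitting probability $1$ at $(s,\ell_0)$ and so $V_1^* \ge 1 - \epsilon_1 + (H-\ell_0-1)(1-\epsilon_0)/d$. Subtracting and using $V_1^\pi \ge V_1^* - \epsilon$ yields $p_s \ge 1 - \epsilon - 2\epsilon_1 - 2(H-\ell_0-1)\epsilon_0/d$. Plugging in $\epsilon < 1/4$, $\epsilon_1 < 1/16$, and $\epsilon_0 < 1/(16H)$, the right-hand side is at least $1 - 1/4 - 1/8 - 1/(8d) \ge 1/2$.

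The delicate point is epoch (iii): naively the $H-\ell_0-1$ per-step biases could dwarf the $\Theta(1-p_s)$ signal from epoch (ii), but each per-step bias is only $\epsilon_0/d$ (not $\epsilon_0$) because $v$ is concentrated near $(1/d)\onebb$ and $\Pbb_{\basic}^{s'}(\cdot\mid s',a)$ is itself a probability distribution summing to one. This is precisely why the lemma requires $\epsilon_0 < 1/(16H)$ rather than a weaker constant bound. In effect the absorbing layer conveys no preference-dependent information about $s$, so any $\epsilon$-optimal policy under $w=(Ae_s,v)$ must funnel probability at least $1/2$ onto $(s,\ell_0)$.
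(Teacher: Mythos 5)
Your proposal is correct and follows essentially the same route as the paper: decompose the value by layer, use Lemma~\ref{lem:jl-embed} to isolate the $\Theta(1)$ reward at $(s,\ell_0)$, show the absorbing-layer contribution is policy-independent up to $O(H\epsilon_0)$, and invert the $\epsilon$-optimality inequality. Your per-step bound of $\epsilon_0/d$ on the final-layer bias is in fact slightly tighter than the $\epsilon_0$ used in the paper's write-up, but the conclusion and constants work out the same way.
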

\begin{proof}
From the preference vector we compute the scalarized reward as 
\( r = \bracket{0, A^\top A e_s, v}, \)
i.e., states in the $0, \dots, (\ell_0-1)$-th layers has zero reward, states in the $\ell_0$-th layer has reward as $A^\top A e_s$, which takes value nearly $1$ at state $(s,\ell_0)$ and value nearly $0$ at other states, and states in the last layer has nearly uniform reward $v$.

Let $\bar{v} := \frac{1}{d}\sum_{y=1}^d v[y]$ be the expected reward in the final layer under a uniform visiting distribution.
Then by construction we have
\(
    \abs{v(y) - \bar{v}} \le \epsilon_0,
\)
which implies
\(
    \abs{v(y) - v(y')} \le 2\epsilon_0.
\)
Therefore,
\[
    \abs{\Pbb v(\cdot) - \Pbb' v(\cdot)} \le 2\epsilon_0,
    \text{ for any two transition probability $\Pbb$ and $\Pbb'$.}
\]
Now we can compute the value of a policy $\pi$:
\begin{align*}
    V_1^\pi 
    &= \abracket{\Pbb^\pi \sbracket{ (\cdot,\ell_0) }, A^\top A e_s(\cdot) }
    + (H-\ell_0-1) \abracket{\Pbb^\pi \sbracket{(\cdot,\ell_0+1)}, v(\cdot)} \\ &\qquad (\text{since the last layer is absorbing}) \\
    &\le (1+\epsilon_1) \Pbb^\pi \sbracket{ (s,\ell_0) } + \epsilon_1 \bracket{1-\Pbb^\pi \sbracket{ (s,\ell_0) }} +  (H-\ell_0-1) \abracket{\Pbb^\pi \sbracket{(\cdot,\ell_0+1)}, v(\cdot)}  \\
    &\qquad (\text{by Lemma~\ref{lem:jl-embed}}) \\
    &\le \epsilon_1 + \Pbb^\pi \sbracket{ (s,\ell_0) } + (H-\ell_0-1) \abracket{\Pbb^\pi \sbracket{(\cdot,\ell_0+1)}, v(\cdot)} .
\end{align*}

On the other hand we can lower bound the value of the optimal policy by a policy such that $x_{\ell_0} = (s,\ell_0)$ is taken with probability $1$ (this is doable since the transition probability before the $\ell_0$-th layer is deterministic):
\[
    V^*_1 
    \ge A^\top A e_s(s) + (H-\ell_0-1) \abracket{\Pbb^* \sbracket{(\cdot,\ell_0+1)}, v(\cdot)} 
    \ge 1-\epsilon_1 + (H-\ell_0-1) \abracket{\Pbb^* \sbracket{(\cdot,\ell_0+1)}, v(\cdot)}.
\]
In sum
\begin{align*}
    \frac{1}{4} 
    &\ge \epsilon 
    \ge V^*_1 - V^{\pi}_1 
    \ge 1-\Pbb^\pi \sbracket{ (s,\ell_0) } - 2\epsilon_1 - 2(H-\ell_0-1)\epsilon_0 \\
    &\ge 1 - 2\epsilon_1 - 2H\epsilon_0 - \Pbb^\pi \sbracket{(s,\ell_0) } 
    \ge \frac{3}{4} - \Pbb^\pi \sbracket{(s,\ell_0) }.
\end{align*}

\end{proof}

\begin{lem}[Lemma D.4 of \cite{jin2020reward}]\label{lem:reduce-to-basic-case}
Suppose $H \ge 2(\ell_0+1)$.
Then a PFE algorithm $\algo$ that is $(\epsilon, p)$-correct for the hard instance induces $n$ PFE algorithms $\algo^{s}, s\in[n]$, which are all $(\frac{\epsilon}{4H}, p)$-correct for the basic hard instance. 
\end{lem}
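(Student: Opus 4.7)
The plan is a reduction by embedding. For each $s \in [n]$, I would build $\algo^{s}$ by inserting the given basic instance at leaf $s$ of the full hard instance and running $\algo$ on top of the resulting full MOMDP. Concretely, given an unknown basic transition $\Pbb^{s}_{\basic} \in \Pscr_{\basic}(s,\epsilon)$, $\algo^{s}$ fixes an arbitrary known transition in $\Pscr_{\basic}(s', \epsilon)$ (for instance, the uniform one) at every other leaf $s' \neq s$, and then simulates $\algo$. The first $\ell_0$ transitions of every trajectory are deterministic and can be simulated internally; whenever the simulated trajectory reaches a leaf $s' \ne s$, the next state is drawn using the transition $\algo^{s}$ itself chose; and whenever it reaches leaf $s$, $\algo^{s}$ consumes one fresh trajectory from the given basic instance. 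This matches trajectory counts one-for-one, so the exploration budget of $\algo^{s}$ on the basic instance equals that of $\algo$ on the full instance.

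For the planning phase, given any basic preference $v \in \Wcal_{\basic}$, $\algo^{s}$ constructs the full preference $w = (A e_s, v) \in \Wcal$, invokes $\algo$'s planner to obtain a policy $\pi$ for the full instance, and returns $a^{s} := \pi_{\ell_0}((s, \ell_0))$ as its policy for the basic instance. With probability at least $1-p$ the policy $\pi$ is $\epsilon$-optimal in the full instance under $w$, so Lemma~\ref{lem:visit-critic-state} guarantees $\Pbb^{\pi}[x_{\ell_0} = (s, \ell_0)] \ge 1/2$ and the extracted action $a^{s}$ is actually exercised with constant probability. Now the value of $\pi$ in the full instance splits into a contribution from the $\ell_0$-th layer, where the reward $A^{\top} A e_s$ is a near-indicator of $s$ by Lemma~\ref{lem:jl-embed}, plus a contribution from the absorbing final layer of length $H - \ell_0 - 1$, in which the near-uniform $v$ yields approximately the same per-step reward everywhere \emph{except} that at leaf $s$ the per-step reward depends on the basic-instance action $a^{s}$. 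Comparing $V^{\pi}_1$ to the optimum $V^{*}_1$, achieved by deterministically routing into leaf $s$ and then playing the basic-optimal action there, all terms involving leaves $s' \ne s$ cancel up to $O(\epsilon_1 + H\epsilon_0)$ approximation error, and the dominant residual is $p_s (H - \ell_0 - 1) \cdot \Delta^{s}_{\basic}(a^{s})$, where $\Delta^{s}_{\basic}(a^{s})$ is the suboptimality of $a^{s}$ on the basic instance under preference $v$. Combining $V^{*}_1 - V^{\pi}_1 \le \epsilon$ with $p_s \ge 1/2$ and $H - \ell_0 - 1 \ge H/2$ (using $H \ge 2(\ell_0 + 1)$) then gives $\Delta^{s}_{\basic}(a^{s}) \lesssim \epsilon / H$, which after tracking constants becomes $\le \epsilon/(4H)$.

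The main obstacle I anticipate is the cancellation in the value decomposition: the policy $\pi$ is in general suboptimal at \emph{every} leaf, and the rewards $A^{\top}A e_s(s')$ at off-critical leaves $s' \neq s$ are only approximately zero. I would need to choose the comparator for $V^{*}_1$ carefully (the policy that is basic-optimal at $s$ and arbitrary elsewhere) so that the off-leaf suboptimalities enter only through the $O(\epsilon_1)$ and $O(H\epsilon_0)$ slack coming from Lemma~\ref{lem:jl-embed} and the near-uniformity of $v$, rather than through any uncontrolled basic-instance gap at other leaves. This mirrors the bookkeeping already carried out in the proof of Lemma~\ref{lem:visit-critic-state}, and once those lower-order terms are absorbed, the quoted $\epsilon/(4H)$ bound follows immediately from the $1/p_s$ and $1/(H - \ell_0 - 1)$ blowups above.
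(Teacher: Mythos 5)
Your proposal follows essentially the same route as the paper: set the full preference $w=(Ae_s,v)$, invoke Lemma~\ref{lem:visit-critic-state} to get the $1/2$ visitation probability of $(s,\ell_0)$, and lower-bound the full-instance regret by $\Pbb^{\pi}[x_{\ell_0}=(s,\ell_0)]\cdot(H-\ell_0-1)$ times the basic-instance suboptimality, using $H\ge 2(\ell_0+1)$ to absorb the horizon factor. The only difference is that you spell out the exploration-phase simulation (one-for-one trajectory accounting at leaf $s$) and the approximation bookkeeping for the off-critical leaves, both of which the paper leaves implicit; these are refinements, not a different argument.
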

\begin{proof}
Suppose policy $\pi$ satisfies 
\( V^\pi_1(\Pbb, w) \ge V^*_1(\Pbb, w) - \epsilon,\) for all $\Pbb \in \Pscr$ and $w\in\Wcal$.
Then by setting $w = \bracket{0, A^TA e_s, v}$ for $s=1,\dots, n$, we obtain $n$ sub-policies, each of which visits a corresponding state from $(1,\ell_0), \dots, (n,\ell_0)$ with probability at least $\half$ by Lemma~\ref{lem:visit-critic-state} and induces a near-optimal policy for the basic hard instance. 
The last claim is since
\begin{align*}
    \epsilon 
    &\ge V^*_1(\Pbb, w) - V^{\pi}_1 (\Pbb, w) \\
    &\ge \Pbb^{\pi}\sbracket{x_{\ell_0} = (s,\ell_0)} \cdot (H - \ell_0 - 1)\cdot \bracket{ V_1^*(s; \Pbb_{\basic}^s, v) - V_1^{\pi}(s; \Pbb_{\basic}^s, v)} \\
    &  \ge \half (H - \ell_0 - 1) \cdot \bracket{ V_1^*(s; \Pbb^s_{\basic}, v) - V_1^{\pi}(s; \Pbb^s_{\basic}, v)} \qquad (\text{by Lemma~\ref{lem:visit-critic-state}})\\
    & \ge\frac{H}{4}\cdot \bracket{ V_1^*(s; \Pbb_{\basic}^s, v) - V_1^{\pi}(s; \Pbb_{\basic}^s, v)}, \qquad (\text{since $H \ge 2(\ell_0 + 1)$})
\end{align*}
which implies
\(
     V^{\pi}_1(s; \Pbb_{\basic}^s, v) \ge V^*_1(s; \Pbb_{\basic}^s, w) - \frac{\epsilon}{4H}
\)
holds with probability $p$.

\end{proof}

We are now ready to state
Theorem~\ref{thm:exploration-lower-bound-informal} formally and deliver the proof.
\begin{thm}[Restatement of Theorem~\ref{thm:exploration-lower-bound-informal}]\label{thm:exploration-lower-bound-formal}
Fix $\epsilon, p$.
There exists a set of MOMDPs induced by a set of transition probabilities $\Pscr$, and a distribution $\Dcal$ over $\Pscr$, such that if a PFE algorithm ($\algo$) is $(\epsilon,p)$-correct for the set of MOMDPs, then the number of trajectories $K$ that $\algo$ needs to collect in the exploitation phase must satisfy
\begin{equation*}
    \Ebb_{\Pbb\sim \Dcal} \Ebb_{\Pbb, \algo} [K] \ge \bigOmg{\frac{\min\set{d,S} \cdot H^2 S A}{\epsilon^2}}.
\end{equation*}
\end{thm}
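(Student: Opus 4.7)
The plan is to adapt the full-version hard instance defined above and argue a lower bound at each of the $n$ binary-tree leaves separately. The pipeline is: (i) instantiate the hard instance with carefully chosen number of leaves $n$ and internal reward dimension $d'$; (ii) reduce $(\epsilon, p)$-correctness on it to $(\epsilon/(4H), p)$-correctness on $n$ independent copies of the basic hard instance via Lemma \ref{lem:reduce-to-basic-case}; (iii) lower bound the per-leaf trajectory count via Lemma \ref{lem:basic-hard-instance}; and (iv) sum using the fact that each trajectory reaches exactly one leaf.

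\emph{Parameter selection.} The full hard instance has $2n - 1 + d'$ states and reward dimension $2d'$. To match the theorem's $S$ and $d$, I set $2d' \le d$ and $2n - 1 + d' \le S$, and I maximize the product $n \cdot d'$. When $d \le S$, take $d' = \Theta(d)$ and $n = \Theta(S)$, yielding $n d' = \Theta(Sd)$; when $d > S$, take $d' = \Theta(S)$ and $n = \Theta(S)$, yielding $n d' = \Theta(S^2)$. In either regime $n d' = \Theta(\min\{d,S\} \cdot S)$. The Johnson--Lindenstrauss condition $d' \ge C \log(n+1)$ from Lemma \ref{lem:jl-embed} and the depth condition $H \ge 2(\log_2 n + 1)$ both hold in the intended parameter range (otherwise the claimed bound is trivial).

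\emph{Hard distribution and reduction.} Let $\Dcal$ be the distribution over $\Pscr$ defined by fixing the deterministic tree transitions and drawing the $n$ basic kernels $\Pbb_{\basic}^s$, one per leaf $s \in [n]$, independently from the distribution promised by Lemma \ref{lem:basic-hard-instance} on $\Pscr_{\basic}(s,\cdot)$ (with dimension $d'$ and scale parameter proportional to $\epsilon/H$). If $\algo$ is $(\epsilon, p)$-correct on the induced set of MOMDPs, then Lemma \ref{lem:reduce-to-basic-case} supplies $n$ induced algorithms $\algo^s$, each $(\epsilon/(4H), p)$-correct on the basic hard instance sitting at leaf $s$.

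\emph{Per-leaf lower bound and summation.} Let $K_s$ be the number of trajectories of $\algo$ that reach state $(s, \ell_0)$. Because the first $\ell_0$ transitions of the tree are deterministic and the leaf kernels $\{\Pbb_{\basic}^s\}_{s=1}^n$ are mutually independent under $\Dcal$, the information $\algo$ gathers about $\Pbb_{\basic}^s$ comes exactly from the $K_s$ trajectories crossing $(s,\ell_0)$; equivalently, $\algo^s$ may be viewed as a PFE algorithm for the basic hard instance whose trajectory count is $K_s$. Lemma \ref{lem:basic-hard-instance} applied with correctness parameter $\epsilon/(4H)$ gives
\[
\Ebb_{\Pbb \sim \Dcal} \Ebb_{\Pbb, \algo}[K_s] \;\ge\; \Omega\!\left(\frac{d' A H^2}{\epsilon^2}\right).
\]
Since each trajectory of $\algo$ reaches exactly one state in the $\ell_0$-th layer, $\sum_{s=1}^n K_s \le K$ always, so summing over $s$ yields
\[
\Ebb_{\Pbb \sim \Dcal} \Ebb_{\Pbb, \algo}[K] \;\ge\; n \cdot \Omega\!\left(\frac{d' A H^2}{\epsilon^2}\right) \;=\; \Omega\!\left(\frac{\min\{d, S\} \cdot H^2 S A}{\epsilon^2}\right),
\]
which is the bound claimed in Theorem \ref{thm:exploration-lower-bound-formal}.

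\emph{Main obstacle.} The delicate step is the per-leaf sample-level claim: Lemma \ref{lem:reduce-to-basic-case} is stated at the output-policy level, whereas I need that only trajectories crossing $(s,\ell_0)$ are informative about $\Pbb_{\basic}^s$, so that the per-leaf lower bound is charged against $K_s$ rather than against the full $K$. This requires a change-of-measure / KL-decomposition argument exploiting the product structure of $\Dcal$ across the $n$ leaves (mirroring the interior step in Lemma \ref{lem:basic-hard-instance}). Once this independence-plus-sample accounting is formalized, step (iv) is immediate and delivers the desired lower bound.
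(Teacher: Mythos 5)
Your proposal is correct and follows essentially the same route as the paper: the same tree-with-leaves hard instance, the same reduction to $n$ copies of the basic instance via Lemma~\ref{lem:reduce-to-basic-case}, the same per-leaf application of Lemma~\ref{lem:basic-hard-instance}, and the same summation $K=\sum_s K_s$ over leaves (your explicit parameter choice $n d'=\Theta(\min\{d,S\}\cdot S)$ is just a cleaner packaging of the paper's two-case argument over $d$ versus $d=n$). The ``main obstacle'' you flag---charging the per-leaf lower bound against $K_s$ rather than $K$ via the product structure of the prior across leaves---is indeed the delicate point, and the paper handles it in the same (largely implicit) way, so no divergence there either.
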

\begin{proof}

Let $K^s$ be the number of visits to state $(s,\ell_0)$ by $\algo$.
Then according to Lemma~\ref{lem:reduce-to-basic-case}, 
there are $n$ induced algorithms $\algo^s, s\in[n]$ that is $(\frac{\epsilon}{4H}, p)$-correct for the basic hard instance, and each of them collects $k^s$ number of trajectories.

However, by Lemma~\ref{lem:basic-hard-instance} there exists a distribution $\Dcal^s$ over $\Pscr_{\basic}(s, \frac{\epsilon}{4H})$ such that
\[
    \Ebb_{\Pbb\sim \Dcal^s}\expect[\Pbb, \algo^s]{K^s} \ge \bigOmg{\frac{dAH^2}{\epsilon^2}}.
\]

Notice that during each episode, $\algo$ can and only can visit one of $(s,\ell_0)$ in the hard instance, thus we have 
\(K = \sum_{s\in[n]} K^s.\)
Making a summation we obtain
\[
    \expect{K} \ge \bigOmg{\frac{d \cdot n AH^2}{\epsilon^2}} = \bigOmg{\frac{d S AH^2}{\epsilon^2}}.
\]

On the other hand, clearly in our construction, we can set $d = n$ where Lemma \ref{lem:jl-embed} holds trivially.
Then by the same procedure we obtain 
\[
    \expect{K} \ge \bigOmg{\frac{n\cdot n AH^2}{\epsilon^2}} = \bigOmg{\frac{S^2 AH^2}{\epsilon^2}}.
\]
In sum we have 
\[
    \expect{K} \ge \bigOmg{\frac{\min\set{d,S}\cdot S AH^2}{\epsilon^2}}.
\]

\end{proof}

\section{Proof of Theorem \ref{thm:regret-lower-bound-informal}}\label{append-sec:regret-lower-bound}
We now prove Theorem \ref{thm:regret-lower-bound-informal} based on Theorem \ref{thm:exploration-lower-bound-formal}.

\begin{thm}[Restatement of Theorem~\ref{thm:regret-lower-bound-informal}]\label{thm:regret-lower-bound-formal}
Fix $S, A, H$. Suppose $K > 0$ is sufficiently large. Then for any algorithm that runs for $K$ episodes, there exists a set of MDPs $\Pscr$ and a sequence of  (necessarily adversarially chosen) preferences $\set{w^1, \dots, w^{K} }$, such that 
    \[\Ebb_{\Pbb \sim \Pscr, \algo} [\regret(K)] \ge \frac{1}{c}\cdot \sqrt{\min\set{d, S} H^2 SA K}\]
    for some absolute constant $c > 1$, where the expectation is taken with respect to the randomness of drawing an MDP and an algorithm collecting a dataset from the chosen MDP during the exploration phase.
\end{thm}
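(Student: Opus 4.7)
The plan is to prove Theorem~\ref{thm:regret-lower-bound-formal} by reduction to the preference-free exploration (PFE) lower bound of Theorem~\ref{thm:exploration-lower-bound-formal}: an online MORL algorithm with small expected regret against an adversarial preference stream implicitly solves PFE on the hard instance underlying that theorem, so its regret must inherit the PFE lower bound.

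Concretely, I would reuse the hard MOMDP instance $(\Scal,\Acal,H,\Pscr,\rB,\Wcal)$ and distribution $\Dcal$ over $\Pscr$ from the proof of Theorem~\ref{thm:exploration-lower-bound-formal}, and let $w_1,\ldots,w_n$ with $n=\Theta(\min\{d,S\})$ denote the subtask preferences $(A e_j,v)$ that appear in Lemma~\ref{lem:reduce-to-basic-case}. The adversary draws each $w^k$ i.i.d.\ uniformly from $\{w_1,\ldots,w_n\}$. Given an online algorithm $\algo$ with expected regret $R_0:=\Ebb[\regret(K)]$ against this adversary, I would construct a PFE algorithm $\algo'$: in exploration, simulate $\algo$ on this stream and record the pairs $(\pi^k,w^k)_{k=1}^K$; in planning, on query $w_j$, return $\hat\pi_j$ drawn uniformly at random from $\{\pi^k:w^k=w_j\}$ (with an arbitrary fallback if this set is empty).

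The central technical step is converting $R_0$ into an $(\tilde\epsilon,0.1)$-PAC guarantee for $\algo'$. Because both $\Dcal$ and the uniform adversary are invariant under permutations of the $n$ subtasks, the regret decomposes evenly, $\Ebb\bigl[\sum_{k:w^k=w_j}(V_1^*(x_1;w_j)-V_1^{\pi^k}(x_1;w_j))\bigr]=R_0/n$ for every $j\in[n]$. A Chernoff bound then guarantees $|\{k:w^k=w_j\}|=\Theta(K/n)$ simultaneously for all $j$ once $K\gtrsim n\log n$; combined with the preceding identity this yields $\Ebb[V_1^*(x_1;w_j)-V_1^{\hat\pi_j}(x_1;w_j)]\lesssim R_0/K$, and Markov's inequality upgrades it to an $(\tilde\epsilon,0.1)$-PAC guarantee per query with $\tilde\epsilon=O(R_0/K)$. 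Feeding this $\tilde\epsilon$ into Theorem~\ref{thm:exploration-lower-bound-formal} (or, equivalently, re-running the Lemma~\ref{lem:visit-critic-state}--Lemma~\ref{lem:reduce-to-basic-case}--Lemma~\ref{lem:basic-hard-instance} chain on $\algo'$) gives $K\ge\Omega(\min\{d,S\}\,SAH^2/\tilde\epsilon^2)=\Omega(\min\{d,S\}\,SAH^2K^2/R_0^2)$; rearranging yields $R_0\ge\Omega(\sqrt{\min\{d,S\}\,SAH^2K})$. Since this lower-bounds the expected regret against a specific (uniform-random) adversary, it also lower-bounds the worst-case adversarial regret, which proves the theorem.

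The main obstacle will be the expectation-to-PAC conversion described above: we only have a global expected regret bound, whereas Theorem~\ref{thm:exploration-lower-bound-formal} is stated for $(\epsilon,p)$-PAC PFE algorithms. The permutation symmetry of the hard instance is what makes this conversion clean, since it distributes regret evenly across the $n$ subtasks and so bypasses a lossy union bound over queries. A secondary subtlety is that $\hat\pi_j$ depends on the entire execution of $\algo$ (including episodes with $w^k\neq w_j$), so one must verify that $\algo'$ is a legitimate PFE algorithm in the sense required by Theorem~\ref{thm:exploration-lower-bound-formal}; it is, because $\algo'$ commits to the full sequence $\{\pi^k\}$ before the planning phase begins and thereafter uses no further environment interaction.
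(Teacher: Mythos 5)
Your high-level strategy --- reducing the online regret lower bound to the preference-free exploration lower bound of Theorem~\ref{thm:exploration-lower-bound-formal} --- matches the paper's, but the specific adversary and reduction you propose have a genuine gap that loses the $\min\{d,S\}$ factor. Your adversary draws $w^k$ i.i.d.\ from a \emph{finite} set of $n$ preferences $(Ae_j,v)$ with one fixed leaf-reward vector $v$ per subtask, and your induced PFE planner only returns policies for those $n$ preferences (with an arbitrary fallback elsewhere). But the per-subtask lower bound you must invoke (Lemma~\ref{lem:basic-hard-instance}, via Lemma~\ref{lem:reduce-to-basic-case}) applies only to planners that are correct for \emph{all} near-uniform $v$ in the basic preference set simultaneously; that universal quantifier is precisely where the factor $d$ comes from, since distinguishing the transition kernels in $\Pscr_{\basic}(s,\epsilon)$ is only forced when the algorithm must be accurate against $d$ essentially independent leaf-reward directions. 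For a single fixed $v$, each subtask degenerates to estimating the scalar $\langle \Pbb(\cdot\mid s,a),v\rangle$ --- an $A$-armed bandit --- so your adversary cannot certify more than an $\Omega(\sqrt{SAK})$-type bound. A secondary (fixable) issue is the claim $\Ebb\bigl[\sum_{k:w^k=w_j}\bigl(V^*_1(x_1;w_j)-V^{\pi^k}_1(x_1;w_j)\bigr)\bigr]=R_0/n$: permutation invariance of the instance and the adversary does not constrain a fixed, possibly asymmetric algorithm; you would instead argue that at least half the subtasks receive at most twice the average and run the per-subtask argument on those.

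The paper closes the first gap by making the adversary \emph{adaptive}: at episode $k$ it selects $w^k=\arg\max_{w}\Ebb\bigl[V^*_1(x_1;w)-V^{\pi}_1(x_1;w)\mid \Pbb,\Hcal^{k-1}\bigr]$ over the entire preference set, where $\pi$ is whatever policy the online algorithm would play for $w$ given its history. This makes the per-episode regret equal to the full sup-over-$\Wcal$ planning error of the induced planner, so the $(\epsilon,p)$-PAC hypothesis of Theorem~\ref{thm:exploration-lower-bound-formal} is met without restricting to finitely many preferences. It then replaces your Markov/Chernoff conversion with a stopping-time argument: the induced planner is augmented with \PFUCB{} after $K$ episodes so that it provably becomes $\epsilon$-accurate by episode $2K$, the PFE lower bound forces $\Ebb[\mathring{K}]\gtrsim K$, hence the expected sup-error stays above $\epsilon/(4c)$ for the first $\Omega(K)$ episodes, and summing these per-episode errors gives the regret bound. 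If you want to retain an oblivious adversary, you would at minimum need to randomize the $v$-component over a $d$-dimensional family and replace your ``uniformly random matching policy'' planner with the algorithm's own history-dependent planning rule --- at which point you have essentially reconstructed the paper's argument.
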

\begin{proof}
Let us fix $d, S, A, H$ and a MDP structure as described in the proof of Theorem~\ref{thm:exploration-lower-bound-formal}. Let $K$ to be a large number, and $\epsilon := \sqrt{\min\set{d,S}H^2 SA / K_0}$ to be a small number.

We only consider the randomness of (i) choosing an MDP with transition $\Pbb$ and (ii) an algorithm collecting a dataset $\Hcal^K$ from the chosen MDP during the exploration phase. In order words, we will take expectation over all the randomness during the planning phase, in particular, the considered regret is understood as 
\begin{equation*}
    \regret (K) := \sum_{k=1}^K \Ebb\sbracket{ V^*_1(x_1;w^k) - V^{\pi^k}_1(x_1;w^k) \mid \Pbb, \Hcal^K}.
\end{equation*}
With this in mind, let us consider the following probability measures:
\begin{itemize}
    \item Let $\PB_{0}(\cdot)$  be the probability measure induced by the randomness of drawing a transition kernel $\Pbb$ from the set of transitions $\Pscr$.
    \item Let $\PB(\cdot \mid \Pbb)$ to denote the conditional probability measure induced by the randomness of running an algorithm $\algo$ on a MDP with a fixed transition kernel $\Pbb$, i.e., the probability measure of collecting a dataset.
    \item Let
    $\PB(\cdot)$ be the joint probability measure induced by the randomness of drawing a transition kernel $\Pbb$, and the randomness of collecting a dataset.
\end{itemize}

\emph{Converting online-MORL algorithms into PFE algorithms.}
Let $\algo$ be an MORL algorithm determined by any fixed rules (but could contain random bits), and let $\adv$ be an (arbitrary) adversary that provides preferences for $\algo$ in the online MORL game. Then we inductively define a preference-free exploration algorithm $\advalgo(K, \Pbb)$, which runs for at most $K$ episodes on an MDP with transition kernel $\Pbb$:
\begin{itemize}
    \item At episode $1$, $\adv$ chooses a preference $w^1$ based on the fixed rule of $\algo$; and $\algo$ interacts with $\Pbb$ under the guidance of $w^1$ to collect dataset $\Hcal^1$;
    \item At episode $k \le K$, $\adv$ chooses a preference $w^k$ based on the fixed rule of $\algo$ and the history $\Hcal^{k-1}$; and $\algo$ interacts with $\Pbb$ under the guidance of $w^k$ and collect dataset $\Hcal^k = \Hcal^{k-1} \cup \set{\text{new empirical observations}}$.
    \item At episode $k \le K$, if the agent receives a sequence of planning preferences $w_1, \dots, w_n$, the agent outputs a sequence of policies $\pi_1, \dots, \pi_n$ respectively according to the planning rule that $\algo$ adopts at the current episode (which is based on history $\Hcal^{k-1}$).
    For simplicity, we denote $\advalgo(\Pbb, k)[\cdot]: \set{w} \to \set{\pi}$ be such a planning process, i.e., $\advalgo(\Pbb, k)[w_i] = \pi_i$.
\end{itemize}

We now extend $\advalgo$ into a ``correct'' PFE algorithm that stops at finite time:
\begin{equation*}
        \widetilde{\advalgo} (\Pbb, k)[\cdot] := 
        \begin{cases}
            \advalgo(\Pbb, k)[\cdot], & 1 \le k \le K, \\
            \PFUCB(\Pbb, k-K)[\cdot], & K < k \le 2 K, \\
            \text{output random policy for any preference}, & k > 2K.
        \end{cases}
    \end{equation*}
To justify the correctness of this augmented algorithm, consider two error random variables: 
    \begin{align*}
        \error(\Pbb, k, \widetilde{\advalgo})[w] &:= \Ebb[ V^*_1 (w; \Pbb) - V_1^{\pi} (w; \Pbb) \mid \Pbb, \Hcal^{k-1}], \quad \text{ for } \pi = \widetilde{\advalgo}(\Pbb, k)[w] ,\\
        \error(\Pbb, k, \widetilde{\advalgo}) &:= \max_{w} \error(\Pbb, k, \widetilde{\advalgo})[w],
    \end{align*}
    and a stopping time
    \begin{equation*}
    \mathring{K} := (2K) \land \min\set{k: \error(\Pbb, k, \widetilde{\advalgo}) \le \epsilon }.
\end{equation*}
Clearly $\mathring{K} \le 2K$ is bounded.
    
\emph{The augmented PFE algorithm $\widetilde{\advalgo}(\Pbb, \mathring{K})[\cdot]$ is correct.}
By construction, we have that for any MDP with fixed transition kernel, if $\widetilde{\advalgo}$ is run for $2K$ episodes, its output becomes the same output of $\PFUCB$ that runs for $K$ episodes, which is $(\epsilon, \delta \le 0.9)$-correct for any preference according to Theorem \ref{thm:exploration-formal}. Mathematically, for any $\Pbb$,
\[
 \PB \set{ \error(\Pbb, 2K, \widetilde{\advalgo}) > \epsilon \mid \Pbb } < \delta \le 0.9,
\]
then over the randomness of choosing a transition kernel $\Pbb \sim \PB_0$, we have that
\begin{align*}
    \PB \set{ \error(\Pbb, 2K, \widetilde{\advalgo}) > \epsilon } &= \Ebb_{\Pbb \sim \PB_0} \sbracket{ \PB \set{   \error(\Pbb, 2K, \widetilde{\advalgo}) > \epsilon \mid \Pbb }  } < 0.9.
\end{align*}
By discussing whether $\mathring{K} < 2K$ (where the error is smaller than $\epsilon$ with probability $1$) or $\mathring{K} = 2K$ (where we apply the above inequality), we have that 
\begin{equation*}
    \PB \set{ \error(\Pbb, \mathring{K}, \widetilde{\advalgo}) > \epsilon } < 0.9.
\end{equation*}
This implies that $\widetilde{\advalgo}$ that runs for $\mathring{K}$ episodes is $(\epsilon, 0.9)$-correct for the set of MDP described in the proof of Theorem \ref{thm:exploration-lower-bound-formal}.
Then by Theorem \ref{thm:exploration-lower-bound-formal}, we have that 
\begin{equation*}
    \Ebb [\mathring{K} ] \ge \frac{1}{c} \cdot \frac{ \min\set{d,S} H^2SA }{\epsilon^2} = \frac{1}{c} \cdot K,
\end{equation*}
for some absolute constant $c >1$.
On the other hand, 
\begin{align*}
\frac{K}{c_1} \le 
    \Ebb [\mathring{K}] 
    = \Ebb \sbracket{ \ind{\mathring{K} > \frac{K}{2c}}\cdot \mathring{K} }+  \Ebb \sbracket{ \ind{\mathring{K} \le \frac{K}{2c}}\cdot \mathring{K} } 
    \le 2K \cdot \PB\set{ \mathring{K} > \frac{K}{2c} } + \frac{K}{2c},
\end{align*}
which implies that 
\[
 \PB\set{ \mathring{K} > \frac{K}{2c} } \ge \frac{1}{4 c},
\]
then by the definition of $\mathring{K}$, we have that 
\[
 \PB\set{ \text{for each $1 \le k \le \frac{K}{2c}$},\  \error(\Pbb, k, \widetilde{\advalgo}) > \epsilon } \ge \PB\set{ \mathring{K} > \frac{K}{2c} } \ge \frac{1}{4 c},
\]
which further yields that for each $1 \le k \le \frac{K}{2c_1}$,
\begin{equation}\label{eq:regret-lb-per-step-error}
    \Ebb \sbracket{ \error(\Pbb, k,\widetilde{\advalgo}) } \ge \frac{1}{4c} \cdot \epsilon,
\end{equation}
and 
\begin{equation}\label{eq:regret-lb-error-part}
\Ebb \sbracket{ \sum_{k=1}^{K/{(2c)}} \error(\Pbb,k, \widetilde{\advalgo})} \ge \frac{1}{4c} \cdot \epsilon \cdot \frac{K}{2c} = \frac{1}{8c^2}\cdot \sqrt{ \min\set{d,S} H^2SA K }.
\end{equation}

\emph{Regret lower bound.}
Recall the definition of $\error(\Pbb, k,\widetilde{\advalgo})$ and note that for $k\le K/(2c) \le K$, $\widetilde{\advalgo}[\Pbb, k]$ is the same as $\advalgo(\Pbb, k)$, which plans in the same way as $\algo$ in the $k$-th episode.

We note that \eqref{eq:regret-lb-error-part} holds for any adversary that chooses its preferences inductively based on the online MORL algorithm.
Now we specify the adversary $\adv$ as follows: for $k \le K/(2c)$, $\adv$ specifies a preference $w^{\adv}$ based on the rule of $\algo$, the dataset $\Hcal^{k-1}$, and the transition kernel $\Pbb$, such that
\[ 
(w^{\adv})^k:=\arg\max_w  \Ebb[ V^*_1 (w; \Pbb) - V_1^{\pi} (w; \Pbb) \mid \Pbb, \Hcal^{k-1}], \quad \text{ for } \pi = \widetilde{\advalgo}(\Pbb, k)[w], 
\] 
then
\[
\error(\Pbb, k, \widetilde{\advalgo})[(w^\adv)^k] = \error(\Pbb, k,\widetilde{\advalgo}).
\]
As a consequence, we have that 
\begin{align*}
    \Ebb[\regret(K)] 
    & \ge \Ebb[\regret({K}/{(2c)})] = \Ebb \sbracket{ \sum_{k=1}^{K/(2c)} \error(\Pbb, k, \widetilde{\advalgo})[(w^\adv)^k] } \\
    &= \Ebb \sbracket{ \sum_{k=1}^{K/(2c)} \error(\Pbb, k,\widetilde{\advalgo}) } 
    \ge \frac{1}{8c^2}\cdot \sqrt{ \min\set{d,S} H^2SA K },
\end{align*}
and a rescaling of the constant completes our proof.

\end{proof}

\section[Discussion on (Zhang el al., 2020a)]{Discussion on \citep{zhang2020task}}\label{append-sec:discuss-zhang}
There is a technical error in the proof of Lemma 2 in \citep{zhang2020task}.
In particular, the sequence considered in Page 11, between equations (16) and (17),
\[
\set{ \ind{k_i\le K} \cdot \sbracket{ \sbracket{\widehat{\Pbb}^{k_1} - \Pbb_h} \sbracket{\overline{V}^{\bar{\pi}_k}_{h+1} - V^{\pi_k}_{h+1} }(s,a) + \bracket{r^{k_1}_h - \Ebb[r_h](s^{k_i}_h, a^{k_i}_h)} } }_{i=1}^{\tau},
\]
is not a martingale difference sequence, since $\pi_k$ depends on the randomness upto episode $k$, however $k_1, \dots, k_{\tau}$ are all no larger than $k$.
Thus Azuma-Hoeffding's inequality cannot be applied for this sequence.

To fix this error, one may consider applying a covering argument and union bound over the value functions, but the obtained bound in their Theorem 1 should be revised to $\widetilde\Ocal\bigl({\log(N) H^5 S^2 A}/{\epsilon^2}\bigr)$.

\end{document}